\newcommand{\kartik}[1]{\textcolor{red}{KG: #1}}
\newcommand{\AJ}[1]{\textcolor{magenta}{AJ: #1}}
\def\figref#1{figure~\ref{#1}}
\def\secref#1{section~\ref{#1}}
\def\eqref#1{equation~\ref{#1}}
\def\plaineqref#1{\ref{#1}}
\def\Twoalgref#1#2{Algorithms \ref{#1} and \ref{#2}}
\def\1{\bm{1}}
\def\vs{{\bm{s}}}
\DeclareMathAlphabet{\mathsfit}{\encodingdefault}{\sfdefault}{m}{sl}
\SetMathAlphabet{\mathsfit}{bold}{\encodingdefault}{\sfdefault}{bx}{n}
\newcommand{\R}{\mathbb{R}}
\newcommand{\softmax}{\mathrm{softmax}}
\newcommand{\sigmoid}{\sigma}
\DeclareMathOperator*{\argmax}{argmax}
\DeclareMathOperator*{\argmin}{argmin}
\DeclareMathOperator{\sign}{sign}
\newcommand{\ignore}[1]{}
\newcommand{\norm}[1]{\left\lVert#1\right\rVert}
\theoremstyle{definition}
\newtheorem{dfn}{Definition}
\newtheorem{pro}{Proposition}
\newtheorem{exm}{Example}
\theoremstyle{plain}
\newtheorem{thm}{Theorem}
\newcommand{\SKIP}[1]{}
\newcommand{\calV}{\mathcal{V}^m}
\newcommand{\calO}{\mathcal{O}}
\newcommand{\bfx}{\mathbf{x}}
\newcommand{\tbfx}{\tilde{\mathbf{x}}}
\newcommand{\bfy}{\mathbf{y}}
\newcommand{\bfz}{\mathbf{z}}
\newcommand{\calX}{\mathcal{X}}
\newcommand{\bfone}{\boldsymbol{1}}
\renewcommand{\R}{\rm I\!R}
\newcommand{\calC}{\mathcal{C}}
\newcommand{\bfp}{\mathbf{p}}
\newcommand{\calD}{\mathcal{D}}
\newcommand{\calB}{\mathcal{B}}
\newcommand{\bfw}{\mathbf{w}}
\newcommand{\bfu}{\mathbf{u}}
\newcommand{\conv}{\operatorname{conv}}
\newcommand{\calQ}{\mathcal{Q}}
\newcommand{\bfq}{\mathbf{q}}
\newcommand{\tbfw}{\tilde{\mathbf{w}}}
\newcommand{\tbfu}{\tilde{\mathbf{u}}}
\newcommand{\tbfv}{\tilde{\mathbf{v}}}
\newcommand{\calS}{\Delta^m}
\renewcommand{\sign}{\operatorname{sign}}
\renewcommand{\softmax}{\operatorname{softmax}}
\renewcommand{\sigmoid}{\operatorname{sigmoid}}
\newcommand{\prox}{\operatorname{prox}}
\newcommand{\tu}{\tilde{u}}
\newcommand{\tv}{\tilde{v}}
\newcommand{\tw}{\tilde{w}}
\newcommand{\tx}{\tilde{x}}
\newcommand{\bfg}{\mathbf{g}}
\newcommand{\allweights}{\{1,\ldots, m\}}
\newcommand{\iproj}{P^{-1}}
\newcommand{\bcalC}{\bar{\calC}}
\newcommand{\primal}{\calX\cap\calC}
\newcommand{\mdtanhs}{\acrshort{MD}-$\tanh$-\acrshort{S}}
\def\amin#1{{\underset{#1}{\operatorname{argmin}}}}%
\newcommand{\NOTE}[1]{[\textbf{\textcolor{blue}{Note:}}\emph{\textcolor{blue}{#1}}]}
\def\myref#1{{\color{red}{#1}}}%
\DeclareRobustCommand\onedot{\futurelet\@let@token\@onedot}
\def\@onedot{\ifx\@let@token.\else.\null\fi\xspace}
\def\eg{\emph{e.g}\onedot} 
\def\ie{\emph{i.e}\onedot} 
 \def\vs{\emph{vs}\onedot}
\def\wrt{w.r.t\onedot} 
\newenvironment{tight_enumerate}{
\begin{enumerate}[leftmargin=10pt]
  \setlength{\topsep}{0pt}
  \setlength{\itemsep}{0pt}
  \setlength{\parskip}{0pt}
  \setlength{\parsep}{0pt}
}{\end{enumerate}}
\def\figref#1{Fig.~\ref{#1}}
\def\secref#1{Sec.~\ref{#1}}
\def\eqref#1{Eq.~(\ref{#1})}
\def\twoeqref#1#2{Eqs.~(\ref{#1})~and~(\ref{#2})}
\def\plaineqref#1{(\ref{#1})}
\def\twoplaineqref#1#2{(\ref{#1})~and~(\ref{#2})}
\def\threetabref#1#2#3{Tables~\ref{#1},~\ref{#2}~and~\ref{#3}}
\def\tabref#1{Table~\ref{#1}}
\def\thmref#1{Theorem~\ref{#1}}
\def\dfnref#1{Definition~\ref{#1}}
\def\tabref#1{Table~\ref{#1}}
\algnewcommand\INPUT{\item[\textbf{Input:}]}%
\algnewcommand\OUTPUT{\item[\textbf{Output:}]}%
\newacronym{MRF}{mrf}{Markov Random Field}
\newacronym{SGD}{sgd}{Stochastic Gradient Descent}
\newacronym{GD}{gd}{Gradient Descent}
\newacronym{PGD}{pgd}{Projected Gradient Descent}
\newacronym{ICM}{icm}{Iterative Conditional Modes}
\newacronym{DNN}{dnn}{Deep Neural Networks}
\newacronym{NN}{nn}{Neural Network}
\newacronym{PMF}{pmf}{Proximal Mean-Field}
\newacronym{PICM}{picm}{Proximal Iterative Conditional Modes}
\newacronym{BC}{bc}{BinaryConnect}
\newacronym{BWN}{bwn}{Binary Weight Network}
\newacronym{IP}{ip}{Integer Programming}
\newacronym{fc}{fc}{fully-connected}
\newacronym{REF}{ref}{Reference Network}
\newacronym{KL}{kl}{KL}
\newacronym{S}{s}{S}
\newacronym{LR}{lr}{LR}
\newacronym{KKT}{kkt}{KKT}
\newacronym{MAP}{map}{Maximum a Posteriori}
\newacronym{MDA}{mda}{Mirror Descent Algorithm}
\newacronym{MD}{md}{Mirror Descent}
\newacronym{BOP}{bop}{Binary Optimizer}
\newacronym{EDA}{eda}{Entropic Descent Algorithm}
\newacronym{ED}{ed}{Entropic Descent}
\newacronym{EGD}{egd}{Exponentiated Gradient Descent}
\newacronym{PQ}{pq}{ProxQuant}
\newacronym{STE}{ste}{Straight Through Estimator}
\newacronym{HGD}{hgd}{Hybrid Gradient Descent}
\newacronym{PQL}{pql}{PQL}
\newacronym{ELQ}{elq}{Explicit Loss-error-aware Quantization}
\newacronym{ADMM}{admm}{Alternating Direction Method of Multipliers}
\newacronym{QN}{qn}{Quantization Networks}
\newacronym{BR}{br}{BinaryRelax}
\newacronym{FC}{fc}{FC}
\newacronym{BN}{bn}{BN}
\newcommand{\svgg}[1]{{\small VGG#1}}
\newcommand{\sresnet}[1]{{\small ResNet#1}}
\newcommand{\srelu}{{\small ReLU}}
\newcommand{\smobilenet}[1]{{\small MobileNet#1}}
\newcommand{\vvgg}[1]{VGG#1}
\newcommand{\vresnet}[1]{{ResNet#1}}
\newcommand{\cifar}[1]{{\small CIFAR#1}}
\newcommand{\vcifar}[1]{{CIFAR#1}}
\newcommand{\tinyimagenet}{{TinyImageNet}}
\newcommand{\imagenet}{{ImageNet}}
\newcommand{\pql}[1]{\acrshort{PQL}$_#1$}
\newcommand{\xmark}{\ding{55}} 
\newif\ifsupp
\newif\ifarxiv
\def\arxivcopy{\global\arxivtrue}
\newif\iffinal
\def\finalcopy{\global\finaltrue}
\newcommand{\citenew}[1]{(\cite{#1})}
\newcommand{\papertitle}{Mirror Descent View for Neural Network Quantization\\ -- Supplementary Material --}
\newcommand{\papertitle}{Mirror Descent View for Neural Network Quantization}
\runningtitle{Mirror Descent View for Neural Network Quantization}
\runningauthor{Thalaiyasingam Ajanthan*, Kartik Gupta*, Philip H. S. Torr, Richard Hartley, Puneet K. Dokania}
\begin{document}
\ifsupp
\appendix
\onecolumn
\appendix
\begin{center}
\textsc{\large Supplementary document}

\textbf{\Large Mirror Descent View for Neural Network Quantization}
\end{center}

Here, we first provide the proofs 
and the technical derivations. Later we give additional experiments and the details of our experimental setting.

\section{\acrshort{MD} Proofs and Derivations}
\SKIP{
\subsection{Deriving Mirror Maps from Projections}
\SKIP{
\begin{thm}\label{thm:mdaproj1}
Let $\calX$ be a compact convex set and $P:\R^r \to \calC$ be an invertible function where $\calC\subset\R^r$ is a convex open set such that $\calX=\bcalC$ ($\bcalC$ denotes the closure of $\calC$).
Now, if $P^{-1}$ satisfy:
\begin{tight_enumerate}
  \item $P^{-1}$ is monotonically increasing.
  \item $\lim_{\bfx\to\partial \calC}\|P^{-1}(\bfx)\| = \infty$ ($\partial \calC$ denotes the boundary of $\calC$).
\end{tight_enumerate}
Then, $\Phi(\bfx) = \langle\bfone,\int \iproj(\bfx) d\bfx\rangle$ is a valid mirror map, where $\bfone$ is an $r$-dimensional vector of ones.
\end{thm}
}
\begin{thm}\label{thm:mdaproj1}
Let $\calX$ be a compact convex set and $P:\R \to \calC$ be an invertible function where $\calC\subset\R$ is a convex open set such that $\calX=\bcalC$ ($\bcalC$ denotes the closure of $\calC$).
Now, if 
\begin{tight_enumerate}
  \item $P$ is strictly monotonically increasing.
  \item $\lim_{x\to\partial \calC}\|\iproj(x)\| = \infty$ ($\partial \calC$ denotes the boundary of $\calC$).
\end{tight_enumerate}
Then, $\Phi(x) = \int_{x_0}^x \iproj(y) dy$ is a valid mirror map.
\end{thm}
\begin{proof}
\kartik{Do we need a proof for this?}
From the fundamental theorem of calculus, the gradient of $\Phi(x)$ satisfies, $\nabla\Phi(x) = \iproj(x)$. Since $P$ is strictly monotonically increasing and invertible, $\iproj$ is strictly monotonically increasing. Therefore, $\Phi(x)$ is strictly convex and differentiable. Now, from the definition of projection and since it is invertible (\ie, $\iproj$ is {\em one-to-one} and {\em onto}), $\nabla\Phi(\calC) = \iproj(\calC) = \R$. Therefore, together with condition (2), we can conclude that $\Phi(x) = \int_{x_0}^x \iproj(y) dy$ is a valid mirror map (refer Definition~\myref{1} in the main paper).
For the multi-dimensional case, we need an additional condition that the vector field $\iproj(\bfx)$ is conservative. 
Then by the gradient theorem~\citenew{gradientwiki1}, there exists a mirror map $\Phi(\bfx) = \int_{\bfx_0}^{\bfx} \iproj(\bfy)d\bfy$ for some arbitrary base point $\bfx_0$. 

\end{proof}
}
\subsection{More details on Projections}
Here, we provide more details on projections used in the paper for \gls{NN} quantization. Even though we consider differentiable projections, Theorem \myref{1} in the main paper does not require the projection to be differentiable. 
For the rest of the section, 
we assume $m=1$, \ie, consider projections that are independent for each ${j\in\allweights}$. 

\begin{exm}[$\bfw$-space, binary, $\tanh$]\label{exm:tanh}
Consider the $\tanh$ function, which projects a real value to the interval $[-1,1]$:
\vspace{-1ex}
\begin{equation}\label{eq:tanh}
w = P_{\beta_k}(\tw):= \tanh(\beta_k\tw) = \frac{\exp(2\beta_k \tw) - 1}{\exp(2\beta_k \tw) + 1}\ ,
\vspace{-0.5ex}
\end{equation}
where $\beta_k \geq 1$ is the annealing hyperparameter and when $\beta_k \to \infty$, $\tanh$ approaches the step function.
The inverse of the $\tanh$ is:
\vspace{-1ex}
\begin{equation}
\iproj_{\beta_k}(w) = \frac{1}{\beta_k}\tanh^{-1}(w) = \frac{1}{2\beta_k}\log\frac{1+w}{1-w}\ .
\vspace{-1ex}
\end{equation}
Note that, $\iproj_{\beta_k}$ is monotonically increasing for a fixed $\beta_k$. 
Correspondingly, the mirror map from Theorem~\myref{1} in the main paper can be written as:
\begin{align}\label{eq:tanhmm}
\Phi_{\beta_k}(w) &= \int \iproj_{\beta_k}(w) dw \nonumber \\
&= \frac{1}{2\beta_k}\big[(1+w)\log(1+w) + (1-w)\log(1-w)\big]\ . 
\end{align}
Here, the constant from the integration is ignored. 
It can be easily verified that $\Phi_{\beta_k}(w)$ is in fact a valid mirror map.
Correspondingly, the Bregman divergence can be written as:
 \begin{align}
D_{\Phi_{\beta_k}}(w, v) &= \Phi_{\beta_k}(w) - \Phi_{\beta_k}(v) - \Phi'_{\beta_k}(v) (w-v)\ ,\quad\mbox{where $\Phi_{\beta_k}'(v) = \frac{1}{2\beta_k}\log\frac{1+v}{1-v}$}\ ,\\\nonumber 
 &= \frac{1}{2\beta_k}\left[w\log\frac{(1+w)(1-v)}{(1-w)(1+v)} +  \log(1-w)(1+w) - \log(1-v)(1+v) \right]\ . 
\end{align}
Now, consider the proximal form of \gls{MD} update
\begin{align}\label{eq:tanhmd1}
w^{k+1} & = \amin{\bfx\in(-1,1)}\, \langle \eta\,g^k, w\rangle + D_{\Phi_{\beta_k}}(w, w^k)\ .
\end{align}
The idea is to find $w$ such that the \acrshort{KKT} conditions are satisfied. 
To this end, let us first write the Lagrangian of~\eqref{eq:tanhmd1} by introducing dual variables $y$ and $z$ corresponding to the constraints $w> -1$ and $w< 1$, respectively:
\begin{align}\label{eq:lagtanh1}
F(w, x, y) &= \eta g^kw + y(-w-1) + z(w-1)\\\nonumber &+ \frac{1}{2\beta_k}\left[w\log\frac{(1+w)(1-w^k)}{(1-w)(1+w^k)} +  \log(1-w)(1+w)- \log(1-w^k)(1+w^k) \right]\ .
\end{align}
Now, setting the derivatives with respect to $w$ to zero:
\begin{align}\label{eq:dlagtanh1}
\frac{\partial F}{\partial w} &= \eta g^k + \frac{1}{2\beta_k}\log\frac{(1+w)(1-w^k)}{(1-w)(1+w^k)} - y + z = 0\ .
\end{align}
From complementary slackness conditions,
\begin{align}
y(-w-1) &= 0\ ,\qquad\text{since}\quad w > -1\ \Rightarrow\ y = 0\ ,\\\nonumber
z(w-1) &= 0\ ,\qquad\text{since}\quad w < 1\ \Rightarrow\ z = 0\ .
\end{align} 
Therefore,~\eqref{eq:dlagtanh1} now simplifies to:
\begin{align}
\frac{\partial F}{\partial w} &= \eta g^k + \frac{1}{2\beta_k}\log\frac{(1+w)(1-w^k)}{(1-w)(1+w^k)} = 0\ ,\\\nonumber
\log\frac{(1+w)(1-w^k)}{(1-w)(1+w^k)} &= -2\beta_k\eta g^k\ ,\\\nonumber
 \frac{1+w}{1-w} &= \frac{1+w^k}{1-w^k}\exp(-2\beta_k\eta g^k)\ ,\\\nonumber
 w &= \frac{\frac{1+w^k}{1-w^k}\exp(-2\beta_k\eta g^k) - 1}{\frac{1+w^k}{1-w^k}\exp(-2\beta_k\eta g^k) + 1}\ .
\end{align}
\SKIP{
Consequently, the resulting \gls{MD} update takes the following form:
\begin{align}\label{eq:tanhmd}
w^{k+1} &= \amin{w\in(-1,1)}\, \langle \eta\,g^k, w\rangle + D_{\Phi_{\beta_k}}(w, w^k) \nonumber \\ 
 &= \frac{\frac{1+w^k}{1-w^k}\exp(-2\beta_k\eta g^k) - 1}{\frac{1+w^k}{1-w^k}\exp(-2\beta_k\eta g^k) + 1}\ .
 \vspace{-0.5ex} 
\end{align}
The update formula is derived using the \acrshort{KKT} conditions~\citenew{boyd2009convex} as will be shown later.\AJ{Why don't we merge A.2 with this subsection. No need to divide this part into two.}}
A similar derivation can also be performed for the $\sigmoid$ function, where $\bcalC=\calX=[0,1]$.
Note that the $\sign$ function has been used for binary quantization in~\cite{courbariaux2015binaryconnect} and $\tanh$ can be used as a soft version of $\sign$ function as pointed out by~\cite{zhang2015bit}.
Mirror map corresponding to $\tanh$ is used for online linear optimization in~\cite{bubeck2012towards} but here we use it for \gls{NN} quantization. The pseudocodes of original (\acrshort{MD}-$\tanh$) and numerically stable versions (\acrshort{MD}-$\tanh$-\acrshort{S}) for $\tanh$ are presented in~\Twoalgref{alg:mdtanh}{alg:mdtanhste} respectively.

  \begin{algorithm}[t]
\caption{\acrshort{MD}-$\tanh$}
\label{alg:mdtanh}
\begin{algorithmic}[1]

\Require $K, b, \{\eta^k\}, \rho >1, \calD, L$ 
\Ensure $\bfw^*\in\calQ^m$

\State $\bfw^0\in \R^{m},\quad \beta_0\gets 1$ 
\Comment{Initialization}

\State $\bfw^0 \gets \tanh({\beta_0\bfw^0})$
\Comment{Projection}

\For{$k \gets 0,\ldots, K$} 

\State $\calD^{b} = \{\left( \bfx_i, \bfy_i \right)\}^{b}_{i=1} \sim
\mathcal{D}$
\Comment{Sample a mini-batch}

\State $\bfg^k \gets
\left.\nabla_{\bfw}L(\bfw;\calD^b)\right|_{\bfw=\bfw^k}$
\Comment{Gradient \wrt $\bfw$ at $\bfw^k$ (Adam based gradients)}

\For{$j \gets {1,\ldots,m}$}
\State $w_j^{k+1}\gets \frac{\frac{1+w_j^k}{1-w_j^k}\exp(-2\beta_k\eta^k g_j^k) - 1}{\frac{1+w_j^k}{1-w_j^k}\exp(-2\beta_k\eta^k g_j^k) + 1}$
\Comment{\acrshort{MD} update}
\EndFor

\State $\beta_{k+1}\gets \rho\beta_k$
\Comment{Increase $\beta$}

\EndFor

\State $\bfw^* \gets \sign({\tbfw^K})$
\Comment{Quantization}

\end{algorithmic}
\end{algorithm}

  \begin{algorithm}[t]
\caption{\acrshort{MD}-$\tanh$-\acrshort{S}}
\label{alg:mdtanhste}
\begin{algorithmic}[1]

\Require $K, b, \{\eta^k\}, \rho >1, \calD, L$ 
\Ensure $\bfw^*\in\calQ^m$

\State $\tbfw^0\in \R^{m},\quad \beta_0\gets 1$ 
\Comment{Initialization}

\For{$k \gets 0,\ldots, K$} 

\State $\bfw^k \gets \tanh({\beta_k\tbfw^k})$
\Comment{Projection}

\State $\calD^{b} = \{\left( \bfx_i, \bfy_i \right)\}^{b}_{i=1} \sim
\mathcal{D}$
\Comment{Sample a mini-batch}

\State $\bfg^k \gets
\left.\nabla_{\bfw}L(\bfw;\calD^b)\right|_{\bfw=\bfw^k}$
\Comment{Gradient \wrt $\bfw$ at $\bfw^k$ (Adam based gradients)}

\State $\tbfw^{k+1} \gets \tbfw^{k} - \eta^k \bfg^k$
\Comment{Gradient descent on $\tbfw$}

\State $\beta_{k+1}\gets \rho\beta_k$
\Comment{Increase $\beta$}

\EndFor

\State $\bfw^* \gets \sign({\tbfw^K})$
\Comment{Quantization}

\end{algorithmic}
\end{algorithm}

\SKIP{
the Bregman divergence can be written as:
 \begin{equation}
D_{\Phi}(w, v) = \frac{1}{2}\left[w\log\frac{(1+w)(1-v)}{(1-w)(1+v)} +  \log(1-w)(1+w) - \log(1-v)(1-v) \right]\ . 
\end{equation}
}
\end{exm}

\begin{exm}[$\bfu$-space, multi-label, $\softmax$]\label{exm:sm}
Now we consider the $\softmax$ projection used in \gls{PMF}~\citenew{ajanthan2018pmf} to optimize in the lifted probability space. 
In this case, the projection is defined as $P_{\beta_k}(\tbfu):= \softmax(\beta_k\tbfu)$ where $P_{\beta_k}:\R^d \to \calC$ with $\bcalC = \calX = \Delta$. Here $\Delta$ is the $(d-1)$-dimensional probability simplex and $|\calQ|=d$.
Note that the $\softmax$ projection is not invertible as it is a many-to-one mapping.
In particular, it is invariant to translation, \ie,
\vspace{-1ex}
\begin{align}
\bfu=\softmax(\tbfu + c\bfone) = \softmax(\tbfu)\ , \nonumber \\ \text{where}\quad
u_{\lambda} = \frac{\exp(\tu_{\lambda})}{\sum_{\mu\in\calQ}\exp(\tu_{\mu})}\ \nonumber,
\vspace{-0.5ex}
\end{align} 
for any scalar $c\in\R$ ($\bfone$ denotes a vector of all ones).
%
Therefore, the $\softmax$ projection does not satisfy Theorem~\myref{1} in the main paper.
However, one could obtain a solution of the inverse of $\softmax$ as follows: given $\bfu\in\Delta$, find a unique point $\tbfv = \tbfu+c\bfone$, for a particular scalar $c$, such that $\bfu=\softmax(\tbfv)$.
Now, by choosing $c=-\log(\sum_{\mu=\calQ} \exp(\tu_{\mu}))$, $\softmax$ can be written as:
\vspace{-1ex}
\begin{equation}\label{eq:sminv}
\bfu = \softmax(\tbfv)\ ,\qquad \text{where}\quad
u_{\lambda} = \exp(\tv_{\lambda})\ ,\quad\forall\,\lambda\in\calQ\ .
\vspace{-1ex}
\end{equation} 
Now, the inverse of the projection $P_{\beta_k}$ can be written as: 
\vspace{-1ex}
\begin{equation}
\tbfv = \iproj_{\beta_k}(\bfu)  = \frac{1}{\beta_k}\softmax^{-1}(\bfu)\ ,\qquad \text{where}\quad
\tv_{\lambda} = \frac{1}{\beta_k}\log(u_{\lambda})\ ,\qquad\forall\,\lambda\ .
\vspace{-1ex}
\end{equation} 
Indeed, $\log$ is a monotonically increasing function and from Theorem~\myref{1} in the main paper, by summing the integrals, the mirror map can be written as:
\vspace{-1ex}
\begin{equation}\label{eq:smmm}
\Phi_{\beta_k}(\bfu) = \frac{1}{\beta_k}\left[\sum_{\lambda}u_{\lambda}\log(u_{\lambda}) - u_{\lambda}\right] = -\frac{1}{\beta_k}H(\bfu) - 1/\beta_k\ .
\vspace{-1ex} 
\end{equation}
Here, $\sum_\lambda u_{\lambda} = 1$ as $\bfu\in\Delta$, and $H(\bfu)$ is the entropy.
Interestingly, as the mirror map in this case is the negative entropy (up to a constant), the \gls{MD} update leads to the well-known \gls{EGD} (or \gls{EDA})~\citenew{beck2003mirror,bubeck2015convex}.
Consequently, the update takes the following form:
\begin{align}\label{eq:eda}
u^{k+1}_{\lambda} &= \frac{u_{\lambda}^k\,\exp(-\beta_k\eta g^{k}_{\lambda})}{\sum_{\mu  \in \calQ} \; 
u_{\mu}^k\,\exp(-\beta_k\eta g^{k}_{\mu})}\quad \forall\, \lambda \ .
\end{align}
The derivation follows the same approach as in the $\tanh$ case above.
It is interesting to note that the \gls{MD} variant of $\softmax$ is equivalent to the well-known \gls{EGD}.
Notice, the authors of \gls{PMF}~\citenew{ajanthan2018pmf} hinted that \gls{PMF} is related to \gls{EGD} but here we have clearly showed that the \gls{MD} variant of \gls{PMF} under the above reparametrization~\plaineqref{eq:sminv} is exactly \gls{EGD}.
\end{exm}
\begin{figure*}[t]
    \centering
    \includegraphics[width=0.25\linewidth]{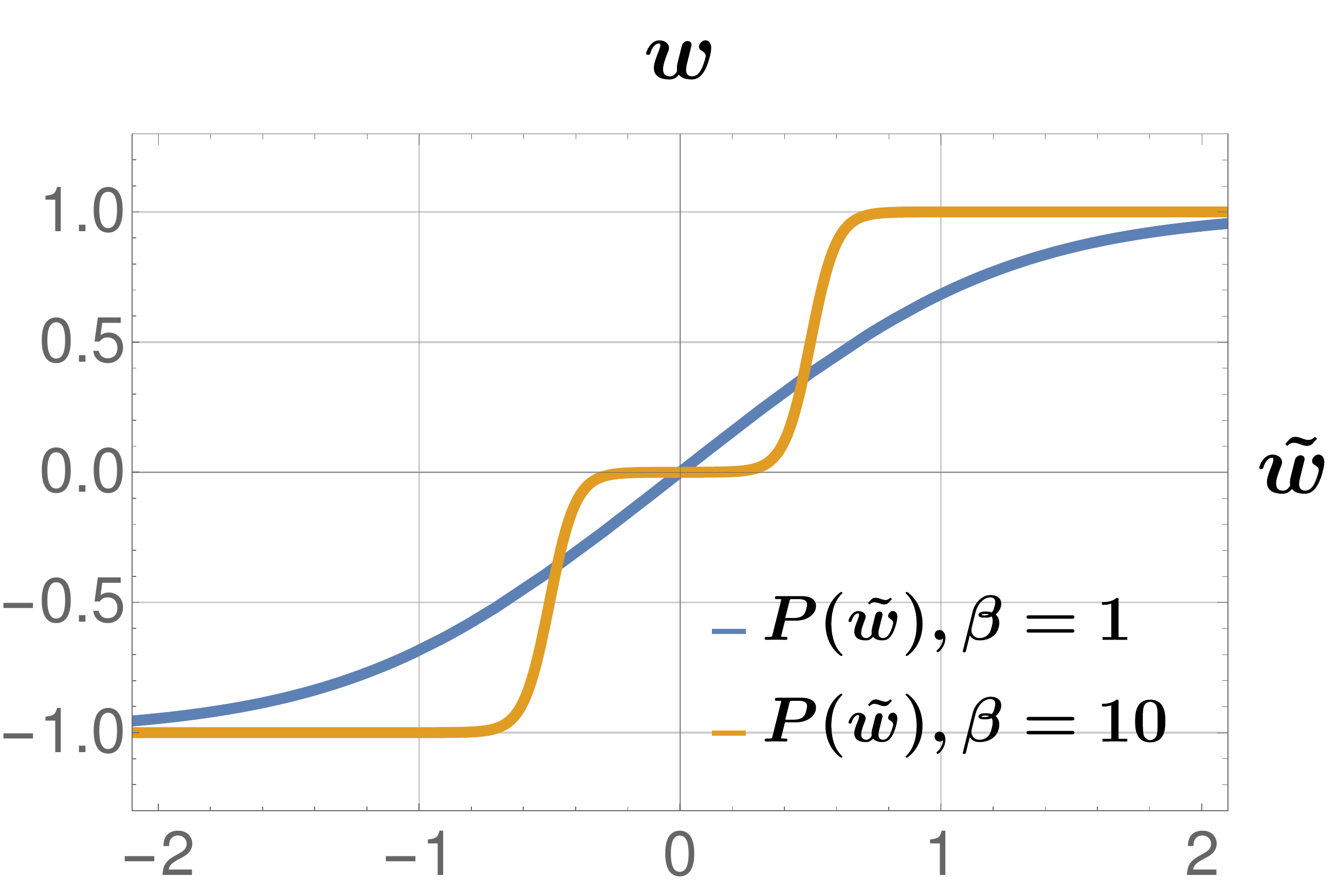}%
    \includegraphics[width=0.25\linewidth]{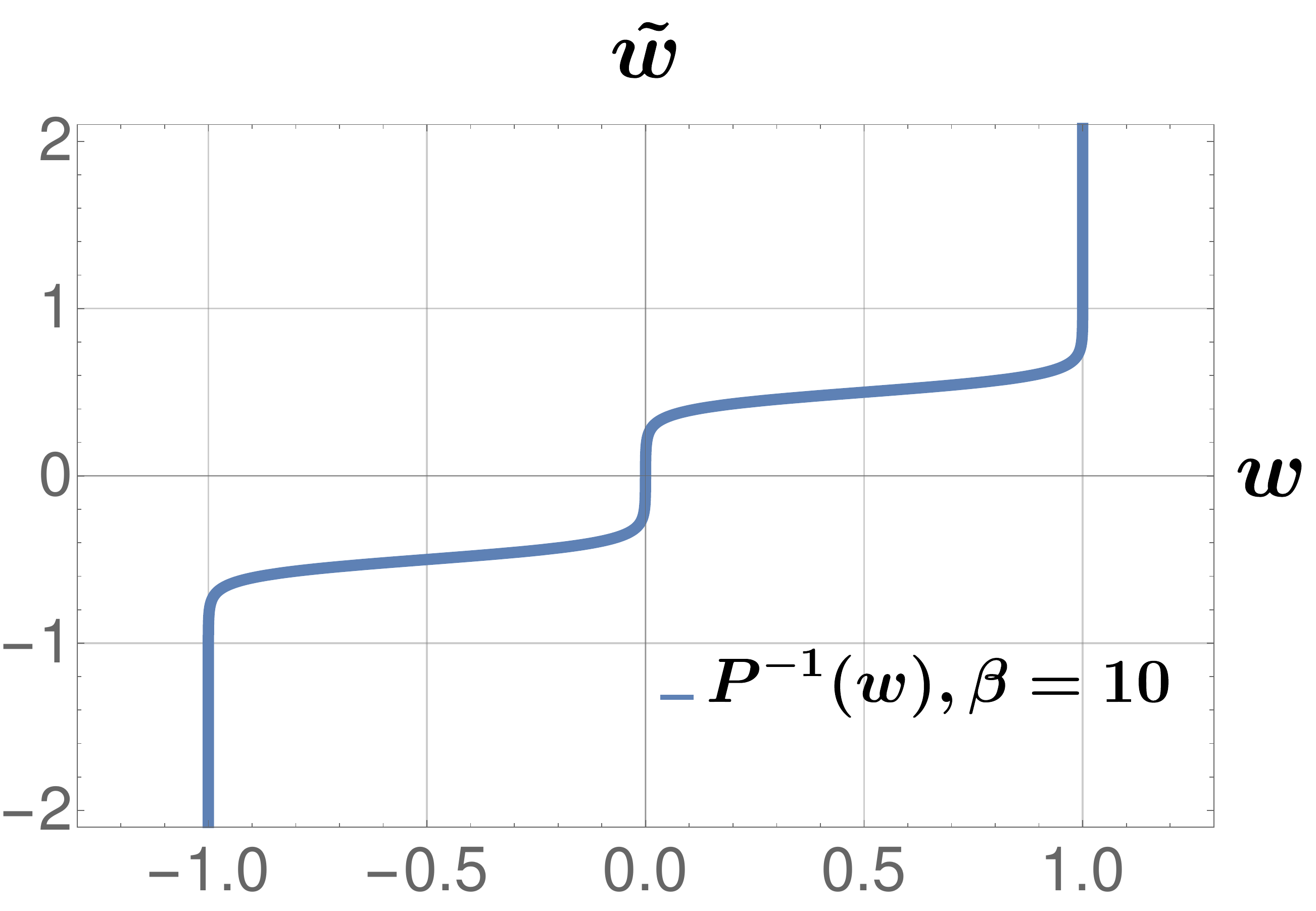}
    \caption{\em Plots of shifted $\tanh$ projections, and their inverses corresponding to the $\tanh$ projection. Note that, the inverse is monotonically increasing. Moreover, when $\beta_k\to \infty$, the projections approaches their respective hard versions. 
    \label{fig:stanh}
    }
\end{figure*}
\begin{exm}[$\bfw$-space, multi-label, shifted $\tanh$]\label{exm:stanh}
Note that, similar to $\softmax$, we wish to extend the $\tanh$ projection beyond binary. 
The idea is to use a function that is an addition of multiple shifted $\tanh$ functions.
\SKIP{
To this end, let us consider a given set of quantization levels as an ordered set $\calQ = \{l_1, \ldots, l_d\}$ with $l_i < l_j$ for all $i < j$.  
Hence, $\bcalC = \calX = [l_1, l_d]$.
Now, we define our shifted $\tanh$ projection $P:\R\to \calC$ as:
\begin{equation}
w = P(\tw) = \frac{1}{d-1}\sum_{i=1}^{d-1} \tanh\left(\beta\left(\tw - \frac{l_i + l_j}{2}\right)\right)\ ,
\end{equation}
where $\beta>0$ and $d= |\calQ|$.
It is easy to see that $w \in \calX$
}
To this end, as an example we consider ternary quantization, with $\calQ=\{-1,0,1\}$ and define our shifted $\tanh$ projection $P_{\beta_k}:\R\to \calC$ as:
\begin{equation}
w = P_{\beta_k}(\tw) = \frac{1}{2}\big[\tanh\left(\beta_k(\tw+0.5)\right) + \tanh\left(\beta_k(\tw-0.5)\right)\big]\ ,
\end{equation}
where $\beta_k \geq 1$ and $w = \bcalC = \calX=[-1,1]$.
When $\beta_k\to \infty$, $P_{\beta_k}$ approaches a stepwise function with inflection points at $-0.5$ and $0.5$ (here, $\pm0.5$ is chosen  heuristically), meaning $w$ move towards one of the quantization levels in the set $\calQ$.

This behaviour together with its inverse is illustrated in~\figref{fig:stanh}.
Now, one could potentially find the functional form of $\iproj_{\beta_k}$ and analytically derive the mirror map corresponding to this projection.
Note that, while Theorem~\myref{1} in the main paper provides an analytical method to derive mirror maps, in some cases such as the above, the exact form of mirror map and the \gls{MD} update might be nontrivial.
In such cases, as shown in the paper, 
the \gls{MD} update can be easily implemented by storing an additional set of auxiliary variables $\tw$.
\end{exm}

\SKIP{
\subsection{\gls{MD} Update Derivation for the tanh Projection}
We now derive the \gls{MD} update corresponding to the $\tanh$ projection below.
From Theorem~\myref{1} in the main paper, the mirror map for the $\tanh$ projection can be written as:
\begin{equation}
\Phi_{\beta_k}(w) = \int \iproj_{\beta_k}(w) dw = \frac{1}{2\beta_k}\big[(1+w)\log(1+w) + (1-w)\log(1-w)\big]\ .  
\end{equation}
Correspondingly, the Bregman divergence can be written as:
 \begin{align}
D_{\Phi_{\beta_k}}(w, v) &= \Phi_{\beta_k}(w) - \Phi_{\beta_k}(v) - \Phi'_{\beta_k}(v) (w-v)\ ,\quad\mbox{where $\Phi_{\beta_k}'(v) = \frac{1}{2\beta_k}\log\frac{1+v}{1-v}$}\ ,\\\nonumber 
 &= \frac{1}{2\beta_k}\left[w\log\frac{(1+w)(1-v)}{(1-w)(1+v)} +  \log(1-w)(1+w) - \log(1-v)(1+v) \right]\ . 
\end{align}
Now, consider the proximal form of \gls{MD} update
\begin{align}\label{eq:tanhmd1}
w^{k+1} & = \amin{\bfx\in(-1,1)}\, \langle \eta\,g^k, w\rangle + D_{\Phi_{\beta_k}}(w, w^k)\ .
\end{align}
The idea is to find $w$ such that the \acrshort{KKT} conditions are satisfied. 
To this end, let us first write the Lagrangian of~\eqref{eq:tanhmd1} by introducing dual variables $y$ and $z$ corresponding to the constraints $w> -1$ and $w< 1$, respectively:
\begin{align}\label{eq:lagtanh1}
F(w, x, y) &= \eta g^kw + y(-w-1) + z(w-1)\\\nonumber &+ \frac{1}{2\beta_k}\left[w\log\frac{(1+w)(1-w^k)}{(1-w)(1+w^k)} +  \log(1-w)(1+w)- \log(1-w^k)(1+w^k) \right]\ .
\end{align}
Now, setting the derivatives with respect to $w$ to zero:
\begin{align}\label{eq:dlagtanh1}
\frac{\partial F}{\partial w} &= \eta g^k + \frac{1}{2\beta_k}\log\frac{(1+w)(1-w^k)}{(1-w)(1+w^k)} - y + z = 0\ .
\end{align}
From complementary slackness conditions,
\begin{align}
y(-w-1) &= 0\ ,\qquad\text{since}\quad w > -1\ \Rightarrow\ y = 0\ ,\\\nonumber
z(w-1) &= 0\ ,\qquad\text{since}\quad w < 1\ \Rightarrow\ z = 0\ .
\end{align} 
Therefore,~\eqref{eq:dlagtanh1} now simplifies to:
\begin{align}
\frac{\partial F}{\partial w} &= \eta g^k + \frac{1}{2\beta_k}\log\frac{(1+w)(1-w^k)}{(1-w)(1+w^k)} = 0\ ,\\\nonumber
\log\frac{(1+w)(1-w^k)}{(1-w)(1+w^k)} &= -2\beta_k\eta g^k\ ,\\\nonumber
 \frac{1+w}{1-w} &= \frac{1+w^k}{1-w^k}\exp(-2\beta_k\eta g^k)\ ,\\\nonumber
 w &= \frac{\frac{1+w^k}{1-w^k}\exp(-2\beta_k\eta g^k) - 1}{\frac{1+w^k}{1-w^k}\exp(-2\beta_k\eta g^k) + 1}\ .
\end{align}
The pseudocodes of original (\acrshort{MD}-$\tanh$) and numerically stable versions (\acrshort{MD}-$\tanh$-\acrshort{S}) for $\tanh$ are presented in~\Twoalgref{alg:mdtanh}{alg:mdtanhste} respectively.
}

\subsection{Convergence Proof for \acrshort{MD} with Adaptive Mirror Maps}

\begin{thm}\label{thm:betamd1}
Let $\calX\subset\R^r$ be a convex compact set and $\calC\subset\R^r$ be a convex open set with $\primal\ne \emptyset$ and $\calX\subset\bcalC$. 
Let $\Phi:\calC\to\R$ be a mirror map $\rho$-strongly convex on $\primal$ with respect to $\norm{\cdot}$, 
$R^2 = \sup_{\bfx\in\primal} \Phi(\bfx) - \Phi(\bfx^0)$ where $\bfx^0 = \argmin_{\bfx\in\primal}\Phi(\bfx)$ is the initialization, 
and $f:\calX\to\R$ be a convex function and $L$-Lipschitz with respect to $\norm{\cdot}$.
Then \gls{MD} with mirror map $\Phi_{\beta_k}(\bfx) = \Phi(\bfx)/\beta_k$ with $1 \le \beta_k \le B$ and $\eta = \frac{R}{L}\sqrt{\frac{2\rho}{Bt}}$ satisfies
\begin{equation}
f\left(\frac{1}{t}\sum_{k=0}^{t-1}\bfx^k\right) - f(\bfx^*) \le RL\sqrt{\frac{2 B}{\rho t}}\ ,
\end{equation}
where $\beta_k$ is the annealing hyperparameter, $\eta >0$ is the learning rate, $t$ is the iteration index, and $\bfx^*$ is the optimal solution.
\end{thm}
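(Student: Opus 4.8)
\textbf{The plan} is to run the classical averaged-iterate regret argument for mirror descent (potential function = Bregman divergence to $\bfx^*$), but carry along the extra error created by replacing $\Phi$ with the rescaled map $\Phi_{\beta_k}=\Phi/\beta_k$ at step $k$. Throughout I write $g^k\in\partial f(\bfx^k)$ for the (sub)gradient used in the proximal step $\bfx^{k+1}=\amin{\bfx\in\primal}\ \eta\langle g^k,\bfx\rangle + D_{\Phi_{\beta_k}}(\bfx,\bfx^k)$, and I will use that $\{\beta_k\}$ is non-decreasing with $\beta_0\ge1$ (as in all our annealing schedules) and that $\bfx^*\in\primal$ (otherwise pass to a limit over $\primal$, as is standard). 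Note $\Phi_{\beta_k}$ is $(\rho/\beta_k)$-strongly convex on $\primal$ and $D_{\Phi_{\beta_k}}=D_\Phi/\beta_k$.

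First I would establish a one-step inequality. Evaluating the first-order optimality condition of the proximal update at $\bfx=\bfx^*$ and applying the three-point identity for $D_{\Phi_{\beta_k}}$ gives
\[
\eta\langle g^k,\bfx^{k+1}-\bfx^*\rangle \le D_{\Phi_{\beta_k}}(\bfx^*,\bfx^k) - D_{\Phi_{\beta_k}}(\bfx^*,\bfx^{k+1}) - D_{\Phi_{\beta_k}}(\bfx^{k+1},\bfx^k).
\]
Then I split $\langle g^k,\bfx^k-\bfx^*\rangle=\langle g^k,\bfx^{k+1}-\bfx^*\rangle+\langle g^k,\bfx^k-\bfx^{k+1}\rangle$ and bound the second piece by Fenchel--Young with parameter $\rho/\beta_k$: $\eta\langle g^k,\bfx^k-\bfx^{k+1}\rangle\le \frac{\eta^2\beta_k}{2\rho}\norm{g^k}_*^2+\frac{\rho}{2\beta_k}\norm{\bfx^k-\bfx^{k+1}}^2$. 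The quadratic term is absorbed by $D_{\Phi_{\beta_k}}(\bfx^{k+1},\bfx^k)\ge\frac{\rho}{2\beta_k}\norm{\bfx^{k+1}-\bfx^k}^2$ (strong convexity of $\Phi_{\beta_k}$), leaving
\[
\eta\langle g^k,\bfx^k-\bfx^*\rangle \le \frac{\eta^2\beta_k}{2\rho}\norm{g^k}_*^2 + D_{\Phi_{\beta_k}}(\bfx^*,\bfx^k) - D_{\Phi_{\beta_k}}(\bfx^*,\bfx^{k+1}).
\]

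Next I would sum over $k=0,\dots,t-1$. For the gradient term, $L$-Lipschitzness gives $\norm{g^k}_*\le L$ and $\beta_k\le B$, so it contributes at most $\frac{\eta^2BL^2t}{2\rho}$. For the Bregman terms the sum does not telescope directly since the potential changes each step, but using $\beta_{k+1}\ge\beta_k$ we have $D_{\Phi_{\beta_k}}(\bfx^*,\bfx^{k+1})\ge D_{\Phi_{\beta_{k+1}}}(\bfx^*,\bfx^{k+1})$, hence the sum is at most $D_{\Phi_{\beta_0}}(\bfx^*,\bfx^0)\le D_\Phi(\bfx^*,\bfx^0)$ (using $\beta_0\ge1$); and since $\bfx^0=\argmin_{\primal}\Phi$ satisfies $\langle\nabla\Phi(\bfx^0),\bfx^*-\bfx^0\rangle\ge0$, this is $\le\Phi(\bfx^*)-\Phi(\bfx^0)\le R^2$. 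Thus $\sum_{k}\eta\langle g^k,\bfx^k-\bfx^*\rangle\le \frac{\eta^2BL^2t}{2\rho}+R^2$; dividing by $\eta t$, using convexity of $f$ ($f(\bfx^k)-f(\bfx^*)\le\langle g^k,\bfx^k-\bfx^*\rangle$) and then Jensen on the average iterate gives $f\!\big(\tfrac1t\sum_{k=0}^{t-1}\bfx^k\big)-f(\bfx^*)\le \frac{\eta BL^2}{2\rho}+\frac{R^2}{\eta t}$, and substituting the minimizing $\eta=\frac{R}{L}\sqrt{2\rho/(Bt)}$ yields $RL\sqrt{2B/(\rho t)}$.

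The only nonroutine step is the summation of the Bregman terms: because the mirror map is rescaled each iteration the potential does not telescope for free, and bounding the residual is precisely what introduces the $\sqrt{B}$ factor relative to fixed-map mirror descent; this is where the monotonicity of $\{\beta_k\}$ and $\beta_0\ge1$ are genuinely used. A secondary point requiring care is threading the $\beta_k$-dependent strong-convexity constant $\rho/\beta_k$ through the one-step inequality so that the cancellation still goes through and only a single $\beta_k\le B$ remains in front of $\norm{g^k}_*^2$.
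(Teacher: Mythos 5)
Your proof is correct and follows essentially the same route as the paper's: the standard one-step mirror-descent inequality with the strong-convexity constant $\rho/\beta_k$ threaded through, the non-telescoping Bregman sum controlled via monotonicity of $\beta_k$ and $\beta_0\ge 1$ to leave only $D_\Phi(\bfx^*,\bfx^0)\le R^2$, and the residual term bounded by $\frac{\eta^2\beta_k L^2}{2\rho}\le\frac{\eta^2 BL^2}{2\rho}$. The only cosmetic differences are that you work directly with the proximal form and first-order optimality (the paper uses the two-step dual-update form with $\bfy^{k+1}$ and the generalized Pythagorean inequality) and you handle the sum by termwise monotonicity of $\beta_k\mapsto D_\Phi/\beta_k$ rather than the paper's summation by parts; both yield the identical bound.
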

\begin{proof}
The proof is a slight modification to the proof of standard \acrshort{MD} and we refer the reader to the proof of Theorem~\myref{4.2} of~\cite{bubeck2015convex} for step by step derivation.
We first discuss the intuition and then turn to the detailed proof.
For the standard \acrshort{MD} the bound is:
\begin{equation}
f\left(\frac{1}{t}\sum_{k=0}^{t-1}\bfx^k\right) - f(\bfx^*) \le RL\sqrt{\frac{2}{\rho t}}\ ,
\end{equation}
with $\eta=\frac{R}{L}\sqrt{\frac{2\rho}{t}}$.
Here, since $\beta_k \le B$, the adaptive mirror map ${\Phi_{\beta_k}(\bfx) = \Phi(\bfx)/\beta_k}$ is $\rho/B$-strongly convex for all $k$.
Therefore, by simply replacing $\rho$ with $\rho/B$ the desired bound is obtained.

We now provide the step-by-step derivation for completeness.
First note the \acrshort{MD} update with the adaptive mirror map:
\begin{align}\label{eq:amd1}
\nabla\Phi_{\beta_k}(\bfy^{k+1}) &= \nabla\Phi_{\beta_k}(\bfx^k) - \eta \bfg^k\ , \quad\mbox{where $\bfg^k\in\partial f(\bfx^k)$ and $\bfy^{k+1}\in\calC$}\ ,\\\nonumber
\bfg^k &= (\nabla\Phi_{\beta_k}(\bfx^k) - \nabla\Phi_{\beta_k}(\bfy^{k+1}))/\eta, \quad\mbox{$\eta >0$}\ .
\end{align}
Now, let $\bfx\in\primal$. The claimed bound will be obtained by taking a limit $\bfx\to \bfx^*$.
\begin{align}\label{eq:mdbound1}
f(\bfx^k) - f(\bfx) &\le \langle \bfg^k, \bfx^k-\bfx\rangle\ ,\quad\mbox{$f$ is convex}\ ,\\\nonumber
&= \langle \nabla\Phi_{\beta_k}(\bfx^k) - \nabla\Phi_{\beta_k}(\bfy^{k+1}), \bfx^k-\bfx\rangle/\eta\ ,\quad\mbox{\eqref{eq:amd1}}\ ,\\\nonumber
&= \left(D_{\Phi_{\beta_k}}(\bfx, \bfx^k) + D_{\Phi_{\beta_k}}(\bfx^k, \bfy^{k+1}) - D_{\Phi_{\beta_k}}(\bfx, \bfy^{k+1})\right)/\eta\ ,\quad\mbox{Bregman div.}\ ,\\\nonumber
&\le \left(D_{\Phi_{\beta_k}}(\bfx, \bfx^k) + D_{\Phi_{\beta_k}}(\bfx^k, \bfy^{k+1}) - D_{\Phi_{\beta_k}}(\bfx, \bfx^{k+1}) - D_{\Phi_{\beta_k}}(\bfx^{k+1}, \bfy^{k+1})\right)/\eta\ .
\end{align}
The last line is due to the inequality ${D_{\Phi_{\beta_k}}(\bfx, \bfx^{k+1}) + D_{\Phi_{\beta_k}}(\bfx^{k+1}, \bfy^{k+1}) \ge D_{\Phi_{\beta_k}}(\bfx, \bfy^{k+1})}$, where $\bfx^{k+1} = \argmin_{\bfx\in\primal}\, D_{\Phi_{\beta_k}}(\bfx, \bfy^{k+1})$.
Notice that, 
\begin{align}\label{eq:telesum1}
\sum_{k=0}^{t-1} D_{\Phi_{\beta_k}}(\bfx, \bfx^k) - D_{\Phi_{\beta_k}}(\bfx, \bfx^{k+1}) &= \sum_{k=0}^{t-1} \left(D_{\Phi}(\bfx, \bfx^k) - D_{\Phi}(\bfx, \bfx^{k+1})\right)/\beta_k\ ,\\\nonumber 
&= \beta_0^{-1}\big(D_\Phi(\bfx, \bfx^0) - D_\Phi(\bfx, \bfx^1)\big) + \beta_1^{-1}\big(D_\Phi(\bfx, \bfx^1) - D_\Phi(\bfx, \bfx^2)\big) +  \ldots + \\\nonumber
& \qquad \beta_{t-1}^{-1}\big(D_\Phi(\bfx, \bfx^{t-1}) - D_\Phi(\bfx, \bfx^t)\big),\\\nonumber
&= \beta_0^{-1}D_\Phi(\bfx, \bfx^0) + (\beta_1^{-1}-\beta_0^{-1})D_\Phi(\bfx, \bfx^1)+ \ldots + (\beta_{t-1}^{-1}-\beta_{t-2}^{-1})D_\Phi(\bfx, \bfx^{t-1}) -  \\\nonumber
& \qquad \beta_{t-1}^{-1}D_\Phi(\bfx, \bfx^t), \\\nonumber
&\le \beta_0^{-1}D_\Phi(\bfx, \bfx^0),\qquad\mbox{$D_{\Phi}(\bfx, \bfz) \ge 0,\ \forall\,\bfx, \bfz\in\calC$ and $\beta_{k+1}^{-1} - \beta_{k}^{-1} < 0, \forall\,k$}\, \\\nonumber
&\le D_\Phi(\bfx, \bfx^0),\qquad\mbox{$\beta_0 \ge 1$}\, \\\nonumber
\end{align}
Now we bound the remaining term:
\begin{align}\label{eq:rhobound1}
&D_{\Phi_{\beta_k}}(\bfx^k, \bfy^{k+1}) - D_{\Phi_{\beta_k}}(\bfx^{k+1}, \bfy^{k+1})\\\nonumber
&= \Phi_{\beta_k}(\bfx^k) - \Phi_{\beta_k}(\bfx^{k+1}) - \langle \nabla\Phi_{\beta_k}(\bfy^{k+1}), \bfx^k - \bfx^{k+1}\rangle\ ,\quad\mbox{Bregman divergence def.}\ ,\\\nonumber
&\le \langle \nabla\Phi_{\beta_k}(\bfx^k) - \nabla\Phi_{\beta_k}(\bfy^{k+1}), \bfx^k - \bfx^{k+1}\rangle - \frac{\rho}{2\beta_k} \norm{\bfx^k - \bfx^{k+1}}^2\ ,\quad\mbox{$\Phi$ is $\rho$-strongly convex}\ ,\\\nonumber
&= \langle \eta\bfg^k, \bfx^k - \bfx^{k+1}\rangle - \frac{\rho}{2\beta_k} \norm{\bfx^k - \bfx^{k+1}}^2\ ,\quad\mbox{\eqref{eq:amd1}}\ ,\\\nonumber
&\le \eta L (\bfx^k - \bfx^{k+1}) - \frac{\rho}{2\beta_k} \norm{\bfx^k - \bfx^{k+1}}^2\ ,\quad\mbox{$f$ is $L$-Lipschitz}\ ,\\\nonumber
&\le \frac{(\eta L)^2\beta_k}{2\rho}\ ,\qquad\mbox{$a z - b z^2 \le a^2/(4b), \ \forall\,z\in\R$}\ ,\\\nonumber
&\le \frac{(\eta L)^2 B}{2\rho}\ ,\qquad\mbox{$\beta_k \le B$}\ .
\end{align}
Putting~\twoeqref{eq:telesum1}{eq:rhobound1} in~\eqref{eq:mdbound1},
\begin{align}\label{eq:sumbound}
\frac{1}{t}\sum_{k=0}^{t-1} \left(f(\bfx^k) - f(\bfx)\right) &\le \frac{D_{\Phi}(\bfx, \bfx^{0})}{\eta t} + \frac{\eta B L^2}{2\rho}\ ,\\\nonumber
 f\left(\frac{1}{t}\sum_{k=0}^{t-1}\bfx^k\right) - f(\bfx) &\le \frac{R^2}{\eta t} + \frac{\eta B L^2}{2\rho}\ ,\quad\mbox{Jensen inequality, defs. of $\bfx^0$ and $R$}\ ,\\\nonumber
 &= RL\sqrt{\frac{2B}{\rho t}}\ ,\quad\mbox{Substituting $\eta = \frac{R}{L}\sqrt{\frac{2\rho}{Bt}}$}\ .
\end{align}
Note the additional multiplication by $\sqrt{B}$ compared to the standard \gls{MD} bound. However, the convergence rate is still $\calO(1/\sqrt{t})$.
\end{proof}

\subsection{Proof for Epsilon Convergence to a Discrete Solution via Annealing}
\begin{pro}
For a given $B > 0$ and $0<\epsilon<1$, there exists a $\gamma>0$ such that if $|\tx|\ge\gamma$ then ${1-|\tanh(B\tx)|<\epsilon}$. Here $|\cdot|$ denotes the absolute value and $\gamma> \tanh^{-1}(1-\epsilon)/B$.
\end{pro}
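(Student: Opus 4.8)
The plan is to reduce the statement to the elementary monotonicity of $\tanh$ on the nonnegative reals. First I would observe that $\tanh$ is odd and $B>0$, so $|\tanh(B\tx)| = \tanh(B|\tx|)$; hence it suffices to work with the nonnegative quantity $t := |\tx|\ge 0$ and show $1-\tanh(Bt)<\epsilon$ for $t$ large enough.

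Next I would record the relevant properties of $\tanh$ restricted to $[0,\infty)$: it is continuous and strictly increasing, with $\tanh(0)=0$ and $\tanh(z)\to 1$ as $z\to\infty$, so it is a bijection from $[0,\infty)$ onto $[0,1)$ with a strictly increasing inverse $\tanh^{-1}\colon[0,1)\to[0,\infty)$. Since $0<\epsilon<1$ we have $1-\epsilon\in(0,1)$, so $\tanh^{-1}(1-\epsilon)$ is a well-defined number, and it is strictly positive because $1-\epsilon>0=\tanh(0)$.

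Then the core computation: for $t\ge 0$ both $\tanh(Bt)$ and $1-\epsilon$ lie in $[0,1)$, the domain on which $\tanh^{-1}$ is defined and strictly increasing, so $1-\tanh(Bt)<\epsilon$ is equivalent to $\tanh(Bt)>1-\epsilon$, which in turn is equivalent to $Bt>\tanh^{-1}(1-\epsilon)$, i.e. $t>\tanh^{-1}(1-\epsilon)/B =: \gamma_0$. Equivalently, using the identity $\tanh(z)=1-2/(e^{2z}+1)$ one obtains the explicit threshold $\gamma_0 = \tfrac{1}{2B}\log\tfrac{2-\epsilon}{\epsilon}$, which coincides with $\tanh^{-1}(1-\epsilon)/B$.

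Finally I would take $\gamma$ to be any number with $\gamma>\gamma_0$ (such $\gamma$ exists, and the proposition only asks for one). Then $|\tx|\ge\gamma>\gamma_0$ forces $\tanh(B|\tx|)>1-\epsilon$, i.e. $1-|\tanh(B\tx)|<\epsilon$, as claimed; the strict inequality $\gamma>\gamma_0$ is precisely what guarantees a strict conclusion even at $|\tx|=\gamma$. There is no genuine obstacle here — the only points deserving a line of care are that $1-\epsilon$ lies strictly inside $(0,1)$ so that $\tanh^{-1}(1-\epsilon)$ makes sense, and the boundary case $|\tx|=\gamma_0$, which is exactly why the bound on $\gamma$ is stated strictly.
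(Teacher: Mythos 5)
Your argument is correct and follows essentially the same route as the paper's own proof: rewrite $1-|\tanh(B\tx)|<\epsilon$ as $\tanh(B|\tx|)>1-\epsilon$ using oddness of $\tanh$, then invert via monotonicity to get the threshold $\tanh^{-1}(1-\epsilon)/B$. The extra care you take about the domain of $\tanh^{-1}$ and the explicit closed form $\frac{1}{2B}\log\frac{2-\epsilon}{\epsilon}$ are fine additions but do not change the substance.
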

\begin{proof}
For a given $B$ and $\epsilon$, we derive a condition on $|\tx|$ for the inequality to be satisfied.
\begin{align}
1-|\tanh(B\tx)|&<\epsilon\ ,\\
|\tanh(B\tx)| &> 1 - \epsilon\ ,\\
\tanh(B|\tx|)&> 1 - \epsilon\ ,\quad\mbox{$|\tanh(\tx)| = \tanh(|\tx|)$}\\
|\tx|&> \tanh^{-1}(1 - \epsilon)/B\ .\quad\mbox{$\tanh$ is monotone}
\end{align}  
Therefore for any $\gamma> \tanh^{-1}(1-\epsilon)/B$, the above inequality is satisfied.
\end{proof}

\section{Additional Experiments}
We first give training curves of all compared methods, provide ablation study of \imagenet{} experiments as well as ternary quantization results as a proof of concept. Later, we provide experimental details.

\begin{figure*}[t]
    \centering
    \begin{subfigure}{0.25\linewidth}
    \includegraphics[width=0.99\linewidth]{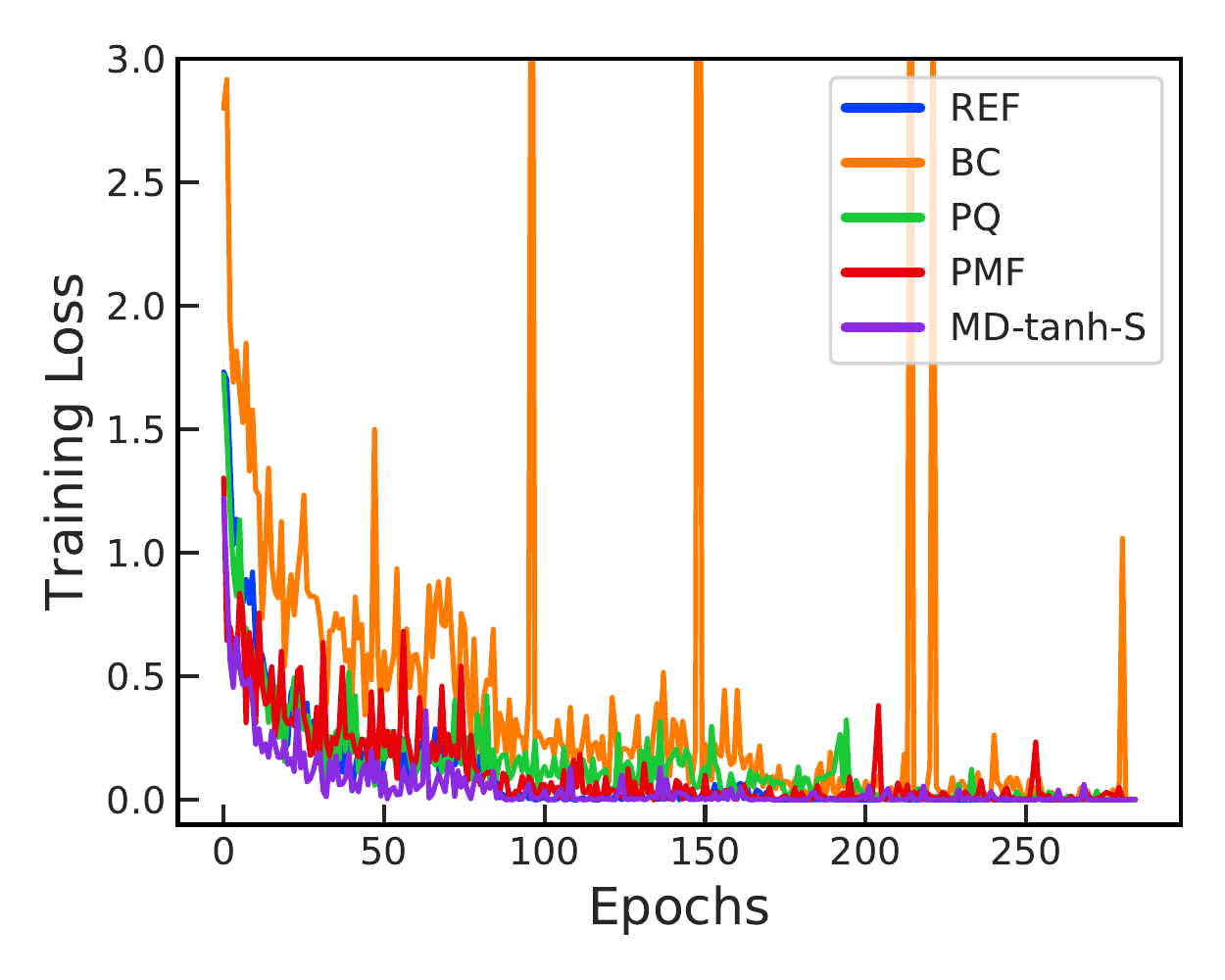}
    \end{subfigure}%
    \begin{subfigure}{0.25\linewidth}
    \includegraphics[width=0.99\linewidth]{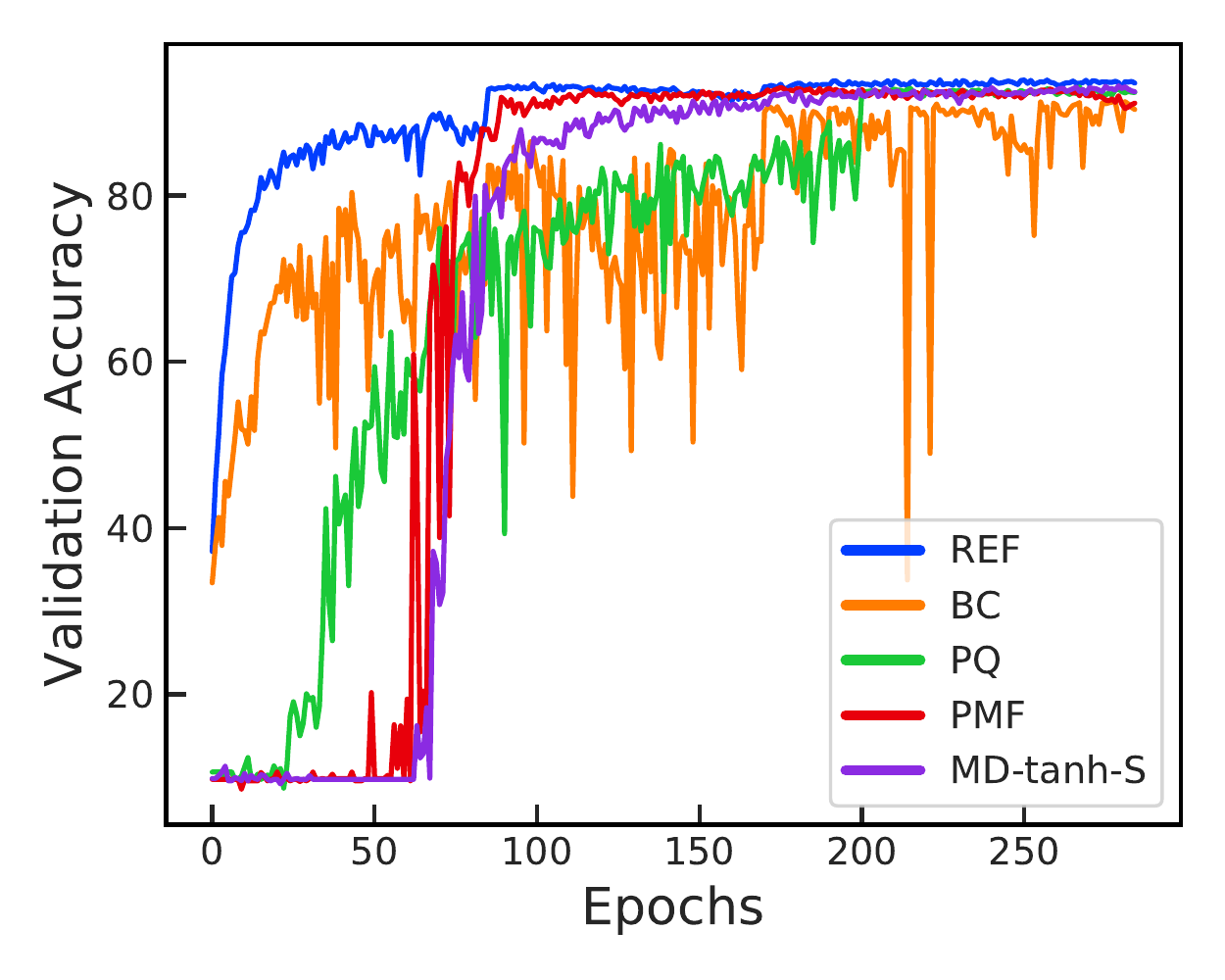}
    \end{subfigure}%
    \begin{subfigure}{0.25\linewidth}
    \includegraphics[width=0.99\linewidth]{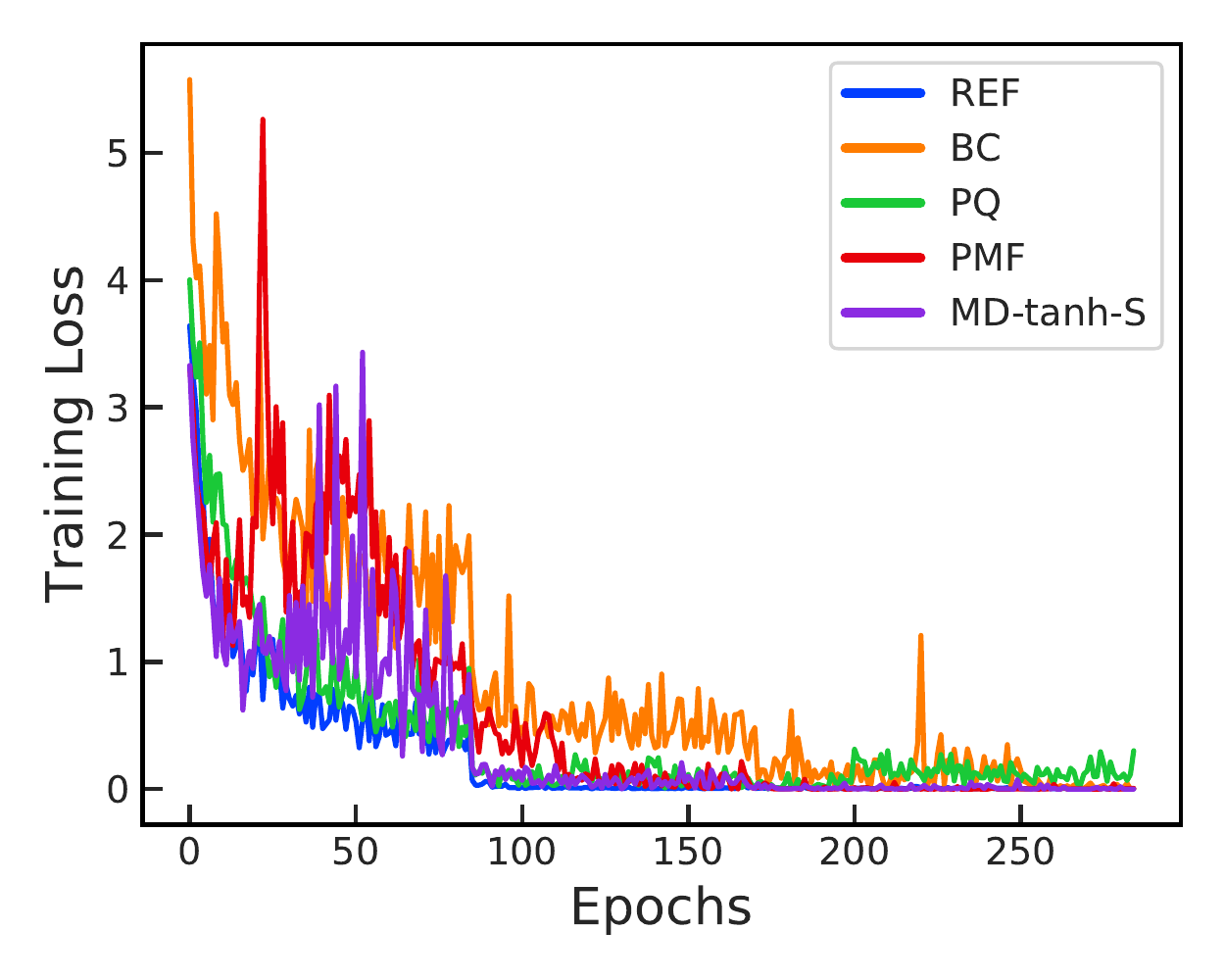}
    \end{subfigure}%
    \begin{subfigure}{0.25\linewidth}
    \includegraphics[width=0.99\linewidth]{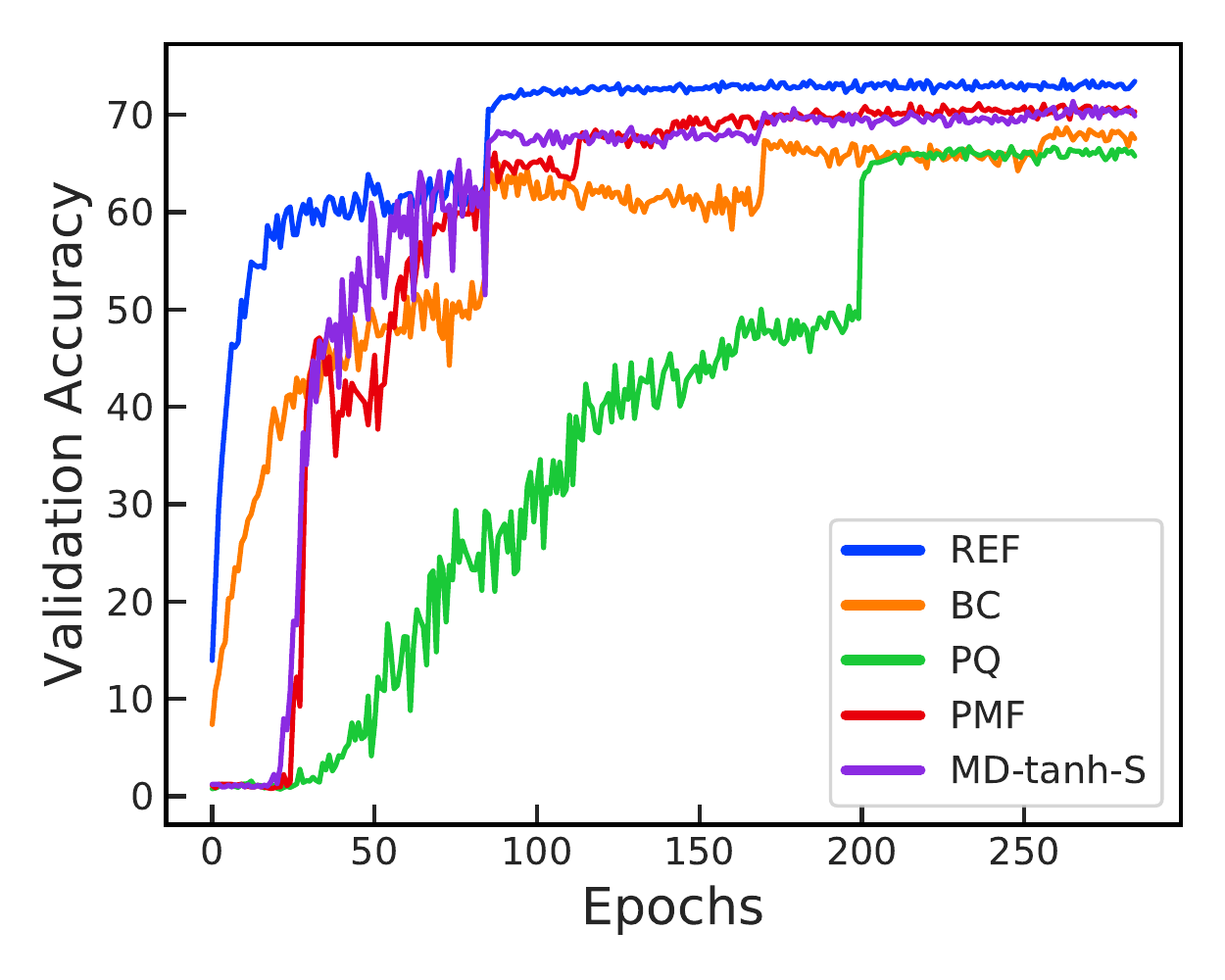}
    \end{subfigure}
    
    \caption{\em Training curves for binarization for \cifar{-10} (first two columns) and \cifar{-100} (last two columns) with \sresnet{-18}.
    Compared to \gls{BC}, our \gls{MD}-$\tanh$-\acrshort{S} and \gls{PMF} are less noisy and after the initial exploration phase (up to $60$ in \cifar{-10} and $25$ epochs \cifar{-100}), the validation accuracies rise sharply and closely resembles the floating point network afterwards. 
    This steep increase is not observed in regularization methods such as~\gls{PQ}.
    }
    \label{fig:trcurves1}
\end{figure*}

\subsection{Convergence Analysis}
The training curves for \cifar{-10} and \cifar{-100} datasets with \sresnet{-18} are shown in~\figref{fig:trcurves1}.
Notice, after the initial exploration phase (due to low $\beta$) the validation accuracies of our \gls{MD}-$\tanh$-\acrshort{S} increase sharply while this steep rise is not observed in regularization methods such as~\gls{PQ}. The training behaviour for both our stable \acrshort{MD}-variants ($\softmax$ and $\tanh$) is quite similar.

\subsection{\imagenet{} Ablation Study}
\begin{table*}[t]
    \centering
    \small    
    \begin{tabular}{lcccccc}
        \toprule
        & Pretrained & Conv1 and \acrshort{FC} & Bias & \acrshort{BN} & Layerwise Scaling & Accuracy\\ 
        \midrule
        \parbox[t]{2mm}{\multirow{8}{*}{\rotatebox[origin=c]{90}{\acrshort{MD}-$\tanh$-\acrshort{S}}}}
        & \checkmark & Float & Float & Float & \xmark & $\textbf{66.78/87.01}$\\
        & \xmark & Float & Float & Float & \xmark & $65.92/86.29$\\
        & \checkmark & Binary & Float & Float & \xmark & $60.39/82.77$\\
        & \xmark & Binary & Float & Float & \xmark & $59.92/82.42$\\
        & \checkmark & Binary & Binary & Float & \xmark & $56.60/79.79$\\
        & \xmark & Binary & Binary & Float & \xmark & $56.67/79.66$\\
        & \checkmark & Binary & Float & Float & \checkmark & $61.91/83.87$\\
        & \checkmark & Binary & Binary & Float & \checkmark & $56.05/79.69$\\
        \bottomrule
    \end{tabular}
    \caption{\em Ablation study on \imagenet{} with \sresnet{-18} for weights binarization using \mdtanhs{}. While the best performance is obtained for the case where Conv1, \acrshort{FC} and biases are not quantized, \mdtanhs{} obtains good performance even when fully-quantized regardless of either using a pretrained network or training from scratch.
    }        
    \label{tab:res_imagenet_abl}
\end{table*}

We provide an ablation study for various experimental settings for weights binarization on \imagenet{} dataset using \sresnet{-18} architecture in~\tabref{tab:res_imagenet_abl}. We perform experiments for both training for scratch and pretrained networks with variation in binarization of first convolution layer, fully connected layer and biases. Note that the performance degradation of our binary networks is minimal on all layers binarized network except biases using simple layerwise scaling as mentioned in~\cite{mcdonnell2018training}. 
Contrary to the standard setup of binarized network training for \imagenet{}, where first and last layers are kept floating point, our \acrshort{MD}-$\tanh$-\acrshort{S} method achieves good performance even on the fully-quantized network irrespective of either using a pretrained network or network trained from scratch.

\begin{table*}[h]
    \centering
    \small    
    \begin{tabular}{llccccccc}
        \toprule
        &\multirow{2}{*}{Algorithm}   & \multirow{2}{*}{Space} & \multicolumn{2}{c}{\cifar{-10}} & \multicolumn{2}{c}{\cifar{-100}} & \tinyimagenet{}\\
         &&  &  \svgg{-16}  & \sresnet{-18} & \svgg{-16}  & \sresnet{-18}  & \sresnet{-18} \\
        \midrule

        &\acrshort{REF} (float) & $\bfw$ & $93.33$ & $94.84$ & $71.50$ & $76.31$ & $58.35$\\
        &\acrshort{PQ}  & $\bfw$ & $83.32$ & $90.50$ & $32.16$ & $59.18$ & $41.46$ \\
        &\acrshort{PQ}*  & $\bfw$ & $\textbf{92.20}$ & $\textbf{93.85}$ & $57.64$ & $70.98$ & $45.72$ \\

        & \acrshort{GD}-$\tanh$ & $\bfw$ & $91.21$ & $93.20$ & $53.88$ & $69.48$ & $50.65$  \\
        \midrule

        \parbox[t]{2mm}{\multirow{2}{*}{\rotatebox[origin=c]{90}{Ours}}}

        &\acrshort{MD}-$\softmax$-\acrshort{S} & $\bfu$ & $91.69$ & $93.30$  & $65.11$ & $\textbf{72.01}$ & $52.21$ \\
        &\acrshort{MD}-$\tanh$-\acrshort{S} & $\bfw$ & $91.70$ & $93.42$ & $\textbf{66.15}$ & $71.29$  & $\textbf{52.69}$\\

        \bottomrule
    \end{tabular}
    \vspace{1ex}
    \caption{\em Classification accuracies on the test set for ternary quantization. \acrshort{PQ}* denotes performance with fully-connected layers, first convolution layer, and shortcut layers in floating point whereas \acrshort{PQ} represent results with all layers quantized. Also, \gls{PQ}* optimize for the quantization levels as well (different for each layer), in contrast we fix it to $\calQ=\{-1,0,1\}$. \acrshort{GD}-$\tanh$ denotes results without using \acrshort{STE} and actually calculating the gradient through the projection.}
        
    \label{tab:res_ternary}
\end{table*}

\subsection{Ternary Quantization Results}
As a proof of concept for our shifted $\tanh$ projection (refer~Example~\myref{3}), we also show results for ternary quantization with quantization levels ${\calQ=\{-1,0,1\}}$ in~\tabref{tab:res_ternary}. Note that the performance improvement of our ternary networks compared to their respective binary networks is marginal as only $0$ is included as the $3^{\text{rd}}$ quantization level. In contrast to us, the baseline method \gls{PQ}~\citenew{bai2018proxquant} optimizes for the quantization levels (differently for each layer) as well in an alternating optimization regime rather than fixing it to $\calQ=\{-1,0,1\}$. Also, \acrshort{PQ} does ternarize the first convolution layer, fully-connected layers and the shortcut layers. We crossvalidate hyperparameters for both the original \acrshort{PQ} setup and the equivalent setting of our \acrshort{MD}-variants where we optimize all the weights and denote them as \acrshort{PQ}* and \acrshort{PQ} respectively. 

Our \acrshort{MD}-$\tanh$ variant performs on par or sometimes even better in comparison to $\tanh$ projection results where gradient is calculated through the projection instead of performing \gls{MD}. This again empirically validates the hypothesis that \gls{MD} yields in good approximation for the task of network quantization. The better performance of \gls{PQ} in their original quantization setup, compared to our approach in \cifar{-10} can be accounted to their non-quantized layers and different quantization levels. We believe similar explorations are possible with our \gls{MD} framework as well.
%

\begin{table}[t]
    \centering
    \small
    \begin{tabular}{lccccc}
        \toprule
        Dataset & Image & \# class & Train~/~Val. & $b$ & $K$ \\
        \midrule
        \cifar{-10} & $32\times 32$ & $10$ & $45$k~/~$5$k  & $128$ & $100$k\\
        \cifar{-100} & $32\times 32$ & $100$ & $45$k~/~$5$k  & $128$ & $100$k\\
        \tinyimagenet{} & $64\times 64$ & $200$ & $100$k~/~$10$k  & $128$ & $100$k\\
        ImageNet & $224\times 224$ & $1000$ & $1.2$M~/~$50$k  & $2048/256$ & $90/55$\\
        \bottomrule
    \end{tabular}
    \vspace{1ex}
    \caption{\em
    	Experiment setup. Here, $b$ is the batch size and $K$ is the
    	total number of iterations for all datasets except ImageNet where $K$ indicates number of epochs for training from scratch and pretrained network respectively. For ImageNet, $b$ represents batch size for training from scratch and pretrained networks respectively.}
    \label{tab:setup}
    \end{table}
    
    \begin{table}[t]
    \centering
    \small
        \begin{tabular}{cc}
        \toprule
        Hyperparameters & Fine-tuning grid \\
        \midrule
        learning\_rate & $[0.1, 0.01, 0.001, 0.0001]$\\
        lr\_scale & $[0.1, 0.2, 0.3, 0.5]$\\
        beta\_scale & $[1.01, 1.02, 1.05, 1.1, 1.2]$\\
        beta\_scale\_interval & $[100, 200, 500, 1000, 2000]$\\
        \bottomrule
    \end{tabular}
    \vspace{1ex}
    \caption{\em The hyperparameter search space for all the experiments. Chosen parameters are given in~\threetabref{tab:hyper1}{tab:hyper2}{tab:hyper3}.}
    \label{tab:finetuning}
\end{table}

\subsection{Experimental Details}
As mentioned in the main paper the experimental protocol is similar to~\cite{ajanthan2018pmf}. To this end, the details of the datasets and their corresponding experiment setups are given in~\tabref{tab:setup}. For \cifar{-10/100} and \tinyimagenet{}, \svgg{-16}~\citenew{simonyan2014very}, \sresnet{-18}~\citenew{he2016deep} and \smobilenet{V2}~\citenew{sandler2018mobilenetv2} architectures adapted for \cifar{} dataset are used. 
In particular, for \cifar{} experiments, similar to~\cite{lee2018snip}, the size of the \gls{fc} layers of \svgg{-16} is set to $512$ and no dropout layers are employed. 
For \tinyimagenet{}, the stride of the first convolutional layer of \sresnet{-18} is set to $2$ to handle the image size~\citenew{huang2017snapshot}. 
In all the models, batch normalization~\citenew{ioffe2015batch} (with no learnable parameters) and \srelu{} nonlinearity are used. Only for the floating point networks (\ie, \acrshort{REF}), we keep the learnable parameters for batch norm. Standard data augmentation (\ie, random crop and horizontal flip) is used.

For both of our \gls{MD} variants, hyperparameters such as the learning rate, learning rate scale, annealing hyperparameter $\beta$ and its schedule are crossvalidated from the range reported in~\tabref{tab:finetuning} and the chosen parameters are given in the~\tabref{tab:hyper1}, \tabref{tab:hyper2} and \tabref{tab:hyper3}. 
To generate the plots, we used the publicly available codes of \gls{BC}\footnote{\url{https://github.com/itayhubara/BinaryNet.pytorch}}, \gls{PQ}\footnote{\url{https://github.com/allenbai01/ProxQuant}} and \gls{PMF}\footnote{\url{https://github.com/tajanthan/pmf}}.

All methods are trained from a random initialization and the model with the best validation accuracy is chosen for each method. 
Note that, in \gls{MD}, even though we use an increasing schedule for $\beta$ to enforce a discrete solution, the chosen network may not be fully-quantized (as the best model could be obtained in an early stage of training). 
Therefore, simple $\argmax{}$ rounding is applied to ensure that the network is fully-quantized.
\begin{table*}[]
\centering
\small
\begin{tabular}{l|cccccccc}
\toprule

\multirow{2}{*}{} & \multicolumn{8}{c}{\vcifar{-10} with \vresnet{-18}} \\
 & \acrshort{MD}-$\softmax$ & \acrshort{MD}-$\tanh$ & \acrshort{MD}-$\softmax$-\acrshort{S} & \acrshort{MD}-$\tanh$-\acrshort{S} & \acrshort{PMF}* & \acrshort{GD}-$\tanh$ & \acrshort{BC} & \acrshort{PQ} \\

\midrule
learning\_rate       & 0.001                      & 0.001                   & 0.001                        & 0.001                     & 0.001  & 0.001                   & 0.001  & 0.01    \\
lr\_scale            & 0.2                        & 0.3                     & 0.3                          & 0.3                       & 0.3    & 0.3                     & 0.3    & 0.5     \\
beta\_scale          & 1.02                       & 1.01                    & 1.02                         & 1.02                      & 1.1    & 1.1                     & -      & 0.0001  \\
beta\_scale\_interval & 200                        & 100                     & 200                          & 200                       & 1000   & 1000                    & -      & -       \\

\midrule

\multirow{2}{*}{} & \multicolumn{8}{c}{\vcifar{-100} with \vresnet{-18}} \\
 & \acrshort{MD}-$\softmax$ & \acrshort{MD}-$\tanh$ & \acrshort{MD}-$\softmax$-\acrshort{S} & \acrshort{MD}-$\tanh$-\acrshort{S} & \acrshort{PMF}* & \acrshort{GD}-$\tanh$ & \acrshort{BC} & \acrshort{PQ} \\
 \midrule 
learning\_rate       & 0.001                      & 0.001                   & 0.001                        & 0.001                     & 0.001  & 0.001                   & 0.001  & 0.1     \\
lr\_scale            & 0.2                        & 0.3                     & 0.2                          & 0.2                       & 0.3    & 0.5                     & 0.2    & -       \\
beta\_scale          & 1.05                       & 1.05                    & 1.1                          & 1.2                       & 1.01   & 1.01                    & -      & 0.001   \\
beta\_scale\_interval & 500                        & 500                     & 200                          & 500                       & 100    & 100                     & -      & -       \\

\midrule

\multicolumn{1}{c|}{\multirow{2}{*}{}} & \multicolumn{8}{c}{\vcifar{-10} with \vvgg{-16}} \\
 & \acrshort{MD}-$\softmax$ & \acrshort{MD}-$\tanh$ & \acrshort{MD}-$\softmax$-\acrshort{S} & \acrshort{MD}-$\tanh$-\acrshort{S} & \acrshort{PMF}* & \acrshort{GD}-$\tanh$ & \acrshort{BC} & \acrshort{PQ} \\
 \midrule
learning\_rate       & 0.01                       & 0.001                   & 0.001                        & 0.001                     & 0.001  & 0.001                   & 0.0001 & 0.01    \\
lr\_scale            & 0.2                        & 0.3                     & 0.3                          & 0.2                       & 0.5    & 0.3                     & 0.3    & 0.5     \\
beta\_scale          & 1.05                       & 1.1                     & 1.2                          & 1.2                       & 1.05   & 1.1                     & -      & 0.0001  \\
beta\_scale\_interval & 500                        & 1000                    & 2000                         & 2000                      & 500    & 1000                    & -      & -       \\

\midrule

\multicolumn{1}{c|}{\multirow{2}{*}{}} & \multicolumn{8}{c}{\vcifar{-100} with \vvgg{-16}} \\
 & \acrshort{MD}-$\softmax$ & \acrshort{MD}-$\tanh$ & \acrshort{MD}-$\softmax$-\acrshort{S} & \acrshort{MD}-$\tanh$-\acrshort{S} & \acrshort{PMF}* & \acrshort{GD}-$\tanh$ & \acrshort{BC} & \acrshort{PQ} \\
\midrule
learning\_rate       & 0.001                      & 0.001                   & 0.0001                       & 0.001                     & 0.0001 & 0.001                   & 0.0001 & 0.01    \\
lr\_scale            & 0.3                        & 0.3                     & 0.2                          & 0.5                       & 0.5    & 0.5                     & 0.2    & 0.5     \\
beta\_scale          & 1.01                       & 1.05                    & 1.2                          & 1.05                      & 1.02   & 1.1                     & -      & 0.0001  \\
beta\_scale\_interval & 100                        & 500                     & 500                          & 500                       & 200    & 1000                    & -      & -       \\

\midrule

\multirow{2}{*}{} & \multicolumn{8}{c}{\tinyimagenet{} with \vresnet{-18}} \\
 & \acrshort{MD}-$\softmax$ & \acrshort{MD}-$\tanh$ & \acrshort{MD}-$\softmax$-\acrshort{S} & \acrshort{MD}-$\tanh$-\acrshort{S} & \acrshort{PMF}* & \acrshort{GD}-$\tanh$ & \acrshort{BC} & \acrshort{PQ} \\
 \midrule

learning\_rate       & 0.001                      & 0.001                   & 0.001                        & 0.001                     & 0.001  & 0.001                   & 0.001  & 0.01    \\
lr\_scale            & 0.2                        & 0.5                     & 0.1                          & 0.1                       & 0.5    & 0.5                     & 0.5    & -       \\
beta\_scale   & 1.02                       & 1.2                     & 1.02                         & 1.2                       & 1.01   & 1.01                    & -      & 0.0001  \\
beta\_scale\_interval & 200                        & 2000                    & 100                          & 500                       & 100    & 100                     & -      & -       \\

\bottomrule
\end{tabular}
\vspace{1ex}
    \caption{\em Hyperparameter settings used for the binary quantization experiments. 
	Here, the learning rate is multiplied by {\em lr\_scale} after every 30k iterations and annealing hyperparameter $(\beta)$ is multiplied by {\em beta\_scale} after every {\em beta\_scale\_interval} iterations. We use Adam optimizer with zero weight decay. For \acrshort{PQ}, {\em beta\_scale} denotes regularization rate.
	}
    \label{tab:hyper1}
\end{table*}


\begin{table*}[]
\small
\centering
\begin{tabular}{l|cccccc}
\toprule

\multirow{2}{*}{} & \multicolumn{6}{c}{\vcifar{-10} with \vresnet{-18}} \\
 & \acrshort{REF} (float) & \acrshort{MD}-$\softmax$-\acrshort{S} & \acrshort{MD}-$\tanh$-\acrshort{S} & \acrshort{GD}-$\tanh$ & \acrshort{PQ} & \acrshort{PQ}* \\

\midrule
learning\_rate       & 0.1           & 0.001                        & 0.01                      & 0.01                    & 0.01     & 0.01    \\
lr\_scale            & 0.3           & 0.3                          & 0.2                       & 0.5                     & 0.3      & -       \\
beta\_scale   & -             & 1.05                         & 1.2                       & 1.02                    & 0.0001   & 0.0001  \\
beta\_scale\_interval & -             & 500                          & 1000                      & 500                     & -        & -       \\
weight\_decay        & 0.0001        & 0                            & 0                         & 0                       & 0        & 0.0001  \\

\midrule

\multirow{2}{*}{} & \multicolumn{6}{c}{\vcifar{-100} with \vresnet{-18}} \\
 & \acrshort{REF} (float) & \acrshort{MD}-$\softmax$-\acrshort{S} & \acrshort{MD}-$\tanh$-\acrshort{S} & \acrshort{GD}-$\tanh$ & \acrshort{PQ} & \acrshort{PQ}* \\
 \midrule 
learning\_rate       & 0.1           & 0.001                        & 0.001                     & 0.01                    & 0.01     & 0.001   \\
lr\_scale            & 0.1           & 0.1                          & 0.5                       & 0.5                     & 0.2      & -       \\
beta\_scale   & -             & 1.1                          & 1.1                       & 1.02                    & 0.0001   & 0.0001  \\
beta\_scale\_interval & -             & 100                          & 500                       & 1000                    & -        & -       \\
weight\_decay        & 0.0001        & 0                            & 0                         & 0                       & 0        & 0.0001  \\

\midrule

\multicolumn{1}{c|}{\multirow{2}{*}{}} & \multicolumn{6}{c}{\vcifar{-10} with \vvgg{-16}} \\
 & \acrshort{REF} (float) & \acrshort{MD}-$\softmax$-\acrshort{S} & \acrshort{MD}-$\tanh$-\acrshort{S} & \acrshort{GD}-$\tanh$ & \acrshort{PQ} & \acrshort{PQ}* \\
 \midrule
learning\_rate       & 0.1           & 0.001                        & 0.01                      & 0.01                    & 0.01     & 0.1     \\
lr\_scale            & 0.2           & 0.3                          & 0.3                       & 0.3                     & -        & -       \\
beta\_scale   & -             & 1.05                         & 1.1                       & 1.01                    & 1e-07 & 0.0001  \\
beta\_scale\_interval & -             & 500                          & 1000                      & 500                     & -        & -       \\
weight\_decay        & 0.0001        & 0                            & 0                         & 0                       & 0        & 0.0001  \\

\midrule

\multicolumn{1}{c|}{\multirow{2}{*}{}} & \multicolumn{6}{c}{\vcifar{-100} with \vvgg{-16}} \\
 & \acrshort{REF} (float) & \acrshort{MD}-$\softmax$-\acrshort{S} & \acrshort{MD}-$\tanh$-\acrshort{S} & \acrshort{GD}-$\tanh$ & \acrshort{PQ} & \acrshort{PQ}* \\
 \midrule
learning\_rate       & 0.1           & 0.0001                       & 0.001                     & 0.01                    & 0.01     & 0.0001  \\
lr\_scale            & 0.2           & 0.3                          & 0.5                       & 0.2                     & -        & -       \\
beta\_scale   & -             & 1.05                         & 1.1                       & 1.05                    & 0.0001   & 0.0001  \\
beta\_scale\_interval & -             & 100                          & 500                       & 2000                    & -        & -       \\
weight\_decay        & 0.0001        & 0                            & 0                         & 0                       & 0        & 0.0001  \\

\midrule

\multirow{2}{*}{} & \multicolumn{6}{c}{\tinyimagenet{} with \vresnet{-18}} \\
 & \acrshort{REF} (float) & \acrshort{MD}-$\softmax$-\acrshort{S} & \acrshort{MD}-$\tanh$-\acrshort{S} & \acrshort{GD}-$\tanh$ & \acrshort{PQ} & \acrshort{PQ}* \\
 \midrule

learning\_rate       & 0.1           & 0.001                        & 0.01                      & 0.01                    & 0.01     & 0.01    \\
lr\_scale            & 0.1           & 0.1                          & 0.1                       & 0.5                     & -        & -       \\
beta\_scale   & -             & 1.2                          & 1.2                       & 1.05                    & 0.01     & 0.0001  \\
beta\_scale\_interval & -             & 500                          & 2000                      & 2000                    & -        & -       \\
weight\_decay        & 0.0001        & 0                            & 0                         & 0                       & 0        & 0.0001  \\

\bottomrule
\end{tabular}
\vspace{1ex}
    \caption{\em Hyperparameter settings used for the ternary quantization experiments. 
	Here, the learning rate is multiplied by {\em lr\_scale} after every 30k iterations and annealing hyperparameter $(\beta)$ is multiplied by {\em beta\_scale} after every {\em beta\_scale\_interval} iterations. We use Adam optimizer except for \acrshort{REF} for which \acrshort{SGD} with momentum $0.9$ is used. For \acrshort{PQ}, {\em beta\_scale} denotes regularization rate.
	}
    \label{tab:hyper2}
\end{table*}


\begin{table*}[]
\small
\centering
\begin{tabular}{l|cccc}
\toprule

\multirow{2}{*}{} & \multicolumn{4}{c}{\vcifar{-10} with \smobilenet{-V2}} \\
 & \acrshort{REF} (float) & \acrshort{BC} & \acrshort{MD}-$\softmax$-\acrshort{S} & \acrshort{MD}-$\tanh$-\acrshort{S} \\

\midrule
learning\_rate       & 0.01           & 0.001                        & 0.01                      & 0.01    \\
lr\_scale            & 0.5           & 0.5                          & 0.3                       & 0.2     \\
beta\_scale   & -             & -                         & 1.1                       & 1.2    \\
beta\_scale\_interval & -             & -                          & 1000                      & 2000    \\
weight\_decay        & 0.0001        & 0                            & 0                         & 0       \\

\midrule

\multirow{2}{*}{} & \multicolumn{4}{c}{\vcifar{-100} with \smobilenet{-V2}} \\
 & \acrshort{REF} (float)  & \acrshort{BC} & \acrshort{MD}-$\softmax$-\acrshort{S} & \acrshort{MD}-$\tanh$-\acrshort{S} \\
 \midrule 
learning\_rate       & 0.01           & 0.001                        & 0.01                     & 0.01    \\
lr\_scale            & 0.5           & 0.2                          & 0.1                       & 0.1     \\
beta\_scale   & -             & -                          & 1.1                       & 1.02    \\
beta\_scale\_interval & -             & -                          & 500                       & 100   \\
weight\_decay        & 0.0001        & 0                            & 0                         & 0       \\

\bottomrule
\end{tabular}
\caption{\em Hyperparameter settings used for the binary quantization experiments. 
	Here, the learning rate is multiplied by {\em lr\_scale} after every 30k iterations and annealing hyperparameter $(\beta)$ is multiplied by {\em beta\_scale} after every {\em beta\_scale\_interval} iterations. We use Adam optimizer except for \acrshort{REF} for which \acrshort{SGD} with momentum $0.9$ is used.
	}
    \label{tab:hyper3}
\end{table*}

\begin{table*}[!t]
\small
\centering
\begin{tabular}{l|ccc}
\toprule

\multirow{2}{*}{} & \multicolumn{3}{c}{\imagenet{} with \sresnet{-18}} \\
 & \acrshort{REF} (float) & \mdtanhs{}* & \mdtanhs{} \\

\midrule
base\_learning\_rate       & 2.048           & 2.048                        & 0.0768         \\
warmup\_epochs      & 8           & 8                          & 0           \\
beta\_scale  & -             & 1.02                         & 1.02              \\
beta\_scale\_interval (iterations) & -             & 62                          & 275           \\
batch\_size          & 2048             & 2048                          & 256           \\
weight\_decay        & 3.0517e-05        & 3.0517e-05                            & 3.0517e-05             \\
\bottomrule
\end{tabular}
    \caption{\em Hyperparameter settings used for the binary quantization experiments on \imagenet{} dataset using \sresnet{-18} architecture. Here \mdtanhs{}* is trained from scratch while \mdtanhs{} is finetuned on the pretrained network. We use~\acrshort{SGD} optimizer with momentum $0.875$ and cosine learning rate scheduler for all experiments. For all the experiments, weight decay in batchnorm layers are off. Similar to~\cite{goyal2017accurate}, for experiments with larger batch size we use gradual warmup where learning rate is linearly scaled from small learning rate to the base learning rate. Also, note that training schedule is fixed based on above hyperparameters for ablation study on \imagenet{} dataset.}
    \label{tab:hyper_imagenet}
\end{table*}
\subsection{\imagenet{}}
We use the standard \sresnet{-18} architecture for \imagenet{} experiments where we train for $90$ epochs and $55$ epochs for training from scratch and pretrained network respectively. We perform all \imagenet{} experiments using NVIDIA DGX-1 machine with $8$ Tesla V-100 GPUs for training from scratch and single Tesla V-100 GPU for training from a pretrained network. We provide detailed hyperparameter setup used for our experiments in \tabref{tab:hyper_imagenet}. 
Similar to experiments on the other datasets, to enforce a discrete solution simple rounding based on $\sign$ operation is applied to ensure that the final network is fully-quantized.
The final accuracy is reported based on the $\sign$ operation based discrete model obtained at the end of the final epoch.

\else
\twocolumn[
\aistatstitle{\papertitle{}}

\aistatsauthor{Thalaiyasingam Ajanthan$^{*\dagger}$\\
Australian National University \& Amazon
\And
Kartik Gupta$^*$\\
Australian National University \& Data61, CSIRO
\AND
Philip H. S. Torr\\
University of Oxford
\And
Richard Hartley\\
Australian National University
\And
Puneet K. Dokania$^\ddagger$\\
University of Oxford \& Five AI
}
\aistatsaddress{\\$^*$Equal contribution, $^\dagger$ Work done prior to joining Amazon, $^\ddagger$ Work done prior to joining Five AI}
]
\begin{abstract}
Quantizing large Neural Networks~(\acrshort{NN}) while maintaining the performance is highly desirable for resource-limited devices due to reduced memory and time complexity. 
It is usually formulated as a constrained optimization problem and optimized via a modified version of gradient descent.
In this work, by interpreting the continuous parameters (unconstrained) as the dual of the quantized ones,
we introduce a \acrfull{MD} framework~\citenew{bubeck2015convex} for \acrshort{NN} quantization.
Specifically, we provide conditions on the projections (\ie, mapping from continuous to quantized ones) which would enable us to derive valid mirror maps and in turn the respective \acrshort{MD} updates. 
Furthermore, we present a numerically stable implementation of \acrshort{MD} that requires storing an additional set of auxiliary variables (unconstrained), and show that it is strikingly analogous to the \acrfull{STE} based method which is typically viewed as a ``trick'' to avoid vanishing gradients issue. 
Our experiments on \cifar{-10/100}, \tinyimagenet{}, and \imagenet{} classification datasets with \svgg{-16}, \sresnet{-18}, and \smobilenet{V2} architectures show that our \acrshort{MD} variants yield
state-of-the-art performance. 
\end{abstract}

\section{Introduction}
Despite the success of deep neural networks in various domains, their excessive computational and memory requirements limit their practical usability for real-time applications or in resource-limited devices.
Quantization is a prominent technique for network compression, where the objective is to learn a network while restricting the parameters (and activations) to take values from a small discrete set. 
This leads to a dramatic reduction in memory (a factor of $32$ for binary quantization) and inference time -- as it enables specialized implementation using bit operations. 

\gls{NN} quantization is usually formulated as a constrained optimization problem ${\min_{\bfx\in\calX} f(\bfx)},$ where $f(\cdot)$ denotes the loss function 
by abstracting out the dependency on the dataset 
and $\calX\subset\R^r$ denotes the set of all possible quantized solutions.
Majority of the works in the literature~\citenew{ajanthan2018pmf,hubara2017quantized,yin2018binaryrelax} convert this into an unconstrained problem by introducing auxiliary variables ($\tbfx$) and optimize via (stochastic) gradient descent. 
Specifically, the objective and the update step take the following form:
\begin{equation}\label{eq:hgdintro}
    \min_{\tbfx\in\R^r} f(P(\tbfx))\ ,\quad
    \tbfx^{k+1} = \tbfx^k - \eta
    \left.
    \nabla_{\tbfx}f(P(\tbfx))
    \right|_{\tbfx=\tbfx^k}\ ,\\[-0.5ex]
\end{equation}
where $P:\R^r\to \calX$ is a mapping from the unconstrained space to the quantized space (sometimes called projection) and $\eta>0$ is the learning rate.
In cases where the mapping $P$ is not differentiable, a suitable approximation is employed~\citenew{hubara2017quantized}. 

In this work, by noting that the well-known \gls{MD} algorithm, widely used for online convex optimization~\citenew{bubeck2015convex}, provides a theoretical framework to perform gradient descent in the unconstrained space (dual space, $\R^r$) with gradients computed in the quantized space (primal space, $\calX$), we introduce an \gls{MD} framework for \gls{NN} quantization.
In essence, \gls{MD} extends gradient descent to non-Euclidean spaces where Euclidean projection is replaced with a more general projection defined based on the associated distance metric.
Briefly, the key ingredient of \gls{MD} is a concept called {\em mirror map} which defines both the mapping between primal and dual spaces and the exact form of the projection.
Specifically, in this work, by observing $P$ in~\eqref{eq:hgdintro} as a mapping from dual space to the primal space, we analytically derive corresponding mirror maps under certain conditions on $P$. 
This enables us to derive different variants of the \gls{MD} algorithm useful for \gls{NN} quantization.

Note that, \gls{MD} requires the constrained set to be convex, however, the quantization set is discrete.
Therefore, as discussed later in \secref{sec:mdannq}, to ensure quantized solutions, we employ a monotonically increasing annealing hyperparameter similar to~\cite{ajanthan2018pmf,bai2018proxquant}. 
This translates into {\em time-varying mirror maps}, and for completeness, we theoretically analyze the convergence behaviour of \gls{MD} in this case for the convex setting. 
Furthermore, as \gls{MD} is often found to be numerically unstable~\citenew{hsieh2018mirrored}, we discuss a numerically stable implementation of \gls{MD} by storing an additional set of auxiliary variables. 
This update is strikingly analogous to the popular \gls{STE} based gradient method~\citenew{bai2018proxquant,hubara2017quantized} which is typically viewed as a ``trick'' to avoid vanishing gradients issue but here we show that it is an implementation method for \gls{MD} under certain conditions on the mapping $P$.
We believe this connection sheds some light on the practical effectiveness of \gls{STE}. 

In summary, we make the following contributions:
\begin{itemize}
    \item We introduce an \gls{MD} framework with time-varying mirror maps for \gls{NN} quantization by deriving mirror maps from projections ($P$ in~\eqref{eq:hgdintro}) and present two \acrshort{MD} algorithms for quantization. 
    
    \item Theoretically, we first show that \gls{MD} with time-varying mirror maps converges at the same rate as the standard \gls{MD} in the convex setting. Second, we discuss conditions for the convergence to a discrete solution when a monotonically increasing annealing hyperparameter is employed.
    
    \item For practical usability, we introduce a numerically stable implementation of \gls{MD} and show its connection to the popular \gls{STE} approximation. 
    
    \item With extensive experiments on \cifar{-10/100}, \tinyimagenet{}, and \imagenet{} classification datasets using \svgg{-16}, \sresnet{-18}, and \smobilenet{V2} architectures we demonstrate that our \gls{MD} variants yield state-of-the-art performance.
\end{itemize}

\section{Preliminaries}
Here we provide a brief background on the \gls{MD} algorithm and \gls{NN} quantization. 

\subsection{\acrlong{MD}}\label{sec:mda}
The \acrfull{MD} algorithm was first introduced in~\cite{nemirovsky1983problem} and has extensively been studied in the convex optimization literature ever since.
In this section, we provide a brief overview and refer the interested reader to Chapter~\myref{4} of~\cite{bubeck2015convex}. 
In the context of \gls{MD}, we consider a problem of the form:
\begin{equation}\label{eq:mdobj}
\min_{\bfx\in\calX}\ f(\bfx)\ ,
\end{equation}
where $f:\calX\to\R$ is a convex function and ${\calX\subset\R^r}$ is a compact convex set. 
The main concept of \gls{MD} is to extend gradient descent to a more general non-Euclidean space (Banach space\footnote{A Banach space is a complete normed vector space where the norm is not necessarily derived from an inner product.}), thus overcoming the dependency of gradient descent on the Euclidean geometry.
The motivation for this generalization is that one might be able to exploit the geometry of the space to optimize much more efficiently. One such example is the simplex constrained optimization where \gls{MD} converges at a much faster rate than the standard \gls{PGD}.

To this end, since the gradients lie in the dual space, optimization is performed by first mapping the primal point $\bfx^k\in\calB$ (quantized space, $\calX$) to the dual space $\calB^*$ (unconstrained space, $\R^r$), then performing gradient descent in the dual space, and finally mapping back the resulting point to the primal space $\calB$.
If the new point $\bfx^{k+1}$ lie outside of the constraint set $\calX\subset \calB$, it is projected to the set $\calX$. 
Both the primal/dual mapping and the projection are determined by the {\em mirror map}. Specifically, the gradient of the mirror map defines the mapping from primal to dual and the projection is done via the Bregman divergence of the mirror map.
We first provide the definitions for mirror map and Bregman divergence and then turn to the \gls{MD} updates.
  
\begin{dfn}[Mirror map]\label{dfn:mm}
Let $\calC\subset \R^r$ be a convex open set such that $\calX\subset\bcalC$ ($\bcalC$ denotes the closure of set $\calC$) and $\primal\ne \emptyset$.
Then, $\Phi:\calC\to\R$ is a mirror map if it satisfies:
\vspace{-1ex}
\begin{tight_enumerate}
  \item $\Phi$ is strictly convex and differentiable.
  \item $\nabla\Phi(\calC) = \R^r$, \ie, $\nabla\Phi$ takes all possible values in $\R^r$. 
  \item $\lim_{\bfx\to\partial \calC}\|\nabla\Phi(\bfx)\| = \infty$ ($\partial \calC$ denotes the boundary of $\calC$), \ie, $\nabla\Phi$ diverges on the boundary of $\calC$.
\end{tight_enumerate}
\end{dfn}

\begin{dfn}[Bregman divergence]\label{dfn:bg}
Let $\Phi:\calC\to\R$ be a continuously differentiable, strictly convex function defined on a convex set $\calC$. 
The Bregman divergence associated with $\Phi$ for points $\bfp, \bfq\in\calC$ is the difference between the value of $\Phi$ at point $\bfp$ and the value of the 
first-order Taylor expansion of $\Phi$ around point $\bfq$ evaluated at point $\bfp$, \ie,
\vspace{-1ex} 
\begin{equation}\label{eq:bg}
D_\Phi(\bfp, \bfq) = \Phi(\bfp) - \Phi(\bfq) - \left\langle \nabla \Phi(\bfq), \bfp-\bfq\right\rangle\ .\\[-1ex]
\end{equation}
Notice, $D_\Phi(\bfp, \bfq)\ge 0$ with $D_\Phi(\bfp, \bfp)= 0$, and $D_\Phi(\bfp, \bfq)$ is convex on $\bfp$.
\end{dfn}
%
Now we are ready to provide the mirror descent strategy based on the mirror map $\Phi$. 
Let ${\bfx^0\in\argmin_{\bfx\in\primal}}\, \Phi(\bfx)$ be the initial point. 
Then, for iteration $k\ge0$ and step size $\eta>0$, the update of the \gls{MD} algorithm can be written as:
\vspace{-1ex}
\begin{align}\label{eq:md1}
\nabla\Phi(\bfy^{k+1}) &= \nabla\Phi(\bfx^k) - \eta\,\bfg^k\ , \\\nonumber
\bfx^{k+1} &= \amin{\bfx\in\primal}\, D_{\Phi}(\bfx, \bfy^{k+1})\ ,\\[-4.5ex]\nonumber
\end{align}
where $\bfg^k\in\partial f(\bfx^k)$ and $\bfy^{k+1}\in\calC$. Note that, in~\eqref{eq:md1}, the gradient $\bfg^k$ is computed at $\bfx^k\in\primal$ (solution space) but the gradient descent is performed in $\R^r$ (unconstrained dual space).
Moreover, by simple algebraic manipulation, it is easy to show that the above \gls{MD} update~\plaineqref{eq:md1} can be compactly written in a proximal form where the Bregman divergence of the mirror map becomes the proximal term~\citenew{beck2003mirror}:
\vspace{-1ex}
\begin{align}\label{eq:mdprox}
\bfx^{k+1} &= \amin{\bfx\in\primal}\, \langle \eta\,\bfg^k, \bfx\rangle + D_{\Phi}(\bfx, \bfx^k)\ .
\end{align}
%
Note, if $\Phi(\bfx) = \frac{1}{2}\norm{\bfx}_2^2$, then $D_{\Phi}(\bfx,\bfx^k) = \frac{1}{2}\norm{\bfx - \bfx^k}_2^2$, which when plugged back to the above problem and optimized for $\bfx$, leads to exactly the same update rule as that of \gls{PGD}. However, \gls{MD} allows us to choose various forms of $\Phi$ depending on the problem at hand.

\subsection{Neural Network Quantization}\label{sec:nnq}
\acrfull{NN} quantization amounts to training networks with parameters (and activations) restricted to a small discrete set representing the quantization levels. 
%
Here we discuss how one can formulate parameter quantization as a constrained optimization problem and activation quantization can be similarly formulated.
 
\paragraph{Parameter Space Formulation.}
Given a dataset $\calD=\{\bfx_i, \bfy_i\}_{i=1}^n$, parameter quantization can be written as:
\vspace{-2ex}
\begin{equation}\label{eq:dnnobj}
\min_{\bfw\in \calQ^m} L(\bfw;\calD) := \frac{1}{n} \sum_{i=1}^n
\ell(\bfw;(\bfx_i,\bfy_i))\ .
\vspace{-1ex} 
\end{equation}
Here, $\ell(\cdot)$ denotes the input-output mapping composed with a standard loss function (\eg, cross-entropy loss), $\bfw$ is the $m$ dimensional parameter vector, and $\calQ$ with $|\calQ|=d$ is a predefined discrete set representing quantization levels (\eg, $\calQ=\{-1,1\}$ or $\calQ=\{-1,0,1\}$). 

The approaches that directly optimize in the parameter space include \gls{BC}~\citenew{courbariaux2015binaryconnect} and its variants~\citenew{hubara2017quantized,rastegari2016xnor}, where the constraint set is discrete.
In contrast, recent approaches~\citenew{bai2018proxquant,yin2018binaryrelax} relax this constraint set to be its convex hull: 
\vspace{-1ex}
\begin{equation}
\conv(\calQ^m) = [q_{\min}, q_{\max}]^m\ ,
\end{equation}
where $q_{\min}$ and $q_{\max}$ represent the minimum and maximum quantization levels, respectively.
In this case, a quantized solution is obtained by gradually increasing an annealing hyperparameter.

\paragraph{Lifted Probability Space Formulation.}
Another formulation is to treat \gls{NN} quantization as a discrete labelling problem based on the \gls{MRF} perspective~\citenew{ajanthan2018pmf}.
Here, the equivalent relaxed optimization problem corresponding to~\eqref{eq:dnnobj} can be written as:
\vspace{-1ex}
\begin{align}
\label{eq:simobj}
\min_{\bfu \in\calS} L(\bfu\bfq;\calD)
 := \frac{1}{n}
 \sum_{i=1}^n \ell(\bfu\bfq;(\bfx_i, \bfy_i))\ ,
\end{align}
where $\bfq$ is the vector of quantization levels with $\bfw = \bfu \bfq$ and the set $\calS$ takes the following form:
\vspace{-1ex}
\begin{equation}
\calS = \left\{\begin{array}{l|l}
\multirow{2}{*}{$\bfu$} & \sum_{\lambda} u_{j:\lambda} = 1, \quad\forall\,j\\
&u_{j:\lambda} \ge 0,\ \ \ \quad\quad\forall\,j, \lambda \end{array} \right\}\ .
\end{equation}
We can interpret the value $u_{j:\lambda}$ as the probability of assigning the discrete label $\lambda$ to the weight $w_j$.
Therefore~\eqref{eq:simobj} can be interpreted as optimizing the probability of each parameter taking a discrete label.

\SKIP{
Here we review two constrained optimization formulations for \gls{NN} quantization: 1) directly constrain each parameter to be in the discrete set; and 2) optimize the probability of each parameter taking a label from the set of quantization levels.
 
\subsubsection{Parameter Space Formulation}\label{sec:w}
Given a dataset $\calD=\{\bfx_i, \bfy_i\}_{i=1}^n$, \gls{NN} quantization can be written as:
\vspace{-2ex}
\begin{equation}\label{eq:dnnobj}
\min_{\bfw\in \calQ^m} L(\bfw;\calD) := \frac{1}{n} \sum_{i=1}^n
\ell(\bfw;(\bfx_i,\bfy_i))\ .
\vspace{-1ex} 
\end{equation}
Here, $\ell(\cdot)$ denotes the input-output mapping composed with a standard loss function (\eg, cross-entropy loss), $\bfw$ is the $m$ dimensional parameter vector, and $\calQ$ with $|\calQ|=d$ is a predefined discrete set representing quantization levels (\eg, $\calQ=\{-1,1\}$ or $\calQ=\{-1,0,1\}$). 

The approaches that directly optimize in the parameter space include \gls{BC}~\citenew{courbariaux2015binaryconnect} and its variants~\citenew{hubara2017quantized,rastegari2016xnor}, where the constraint set is discrete.
In contrast, recent approaches~\citenew{bai2018proxquant,yin2018binaryrelax} relax this constraint set to be its convex hull: 
\vspace{-1ex}
\begin{equation}
\conv(\calQ^m) = [q_{\min}, q_{\max}]^m\ ,
\end{equation}
where $q_{\min}$ and $q_{\max}$ represent the minimum and maximum quantization levels, respectively.
In this case, a quantized solution is obtained by gradually increasing an annealing hyperparameter.

\subsubsection{Lifted Probability Space Formulation}\label{sec:u}
Another formulation is based on the \gls{MRF} perspective to \gls{NN} quantization recently studied in~\citenew{ajanthan2018pmf}.
It treats~\eqref{eq:dnnobj} as a {\em discrete labelling problem} and introduces indicator variables $u_{j:\lambda}\in\{0,1\}$ for each parameter $w_j$ where $j\in\allweights$ such that $u_{j:\lambda}=1$ if and only if $w_j = \lambda \in \calQ$.
For convenience, by denoting the vector of quantization levels as $\bfq$, a parameter vector ${\bf w} \in \calQ^m$ can be written in a matrix vector product as $\bfw = \bfu\bfq$ where\footnote{To simplify the notation, we denote $\bfu$ as a matrix but it can be thought of as an $md$ dimensional vector obtained by flattening the matrix $\bfu$.}:
\vspace{-2ex}
\begin{align}\label{eq:indicv}
\bfu \in \calV = \left\{\begin{array}{l|l}
\multirow{2}{*}{$\bfu$} & \sum_{\lambda} u_{j:\lambda} = 1,
\quad\forall\,j\\ 
&u_{j:\lambda} \in \{0,1\}, \quad\forall\,j, \lambda
\end{array} \right\}\ .\\[-4.5ex]\nonumber
\end{align}

Now, similar to the relaxation in the parameter space, one can relax the binary constraint in $\calV$ to form its convex hull:
\vspace{-1ex}
\begin{equation}
\calS = \conv(\calV) = \left\{\begin{array}{l|l}
\multirow{2}{*}{$\bfu$} & \sum_{\lambda} u_{j:\lambda} = 1, \quad\forall\,j\\
&u_{j:\lambda} \ge 0,\ \ \ \quad\quad\forall\,j, \lambda \end{array} \right\}\ .
\end{equation}
We can interpret the value $u_{j:\lambda}$ as the probability of assigning the discrete label $\lambda$ to the weight $w_j$.
This relaxed optimization can then be written as:
\vspace{-1ex}
\begin{align}
\label{eq:simobj}
\min_{\bfu \in\calS} L(\bfu\bfq;\calD)
 := \frac{1}{n}
 \sum_{i=1}^n \ell(\bfu\bfq;(\bfx_i, \bfy_i))\ .
\end{align}
Even in this case, a discrete solution $\bfu\in\calV$ can be enforced via an annealing hyperparameter or using rounding schemes.
}

\section{Mirror Descent Framework for Network Quantization}\label{sec:mdannq}
Before introducing the \gls{MD} formulation, we first write \gls{NN} quantization as a single objective unifying~\twoplaineqref{eq:dnnobj}{eq:simobj} as:
\vspace{-1ex}
\begin{equation}\label{eq:comobj}
\min_{\bfx\in\calX}\ f(\bfx)\ ,
\end{equation}
where $f(\cdot)$ denotes the loss function by abstracting out the dependency on the dataset $\calD$, and $\calX$ denotes the constraint set.
As discussed in~\secref{sec:nnq}, many recent \gls{NN} quantization methods 
optimize over the convex hull of the constraint set.
Following this, we consider  
the solution space $\calX$ in~\eqref{eq:comobj} to be convex and compact.
%

\begin{table*}[t]
    \centering
    \begin{tabular}{l|c|l|l}
        \toprule
        Projection ($P_{\beta_k}$) & Space & Mirror Map ($\Phi_{\beta_k}$) & Update Step \\
        \midrule
        $\tanh(\beta_k\tw)$ & $\bfw$ & $\begin{array} {ll} \Phi_{\beta_k}(w) &= \frac{1}{2\beta_k}\big[(1+w)\log(1+w) \\ & \qquad + (1-w)\log(1-w)\big] \end{array}$  & $w^{k+1} = \frac{\frac{1+w^k}{1-w^k}\exp(-2\beta_k\eta g^k) - 1}{\frac{1+w^k}{1-w^k}\exp(-2\beta_k\eta g^k) + 1}$ \\
        $\softmax(\beta_k \tbfu)$ & $\bfu$ & $\Phi_{\beta_k}(\bfu) = \frac{1}{\beta_k}\left[\sum_{\lambda\in\calQ}u_{\lambda}\log(u_{\lambda}) - u_{\lambda}\right]$ & $u^{k+1}_{\lambda} = \frac{u_{\lambda}^k\,\exp(-\beta_k\eta g^{k}_{\lambda})}{\sum_{\mu  \in \calQ} \; 
u_{\mu}^k\,\exp(-\beta_k\eta g^{k}_{\mu})}\quad \forall\, \lambda\in \calQ$ \\
        \bottomrule
    \end{tabular}
    \caption{\em Example projections, corresponding mirror maps, and update steps obtained using \thmref{thm:mdaproj}.
    Here, $k$ is the iteration index, $\eta > 0$ is the learning rate, $g^k$ is the gradient of $f$ computed in the primal space, $\beta_k\ge 1$ is the annealing hyperparameter, and we assume $m=1$ without loss of generality.
    Notice the obtained mirror maps vary at each iteration due to $\beta_k$ and the $\softmax$ update resembles the popular \gls{EDA}~\citenew{beck2003mirror}. 
    }
    \label{tab:eg_projns}
    \vspace{-2ex}
\end{table*}

To employ \gls{MD}, we need to choose a mirror map (refer~\dfnref{dfn:mm}) suitable for the problem at hand. 
In fact, as discussed in~\secref{sec:mda}, mirror map is the core component of an \gls{MD} algorithm which determines the effectiveness of the resulting \gls{MD} updates.
However, there is no straightforward approach to obtain a mirror map for a given constrained optimization problem, except in certain special cases.
 
To this end, we observe that the usual approach to optimize the above constrained problem is via a version of projected gradient descent, where the projection is the mapping from the unconstrained auxiliary variables (full-precision) to the quantized space $\calX$.
Now, noting the analogy between the purpose of the projection operator and the mirror maps in the \gls{MD} formulation, 
we intend to derive the mirror map analogous to a given projection.
Precisely, we prove that if the {\em projection is strictly monotone} (and hence invertible), a valid mirror map can be derived from the projection itself.
Even though this does not necessarily extend the theory of \gls{MD},
 this derivation is valuable as it connects existing \gls{PGD} type algorithms to their corresponding \gls{MD} variants. 
For completeness, we state it as a theorem for the case $\calX\subset \R$ and the multidimensional case can be proved with an additional assumption that the vector field $\iproj(\bfx)$ is conservative. 
\SKIP{
\begin{thm}\label{thm:mdaproj}
Let $\calX$ be a compact convex set and $P:\R \to \calC$ be an invertible function where $\calC\subset\R$ is a convex open set such that $\calX=\bcalC$ ($\bcalC$ denotes the closure of $\calC$).
Now, if
\begin{tight_enumerate}
  \item $P$ is strictly monotonically increasing.
  \item $\lim_{x\to\partial \calC}\|\iproj(x)\| = \infty$ ($\partial \calC$ denotes the boundary of $\calC$).
\end{tight_enumerate}
Then, $\Phi(x) = \int_{x_0}^x \iproj(y) dy$ is a valid mirror map.
\end{thm}
}
\begin{thm}\label{thm:mdaproj}
Let $\calC$ be a finite open interval and ${P:\R \to \calC}$ be a strictly monotonically increasing continuous function. 
Then, ${\Phi(x) = \int_{x_0}^x \iproj(y) dy}$ is a valid mirror map. 
\end{thm}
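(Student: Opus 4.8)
The plan is to verify directly the three defining properties of a mirror map in \dfnref{dfn:mm} (with $r=1$), using only elementary real analysis. First I would record the structural facts. Since $P:\R\to\calC$ is continuous and strictly increasing, its image is an open interval, which we identify with $\calC$; hence $P$ is a bijection onto $\calC$, and $\iproj:\calC\to\R$ is well defined, strictly increasing, and continuous (the inverse of a strictly monotone continuous function is again strictly monotone and continuous). Because $\iproj$ is continuous on $\calC$, it is integrable on every compact subinterval of $\calC$, so $\Phi(x)=\int_{x_0}^x\iproj(y)\,dy$ is a well-defined, finite-valued function on $\calC$.

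Next, differentiability and strict convexity. By the fundamental theorem of calculus, $\Phi$ is differentiable on $\calC$ with $\Phi'(x)=\iproj(x)$. Since $\Phi'=\iproj$ is strictly increasing, $\Phi$ is strictly convex (a differentiable function with strictly increasing derivative is strictly convex, e.g.\ via the mean value theorem), which gives property~(1). For property~(2), note $\nabla\Phi(\calC)=\iproj(\calC)=\R$, because $\iproj$ is the inverse of the bijection $P:\R\to\calC$ and hence maps $\calC$ onto $\R$.

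For property~(3), I would argue that as $x\to\partial\calC$, say $x\to b^-$ where $\calC=(a,b)$, one has $\iproj(x)\to+\infty$: if $\lim_{x\to b^-}\iproj(x)$ were finite and equal to $M$, then $\iproj$ would take no value exceeding $M$, contradicting surjectivity of $\iproj$ onto $\R$; the case $x\to a^+$ is symmetric and gives $\iproj(x)\to-\infty$. Hence $\|\nabla\Phi(x)\|=|\iproj(x)|\to\infty$ on $\partial\calC$. Combining the three properties shows $\Phi$ is a valid mirror map.

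The only point needing care — and the closest thing to an obstacle — is the implicit surjectivity of $P$ onto $\calC$, which is what makes $\iproj$ defined on all of $\calC$ and drives both property~(2) and the boundary blow-up in property~(3); this is precisely where the hypothesis that $\calC$ is a \emph{finite open interval} matching the image of the strictly increasing continuous map $P$ is used. Everything else is a routine application of the fundamental theorem of calculus together with monotonicity.
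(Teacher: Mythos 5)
Your proof is correct and follows essentially the same route as the paper's one-sentence argument: the fundamental theorem of calculus gives $\nabla\Phi(x)=\iproj(x)$, strict monotonicity of $\iproj$ gives strict convexity, and invertibility gives $\nabla\Phi(\calC)=\iproj(\calC)=\R$. You go further than the paper by explicitly deriving the boundary blow-up condition (property~3 of \dfnref{dfn:mm}) from surjectivity of $\iproj$ onto $\R$ --- a condition the paper's proof silently omits --- and by flagging the implicit assumption that $\calC$ coincides with the image of $P$; both are worthwhile completions rather than deviations.
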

\vspace{-2ex}
\begin{proof}
This can be proved by noting that $P$ is invertible, $\nabla\Phi(x) = \iproj(x)$, and $\Phi(x)$ is strictly convex.
\end{proof}
\vspace{-1ex}

\begin{figure}
\begin{center}
\includegraphics[width=\linewidth, trim=4.2cm 8.5cm 10.5cm 4.3cm, clip=true,
page=4]{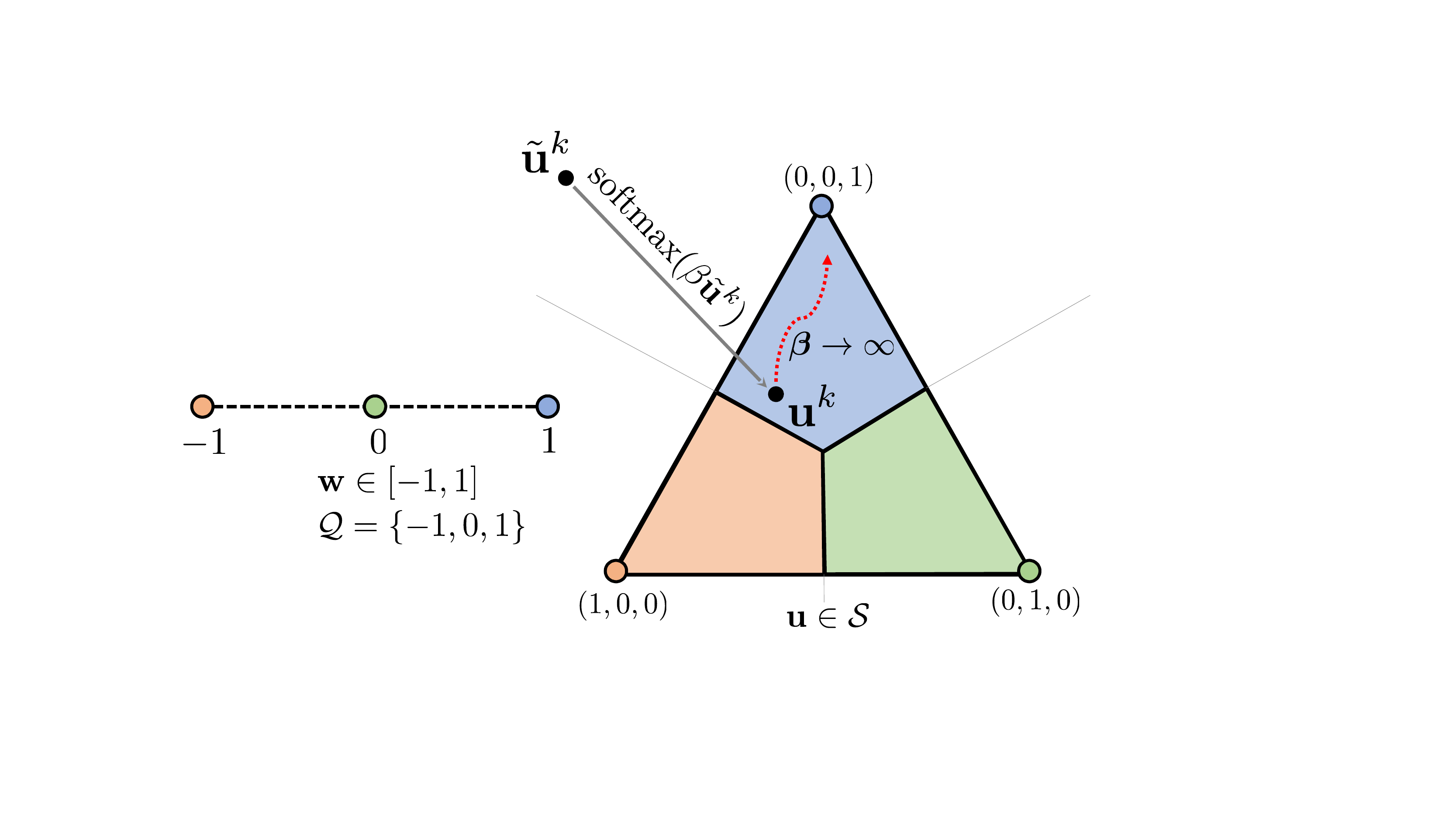}	
\end{center}
 \vspace{-2ex}
\caption{\em \gls{MD} formulation where mirror map is derived from the projection $P$. 
Note, $\bfg^k$ is computed in the primal space ($\calX$) but it is directly used to update the auxiliary variables in the dual space.}
\label{fig:mdaproj}
\vspace{-2ex}
\end{figure}
The \gls{MD} update based on the mirror map derived from a given projection is illustrated in~\figref{fig:mdaproj}.
Note that, to employ \gls{MD} to the problem~\plaineqref{eq:comobj}, in theory, any mirror map satisfying~\dfnref{dfn:mm} whose domain (\ie, its closure) is a superset of the constraint set $\calX$ can be chosen. 
The above theorem provides a method to derive a subset of all applicable mirror maps, where the closure of the domain of mirror maps is exactly equal to the constraint set $\calX$.

We now provide mirror maps and update steps for two different projections ($\tanh$ for $\bfw$-space~(\eqref{eq:dnnobj}) and $\softmax$ for $\bfu$-space~(\eqref{eq:simobj})) useful for \gls{NN} quantization in \tabref{tab:eg_projns}. Given mirror maps (from \thmref{thm:mdaproj}), the \gls{MD} updates are straightforwardly derived based on~\eqref{eq:mdprox} using \acrshort{KKT} conditions~\citenew{boyd2009convex}. For the detailed derivations and pseudocode for \acrshort{MD}-$\tanh$, please refer to Appendix. 
Furthermore, the $\tanh$ projection, its inverse, and the corresponding mirror map are illustrated in~\figref{fig:tanh}, showing monotonicity of the inverse and strict convexity of the derived mirror map.

According to the update steps in \tabref{tab:eg_projns}, our \acrshort{MD} variants corresponding to $\tanh$ and $\softmax$ projections can be performed directly in the primal space.
However, for some projections (\eg, multi-bit quantization), it might be non-trivial to derive the exact form of mirror maps (and the \acrshort{MD} update), nevertheless, the \acrshort{MD} update can be easily implemented by storing an additional set of auxiliary variables. This as discussed in \secref{sec:stablemd} also improves the numerical stability of \gls{MD}.

Note that, to ensure a discrete solution at the end of the training, the projection $P$ is parametrized by a scalar $\beta_k \geq 1$ and it is annealed throughout the optimization. 
This annealing hyperparameter translates into a time varying mirror map (refer to~\tabref{tab:eg_projns}) in our case.
Intuitively, such an adaptive mirror map gradually constrains the solution space $\calX$ to its boundary and in the limit enforces a quantized solution. 

 
  \begin{figure}[t]
    \centering
    \includegraphics[width=0.5\linewidth]{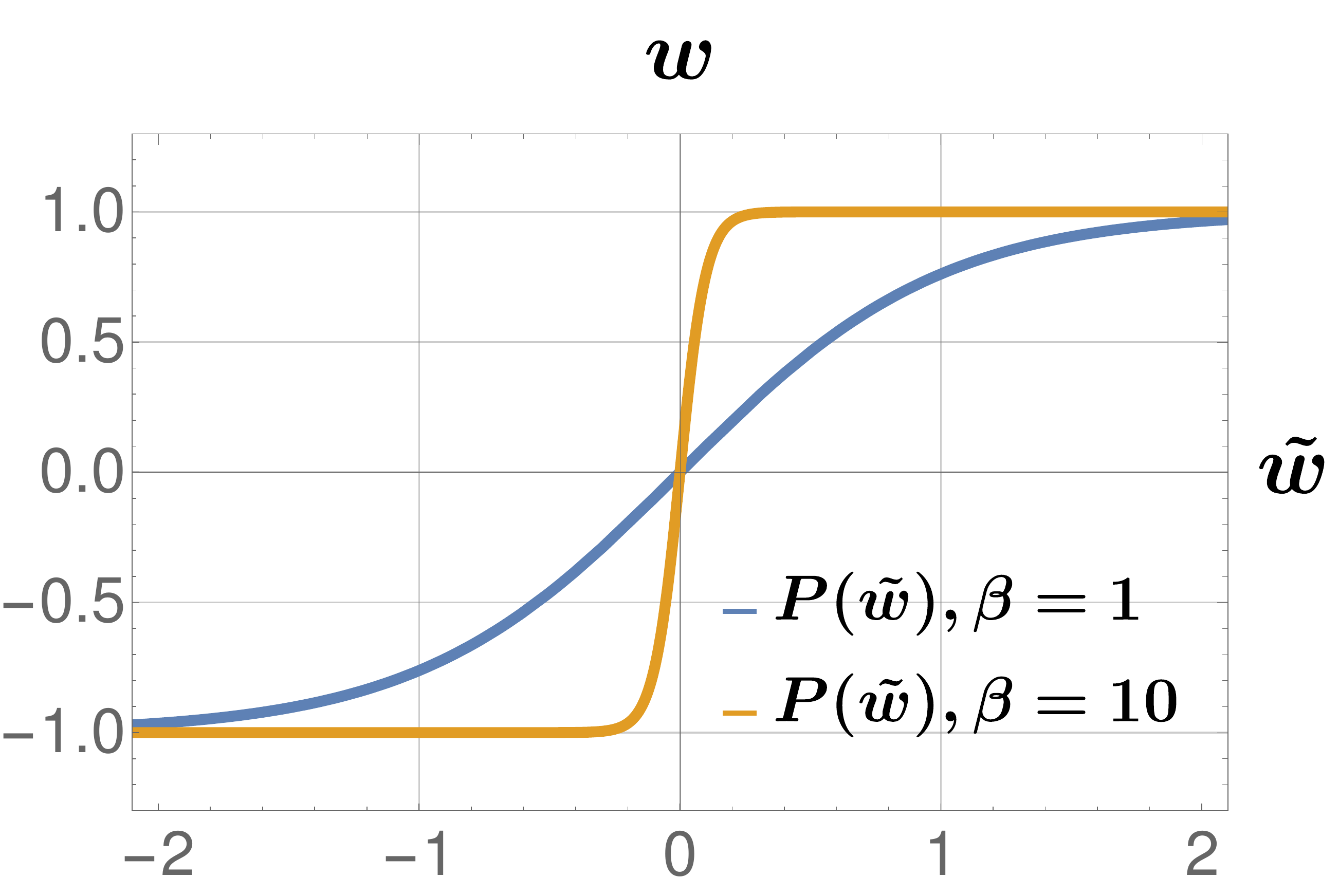}%
    \includegraphics[width=0.5\linewidth]{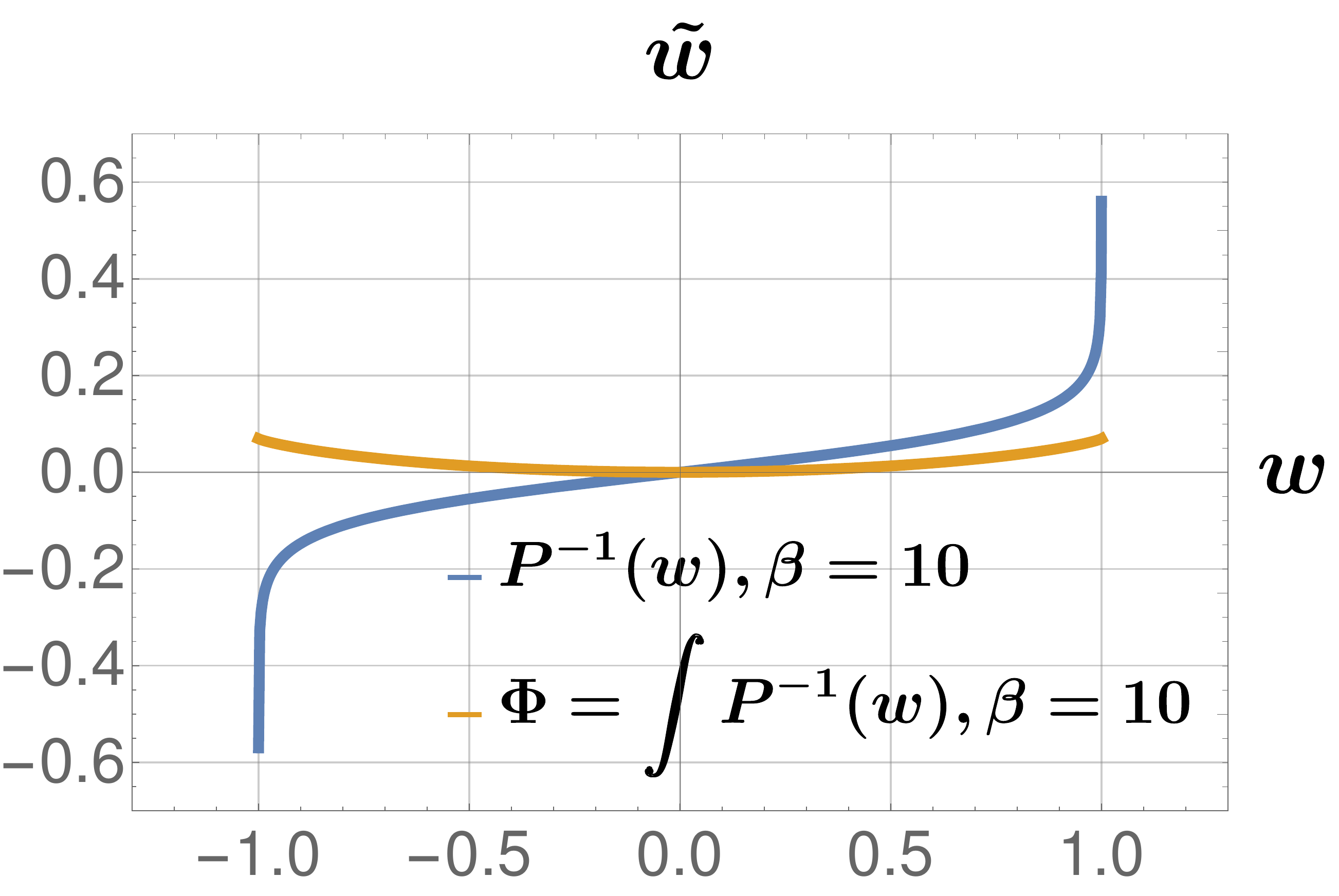}%
    \caption{\em Plots of $\tanh$, its inverse and corresponding mirror map. Note that, the inverse is monotonically increasing and the mirror map is strictly convex. Moreover, when $\beta\to \infty$, $\tanh$ approaches the step function. 
    }
    \label{fig:tanh}
\end{figure}


\SKIP{
We now give some example projections useful for \gls{NN} quantization ($\tanh$ for $\bfw$-space and $\softmax$ for $\bfu$-space) and derive their corresponding mirror maps.
Given mirror maps, the \gls{MD} updates are straightforward based on~\eqref{eq:mdprox}.
Even though we consider differentiable projections, \thmref{thm:mdaproj} does not require the projection to be differentiable. 
For the rest of the section, 
we assume $m=1$, \ie, consider projections that are independent for each ${j\in\allweights}$. 
\begin{figure*}[t]
    \centering
    \begin{subfigure}{0.5\linewidth}
    \includegraphics[width=0.5\linewidth]{images/binarytanh1.pdf}%
    \includegraphics[width=0.5\linewidth]{images/binarytanh2.pdf}%
    \caption{$\tanh$, its inverse, and mirror map}
    \label{fig:tanh}
    \end{subfigure}%
    \begin{subfigure}{0.5\linewidth}
    \includegraphics[width=0.5\linewidth]{images/ternarytanh1.pdf}%
    \includegraphics[width=0.48\linewidth]{images/ternarytanh2.pdf}
    \caption{shifted $\tanh$ and its inverse}
    \label{fig:stanh}
    \end{subfigure}
    \vspace{-1ex}
    \caption{\em Plots of $\tanh$ and shifted $\tanh$ projections, and their inverses corresponding to the $\tanh$ projection.
		Note that, the inverses are monotonically increasing and the mirror map is strictly convex.
		Moreover, when $\beta\to \infty$, the projections approaches their respective hard versions. 
    }
    \vspace{-2ex}
    \label{fig:curves}
\end{figure*}

\begin{exm}[$\bfw$-space, binary, $\tanh$]\label{exm:tanh}
Consider the $\tanh$ function, which projects a real value to the interval $[-1,1]$:
\vspace{-1ex}
\begin{equation}\label{eq:tanh}
w = P(\tw):= \tanh(\beta\tw) = \frac{\exp(2\beta \tw) - 1}{\exp(2\beta \tw) + 1}\ ,
\vspace{-0.5ex}
\end{equation}
where $\beta >0$ is the annealing hyperparameter and when $\beta \to \infty$, $\tanh$ approaches the step function.
 \NOTE{need to change $\beta$ to $\beta^k$}
%
The inverse of the $\tanh$ is:
\vspace{-1ex}
\begin{equation}
\iproj(w) = \frac{1}{\beta}\tanh^{-1}(w) = \frac{1}{2\beta}\log\frac{1+w}{1-w}\ .
\vspace{-1ex}
\end{equation}
Note that, $\iproj$ is monotonically increasing for a fixed $\beta$. 
Correspondingly, the mirror map from~\thmref{thm:mdaproj} can be written as:
\begin{align}\label{eq:tanhmm}
\Phi(w) &= \int \iproj(w) dw \nonumber \\
&= \frac{1}{2\beta}\big[(1+w)\log(1+w) + (1-w)\log(1-w)\big]\ . 
\end{align}
Here, the constant from the integration is ignored. 
It can be easily verified that $\Phi(w)$ is in fact a valid mirror map.
The projection, its inverse and the corresponding mirror map are illustrated in~\figref{fig:tanh}.
Consequently, the resulting \gls{MD} update~\plaineqref{eq:mdprox} takes the following form:
\begin{align}\label{eq:tanhmd}
w^{k+1} &= \amin{w\in(-1,1)}\, \langle \eta\,g^k, w\rangle + D_{\Phi}(w, w^k) \nonumber \\ 
 &= \frac{\frac{1+w^k}{1-w^k}\exp(-2\beta\eta g^k) - 1}{\frac{1+w^k}{1-w^k}\exp(-2\beta\eta g^k) + 1}\ .
 \vspace{-0.5ex} 
\end{align}
The update formula is derived using the \acrshort{KKT} conditions~\citenew{boyd2009convex}.
For the detailed derivation please refer to Appendix~\myref{A.2}. 
A similar derivation can also be performed for the $\sigmoid$ function, where $\bcalC=\calX=[0,1]$.
Note that the $\sign$ function has been used for binary quantization in~\citenew{courbariaux2015binaryconnect} and $\tanh$ can be used as a soft version of $\sign$ function as pointed out by~\citenew{zhang2015bit}.
Mirror map corresponding to $\tanh$ is used for online linear optimization in~\citenew{bubeck2012towards} but here we use it for \gls{NN} quantization. The pseudocode of the above mentioned approach (\acrshort{MD}-$\tanh$) is provided in Algorithm~\myref{1} in the Appendix.
\SKIP{
the Bregman divergence can be written as:
 \begin{equation}
D_{\Phi}(w, v) = \frac{1}{2}\left[w\log\frac{(1+w)(1-v)}{(1-w)(1+v)} +  \log(1-w)(1+w) - \log(1-v)(1-v) \right]\ . 
\end{equation}
}
\end{exm}

\begin{exm}[$\bfu$-space, multi-label, $\softmax$]\label{exm:sm}
Now we consider the $\softmax$ projection used in \gls{PMF}~\citenew{ajanthan2018pmf} to optimize in the lifted probability space. 
In this case, the projection is defined as $P(\tbfu):= \softmax(\beta\tbfu)$ where $P:\R^d \to \calC$ with $\bcalC = \calX = \Delta$. Here $\Delta$ is the $(d-1)$-dimensional probability simplex and $|\calQ|=d$.
Note that the $\softmax$ projection is not invertible as it is a many-to-one mapping.
In particular, it is invariant to translation, \ie,
\vspace{-1ex}
\begin{align}
\bfu=\softmax(\tbfu + c\bfone) = \softmax(\tbfu)\ , \nonumber \\ \text{where}\quad
u_{\lambda} = \frac{\exp(\tu_{\lambda})}{\sum_{\mu\in\calQ}\exp(\tu_{\mu})}\ \nonumber,
\vspace{-0.5ex}
\end{align} 
for any scalar $c\in\R$ ($\bfone$ denotes a vector of all ones).
%
\NOTE{can be removed or change $\beta$ to $\beta^k$}
Therefore, the $\softmax$ projection does not satisfy~\thmref{thm:mdaproj}.
However, one could obtain a solution of the inverse of $\softmax$ as follows: given $\bfu\in\Delta$, find a unique point $\tbfv = \tbfu+c\bfone$, for a particular scalar $c$, such that $\bfu=\softmax(\tbfv)$.
Now, by choosing $c=-\log(\sum_{\mu=\calQ} \exp(\tu_{\mu}))$, $\softmax$ can be written as:
\vspace{-1ex}
\begin{equation}\label{eq:sminv}
\bfu = \softmax(\tbfv)\ ,\qquad \text{where}\quad
u_{\lambda} = \exp(\tv_{\lambda})\ ,\quad\forall\,\lambda\in\calQ\ .
\vspace{-1ex}
\end{equation} 
Now, the inverse of the projection can be written as: 
\vspace{-1ex}
\begin{equation}
\tbfv = \iproj(\bfu)  = \frac{1}{\beta}\softmax^{-1}(\bfu)\ ,\qquad \text{where}\quad
\tv_{\lambda} = \frac{1}{\beta}\log(u_{\lambda})\ ,\qquad\forall\,\lambda\ .
\vspace{-1ex}
\end{equation} 
Indeed, $\log$ is a monotonically increasing function and from~\thmref{thm:mdaproj}, by summing the integrals, the mirror map can be written as:
\vspace{-1ex}
\begin{equation}\label{eq:smmm}
\Phi(\bfu) = \frac{1}{\beta}\left[\sum_{\lambda}u_{\lambda}\log(u_{\lambda}) - u_{\lambda}\right] = -\frac{1}{\beta}H(\bfu) - 1/\beta\ .
\vspace{-1ex} 
\end{equation}
Here, $\sum_\lambda u_{\lambda} = 1$ as $\bfu\in\Delta$, and $H(\bfu)$ is the entropy.
Interestingly, as the mirror map in this case is the negative entropy (up to a constant), the \gls{MD} update leads to the well-known \gls{EGD} (or \gls{EDA})~\citenew{beck2003mirror,bubeck2015convex}.
Consequently, the update takes the following form:
\begin{align}\label{eq:eda}
u^{k+1}_{\lambda} &= \frac{u_{\lambda}^k\,\exp(-\beta\eta g^{k}_{\lambda})}{\sum_{\mu  \in \calQ} \; 
u_{\mu}^k\,\exp(-\beta\eta g^{k}_{\mu})}\quad \forall\, \lambda \ .
\end{align}
The derivation follows the same approach as in the $\tanh$ case above.
It is interesting to note that the \gls{MD} variant of $\softmax$ is equivalent to the well-known \gls{EGD}.
Notice, the authors of \gls{PMF}~\cite{ajanthan2018pmf} hinted that \gls{PMF} is related to \gls{EGD} but here we have clearly showed that the \gls{MD} variant of \gls{PMF} under the above reparametrization~\plaineqref{eq:sminv} is exactly \gls{EGD}.
\end{exm}

\begin{exm}[$\bfw$-space, multi-label, shifted $\tanh$]\label{exm:stanh}
\NOTE{change $\beta$ to $\beta^k$}
Note that, similar to $\softmax$, we wish to extend the $\tanh$ projection beyond binary. 
The idea is to use a function that is an addition of multiple shifted $\tanh$ functions.
\SKIP{
To this end, let us consider a given set of quantization levels as an ordered set $\calQ = \{l_1, \ldots, l_d\}$ with $l_i < l_j$ for all $i < j$.  
Hence, $\bcalC = \calX = [l_1, l_d]$.
Now, we define our shifted $\tanh$ projection $P:\R\to \calC$ as:
\begin{equation}
w = P(\tw) = \frac{1}{d-1}\sum_{i=1}^{d-1} \tanh\left(\beta\left(\tw - \frac{l_i + l_j}{2}\right)\right)\ ,
\end{equation}
where $\beta>0$ and $d= |\calQ|$.
It is easy to see that $w \in \calX$
}
To this end, as an example we consider ternary quantization, with $\calQ=\{-1,0,1\}$ and define our shifted $\tanh$ projection $P:\R\to \calC$ as:
\begin{equation}
w = P(\tw) = \frac{1}{2}\big[\tanh\left(\beta(\tw+0.5)\right) + \tanh\left(\beta(\tw-0.5)\right)\big]\ ,
\end{equation}
where $\beta \geq 1$ and $w = \bcalC = \calX=[-1,1]$.
When $\beta\to \infty$, $P$ approaches a stepwise function with inflection points at $-0.5$ and $0.5$ (here, $\pm0.5$ is chosen  heuristically), meaning $w$ move towards one of the quantization levels in the set $\calQ$.
This behaviour together with its inverse is illustrated in~\figref{fig:stanh}.
Now, one could potentially find the functional form of $\iproj$ and analytically derive the mirror map corresponding to this projection.
Note that, while~\thmref{thm:mdaproj} provides an analytical method to derive mirror maps, in some cases such as the above, the exact form of mirror map and the \gls{MD} update might be nontrivial.
In such cases, as will be shown subsequently, 
the \gls{MD} update can be easily implemented by storing an additional set of auxiliary variables $\tw$.

\end{exm}


}

\subsection{Annealing and Convergence Analysis}

The classical \acrshort{MD} literature studied the convergence behaviour of \acrshort{MD} for the convex setting, and the adaptive mirror maps are considered in online learning~\citenew{mcmahan2017survey}. 
We now prove that, in the convex setting, if the annealing hyperparameter $\beta_k$ is bounded, then \gls{MD} with an adaptive mirror map converges to the optimal value at the same rate of $\calO(1/\sqrt{t})$ as the standard \gls{MD}.
\begin{thm}\label{thm:betamd}
Let $\calX\subset\R^r$ be a convex compact set and $\calC\subset\R^r$ be a convex open set with $\primal\ne \emptyset$ and $\calX\subset\bcalC$ ( $\bcalC$ denotes the closure of $\calC$). 
Let $\Phi:\calC\to\R$ be a mirror map $\rho$-strongly convex\footnote{A convex function $\Phi:\calC\to\R$ is $\rho$-strongly convex with respect to $\norm{\cdot}$ if $\Phi(\bfx)- \Phi(\bfy) \leq \langle \bfg, \bfx-\bfy\rangle\ - \frac{\rho}{2}\norm{x-y}^2, \forall x,y \in \calC \text{ and } g \in \partial \Phi(\bfx)$.} on $\primal$ with respect to $\norm{\cdot}$, 
${R^2 = \sup_{\bfx\in\primal} \Phi(\bfx) - \Phi(\bfx^0)}$ 
where $\bfx^0 = \argmin_{\bfx\in\primal}\Phi(\bfx)$ is the initialization
and $f:\calX\to\R$ be a convex function and $L$-Lipschitz with respect to $\norm{\cdot}$.
Then \gls{MD} with mirror map $\Phi_{\beta_k}(\bfx) = \Phi(\bfx)/\beta_k$ with $1 \le \beta_k \le B$ and $\eta = \frac{R}{L}\sqrt{\frac{2\rho}{Bt}}$ satisfies
\begin{equation}
f\left(\frac{1}{t}\sum_{k=0}^{t-1}\bfx^k\right) - f(\bfx^*) \le RL\sqrt{\frac{2B}{\rho t}}\ ,
\end{equation}
where $\beta_k$ is the annealing hyperparameter, $\eta >0$ is the learning rate, $t$ is the iteration index, and $\bfx^*$ is the optimal solution.
\begin{proof}
The proof is a slight modification to the proof of standard \acrshort{MD} noting that, effectively $\Phi_{\beta_k}$ is $\rho/B$-strongly convex. Please refer to Appendix. 
\end{proof}
\end{thm}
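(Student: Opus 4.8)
The plan is to follow the standard mirror-descent analysis (the proof of Theorem~\myref{4.2} in~\cite{bubeck2015convex}), keeping careful track of the places where the mirror map changes with $k$. First I would record the adaptive \acrshort{MD} update,
\begin{equation*}
\nabla\Phi_{\beta_k}(\bfy^{k+1}) = \nabla\Phi_{\beta_k}(\bfx^k) - \eta\,\bfg^k\ ,\qquad \bfx^{k+1} = \amin{\bfx\in\primal}\, D_{\Phi_{\beta_k}}(\bfx,\bfy^{k+1})\ ,
\end{equation*}
with $\bfg^k\in\partial f(\bfx^k)$, and note three elementary facts: multiplying a mirror map by the positive constant $1/\beta_k$ preserves all the properties in \dfnref{dfn:mm}, so each $\Phi_{\beta_k}$ is itself a valid mirror map and the update is well defined; $D_{\Phi_{\beta_k}} = D_\Phi/\beta_k$; and since $\beta_k\ge1$, $\Phi_{\beta_k}$ is $(\rho/\beta_k)$-strongly convex, in particular $(\rho/B)$-strongly convex, on $\primal$.

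Next, for an arbitrary $\bfx\in\primal$, I would use convexity of $f$ to write $f(\bfx^k)-f(\bfx)\le\langle\bfg^k,\bfx^k-\bfx\rangle$, substitute the update, apply the three-point identity for Bregman divergences, and then the generalized Pythagorean inequality $D_{\Phi_{\beta_k}}(\bfx,\bfy^{k+1})\ge D_{\Phi_{\beta_k}}(\bfx,\bfx^{k+1})+D_{\Phi_{\beta_k}}(\bfx^{k+1},\bfy^{k+1})$ (valid because $\bfx^{k+1}$ is the Bregman projection of $\bfy^{k+1}$ onto $\primal$), obtaining the per-step bound
\begin{equation*}
\eta\big(f(\bfx^k)-f(\bfx)\big)\le D_{\Phi_{\beta_k}}(\bfx,\bfx^k)-D_{\Phi_{\beta_k}}(\bfx,\bfx^{k+1}) + D_{\Phi_{\beta_k}}(\bfx^k,\bfy^{k+1})-D_{\Phi_{\beta_k}}(\bfx^{k+1},\bfy^{k+1})\ .
\end{equation*}

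The one genuinely new ingredient, and what I expect to be the main obstacle, is summing the first difference over $k=0,\ldots,t-1$: because the mirror map varies with $k$, this is no longer a clean telescoping sum. Writing $D_{\Phi_{\beta_k}}=D_\Phi/\beta_k$ and summing by parts (Abel summation) turns it into $\beta_0^{-1}D_\Phi(\bfx,\bfx^0)+\sum_{k=1}^{t-1}(\beta_k^{-1}-\beta_{k-1}^{-1})D_\Phi(\bfx,\bfx^k)-\beta_{t-1}^{-1}D_\Phi(\bfx,\bfx^t)$; using that the annealing schedule is nondecreasing (as assumed throughout the paper), so $\beta_k^{-1}-\beta_{k-1}^{-1}\le0$, together with nonnegativity of Bregman divergences discards all but the first term, and $\beta_0\ge1$ bounds it by $D_\Phi(\bfx,\bfx^0)$. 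For the remaining pair I would expand $D_{\Phi_{\beta_k}}(\bfx^k,\bfy^{k+1})-D_{\Phi_{\beta_k}}(\bfx^{k+1},\bfy^{k+1})$ via the Bregman definition, use $(\rho/\beta_k)$-strong convexity and the update to replace $\nabla\Phi_{\beta_k}(\bfx^k)-\nabla\Phi_{\beta_k}(\bfy^{k+1})$ by $\eta\bfg^k$, apply Cauchy--Schwarz with $\norm{\bfg^k}_*\le L$ ($L$-Lipschitzness), and close with the scalar inequality $az-bz^2\le a^2/(4b)$ to get a bound of $(\eta L)^2\beta_k/(2\rho)\le(\eta L)^2B/(2\rho)$.

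Combining these, dividing by $\eta t$, and applying Jensen's inequality gives $f\big(\frac{1}{t}\sum_{k=0}^{t-1}\bfx^k\big)-f(\bfx)\le D_\Phi(\bfx,\bfx^0)/(\eta t)+\eta BL^2/(2\rho)$; letting $\bfx\to\bfx^*$ and using $D_\Phi(\bfx^*,\bfx^0)\le\Phi(\bfx^*)-\Phi(\bfx^0)\le R^2$ (first-order optimality of $\bfx^0$ on $\primal$, then the definition of $R$) yields $R^2/(\eta t)+\eta BL^2/(2\rho)$, which is minimized by $\eta=\frac{R}{L}\sqrt{\frac{2\rho}{Bt}}$, producing exactly $RL\sqrt{2B/(\rho t)}$. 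As a sanity check, the informal heuristic ``replace $\rho$ by $\rho/B$ in the standard $\calO(1/\sqrt{t})$ bound'' gives the same answer, since every $\Phi_{\beta_k}$ is $(\rho/B)$-strongly convex; the summation-by-parts step above is precisely what makes this rigorous despite the $k$-dependence of the mirror map.
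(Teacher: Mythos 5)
Your proposal is correct and follows essentially the same route as the paper's appendix proof: the same per-step bound via convexity, the three-point identity, and the generalized Pythagorean inequality; the same Abel-summation handling of the non-telescoping Bregman terms using monotonicity of $\beta_k$ and nonnegativity of the divergence; and the same strong-convexity/quadratic bound $(\eta L)^2\beta_k/(2\rho)\le(\eta L)^2B/(2\rho)$ followed by Jensen and the choice of $\eta$. Your explicit use of Cauchy--Schwarz with the dual norm $\norm{\bfg^k}_*\le L$ is in fact slightly more careful than the corresponding line in the paper.
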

%
Theoretical analysis of \gls{MD} for nonconvex, stochastic setting is an active research area~\citenew{zhou2017mirror,zhou2017stochastic} and \gls{MD} has been recently shown to converge in the nonconvex stochastic setting under certain conditions~\citenew{zhang2018convergence}. 
We believe, similar to~\thmref{thm:betamd}, the convergence analysis in~\cite{zhang2018convergence} can be extended to \gls{MD} with adaptive mirror maps.
Nevertheless, \gls{MD} converges in all our experiments while outperforming the baselines in practice.

\paragraph{Ensuring a discrete solution.}
Our original objective~\eqref{eq:comobj} is to obtain a discrete solution via annealing the hyperparameter $\beta_k\to \infty$.
However, according to~\thmref{thm:betamd}, $\beta_k$ is capped at an arbitrarily chosen maximum value $B$.
To this end, we now derive a constraint on the auxiliary variables $\tbfx$ such that the primal variables converge to a discrete solution with a chosen precision $\epsilon > 0$ for a given $B$.

We consider the $\tanh$ projection with $m=1$ without loss of generality and a similar derivation is possible for the $\softmax$ projection as well.
Since $\beta_k \le B$, $\tx$ has to be constrained away from zero to ensure that $\tanh(B\tx)$ is close to the set $\{-1,1\}$ with a desired precision $\epsilon$.
We now state it as a proposition below.
\begin{pro}
For a given $B > 0$ and ${0<\epsilon<1}$, there exists a $\gamma>0$ such that if $|\tx|\ge\gamma$ then ${1-|\tanh(B\tx)|<\epsilon}$. Here $|\cdot|$ denotes the absolute value and $\gamma> \tanh^{-1}(1-\epsilon)/B$.
\end{pro}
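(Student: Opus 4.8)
The plan is to unravel the target inequality into an explicit lower bound on $|\tx|$ using only the monotonicity and oddness of $\tanh$. First I would rewrite $1 - |\tanh(B\tx)| < \epsilon$ equivalently as $|\tanh(B\tx)| > 1 - \epsilon$. Since $\tanh$ is odd, $|\tanh(t)| = \tanh(|t|)$ for every $t \in \R$, and since $B > 0$ we have $|B\tx| = B|\tx|$; hence the inequality is equivalent to $\tanh(B|\tx|) > 1 - \epsilon$.

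Next, because $0 < \epsilon < 1$, the number $1 - \epsilon$ lies in $(0,1) \subset (-1,1)$, which is the range of $\tanh$, so $\tanh^{-1}(1-\epsilon)$ is well-defined and strictly positive. As $\tanh$ is strictly increasing on $\R$, the inequality $\tanh(B|\tx|) > 1 - \epsilon$ holds if and only if $B|\tx| > \tanh^{-1}(1-\epsilon)$, i.e. $|\tx| > \tanh^{-1}(1-\epsilon)/B$.

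It therefore suffices to pick any $\gamma > \tanh^{-1}(1-\epsilon)/B$: such a $\gamma$ is automatically positive (the right-hand side is), and if $|\tx| \ge \gamma$ then $|\tx| > \tanh^{-1}(1-\epsilon)/B$, which by the chain of equivalences above gives $1 - |\tanh(B\tx)| < \epsilon$. There is essentially no obstacle here; the only points requiring a little care are checking that $1-\epsilon$ genuinely lies in the domain of $\tanh^{-1}$ (so the stated threshold makes sense), and keeping the strict versus non-strict inequalities consistent when passing from $|\tx| \ge \gamma$ to the strict bound — which is exactly why $\gamma$ is required to be strictly larger than $\tanh^{-1}(1-\epsilon)/B$ rather than merely equal to it.
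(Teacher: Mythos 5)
Your argument is correct and follows essentially the same route as the paper's proof: rewrite the target as $|\tanh(B\tx)|>1-\epsilon$, use oddness to get $\tanh(B|\tx|)>1-\epsilon$, and invert via monotonicity to obtain $|\tx|>\tanh^{-1}(1-\epsilon)/B$. Your added remarks on $1-\epsilon$ lying in the range of $\tanh$ and on the strict-versus-non-strict inequality are sensible but do not change the substance.
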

\begin{proof}
This is derived via a simple algebraic manipulation of $\tanh$. Please refer to Appendix.
\end{proof}

\SKIP{
\noindent According to the theorem, in our implementation, $\beta_k$ is capped at an arbitrarily chosen maximum value $B$. 
Even though $\beta_k$ is never infinity, experimentally we observed that after certain value of $\beta_k$ increasing it further does not change the solution, perhaps because of the finite machine precision.

\subsubsection{Epsilon Convergence to a Discrete Solution}
Our original objective~\eqref{eq:comobj} is to obtain a discrete solution. We intend to enforce a discrete solution by gradually increasing annealing hyperparameter $\beta_k$. To guarantee that our approach converges to a discrete solution with a chosen precision $\epsilon>0$, we need to ensure that the image of the projection $P$ approaches the set of discrete solutions $\calQ^m$ when $\beta_k\to \infty$. 

Let us consider the $\tanh$ projection with $m=1$ without loss of generality and note that, according to~\thmref{thm:betamd}  $\beta_k \le B$ in practice.
Since $\beta_k$ is bounded in practice, $\tw$ can become arbitrarily close to zero and therefore the image of $\tanh$ will still be the interval $[-1,1]$. 
Hence, to ensure that the image of $\tanh$ approaches the set $\{-1,1\}$ when $\beta_k$ is increased, we need to constrain $\tw$ away from zero. Furthermore, this constraint can be derived given the value of $B$ and the desired precision $\epsilon$.
We now state it as a Proposition below.
\begin{pro}
For a given $B > 0$ and $0<\epsilon<1$, there exists a $\gamma>0$ such that if $|\tx|\ge\gamma$ then ${1-|\tanh(B\tx)|<\epsilon}$. Here $|\cdot|$ denotes the absolute value. 
In addition $\gamma> \tanh^{-1}(1-\epsilon)/B$.
\end{pro}
\begin{proof}
This is derived via a simple algebraic manipulation of $\tanh$. Please refer to Appendix.
\end{proof}
A similar approach can be used to find a constraint on $\tbfu$ in case of $\softmax$ projection to obtain a discrete solution with a chosen precision $\epsilon>0$. 

\subsubsection{Convergence of~\acrshort{MD} in the Nonconvex Setting}
Even though \gls{MD} is originally developed for convex optimization (similar to gradient descent), in this paper we directly apply \gls{MD} to \gls{NN} quantization where the loss is highly nonconvex and gradient estimates are stochastic. 
Nevertheless, \gls{MD} converges in all our experiments while obtaining superior performance compared to the baselines. 
Theoretical analysis of \gls{MD} for nonconvex, stochastic setting is an active research area~\citenew{zhou2017mirror,zhou2017stochastic} and \gls{MD} has been recently shown to converge in the nonconvex stochastic setting under certain conditions~\citenew{zhang2018convergence}. 
We believe, similar to~\thmref{thm:betamd}, the convergence analysis in~\cite{zhang2018convergence} can be extended to \gls{MD} with adaptive mirror maps, which goes beyond the scope of this work.
}

\subsection{Numerically Stable form of \acrshort{MD}}\label{sec:stablemd}
We showed two examples of valid projections, their corresponding mirror maps, and the final \gls{MD} updates in \tabref{tab:eg_projns}. Even though, in theory, these updates can be used directly, they are sometimes numerically unstable due to the operations involving multiple logarithms, exponentials, and divisions~\citenew{hsieh2018mirrored}. 
To this end, we provide a numerically stable way of performing \gls{MD} by storing a set of auxiliary parameters during training.

A careful look at the~\figref{fig:mdaproj} suggests that the \gls{MD} update with the mirror map derived from~\thmref{thm:mdaproj} can be performed by storing auxiliary variables $\tbfx = \iproj(\bfx)$. 
In fact, once the auxiliary variable $\tbfx^{k}$ is updated using gradient $\bfg^k$, it is directly mapped back to the constraint set $\calX$ via the projection.
This is mainly because of the fact that the domain of the mirror maps derived based on the~\thmref{thm:mdaproj} is exactly the same as the constraint set. 
Formally, with this additional set of variables, one can write the \gls{MD} update~\plaineqref{eq:md1} corresponding to the projection $P$ as:
\begin{align}\label{eq:mdhybridgd}
\tbfx^{k+1} &= \tbfx^k - \eta\,\bfg^k\ ,\quad\ \mbox{update in the dual space}\\\nonumber
\bfx^{k+1} &= P(\tbfx^{k+1})\in\calX\ ,\quad\mbox{projection to primal space}
\end{align}
where $\eta >0$ and $\bfg^k\in\partial f(\bfx^k)$. Experimentally we observed these updates to show stable behaviour and performed remarkably well for both the $\tanh$ and $\softmax$. 
We provide the pseudocode of this stable version of \acrshort{MD} in Algorithm~\myref{2} for the $\tanh$ (\acrshort{MD}-$\tanh$-\acrshort{S}) projection. Extending it to other valid projections is trivial.

Note, above updates can be seen as optimizing the function $f(P(\tbfx))$ using gradient descent where the gradient through the projection (\ie, Jacobian) $J_P = \partial P(\tbfx)/ \partial \tbfx$ is replaced with the identity matrix. 
This is exactly the same as the \acrfull{STE} for \gls{NN} quantization (following the nomenclature of~\cite{bai2018proxquant,yin2018binaryrelax}). 
Despite being a crude approximation, \gls{STE} has shown to be highly effective for \gls{NN} quantization with various network architectures and datasets~\citenew{yin2018binaryrelax,zhou2016dorefa}. 
However, a solid understanding of the effectiveness of \gls{STE} is lacking in the literature except for its convergence analysis in certain cases~\citenew{li2017training,yin2019understanding}.
In this work, by showing \gls{STE} based gradient descent as an implementation method of~\gls{MD} under certain conditions on the projection, we provide a justification on the effectiveness of \gls{STE}.
%

\paragraph{\acrlong{MD} \vs \acrlong{PQ}.}
The connection between the dual averaging version of~\gls{MD} and~\gls{STE} was recently hinted in~\gls{PQ}~\citenew{bai2018proxquant}. However, no analysis of whether an analogous mirror map exists to the given projection is provided and their final algorithm is not based on \gls{MD}.

Briefly, \acrshort{PQ} optimizes a objective of the following form: 
\begin{equation}\label{eq:pqobj}
\min_{\bfx\in\R^r} f(\bfx) + \beta R(\bfx)\ ,
\end{equation}
where $f$ is the loss function, the regularizer $R$ is a ``W'' shaped nonconvex function and $\beta$ is an annealing hyperparamter similar to ours.
Notice, even when the loss function $f$ is convex, the above \gls{PQ} objective would be nonconvex and has multiple local minima for a range of values of $\beta$.
Therefore \gls{PQ} is prone to converge to any of these local minima, whereas, our \gls{MD} algorithm (even \acrshort{PGD}) is guaranteed to converge to the global optimum regardless of the value of $\beta$. 

\SKIP{
\begin{figure}[t]
    \centering    
    \begin{subfigure}{0.5\linewidth}
    \includegraphics[width=\linewidth]{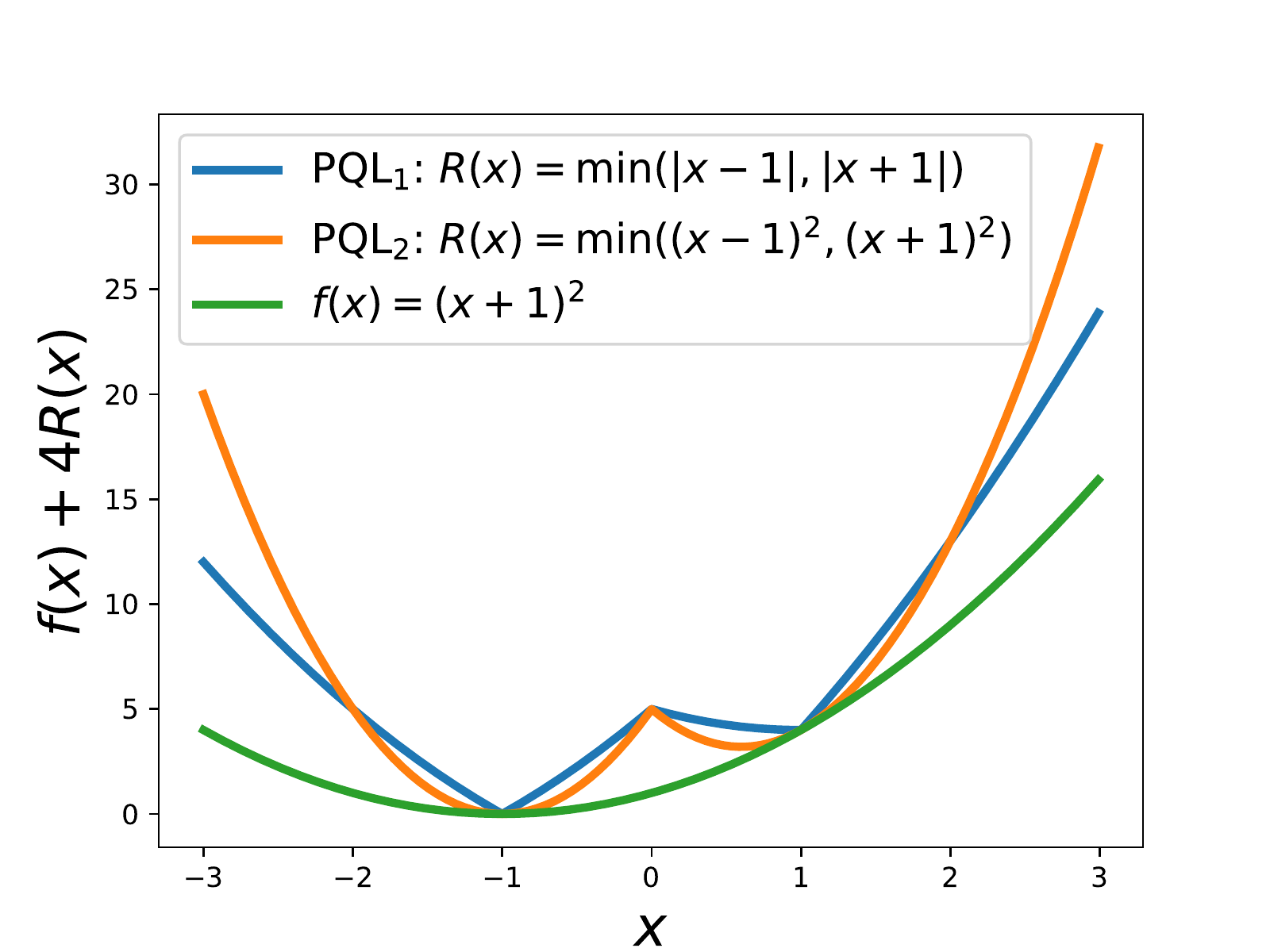}
    \caption{\small \acrshort{PQ} objectives}
    \end{subfigure}%
    \begin{subfigure}{0.5\linewidth}
    \includegraphics[width=\linewidth]{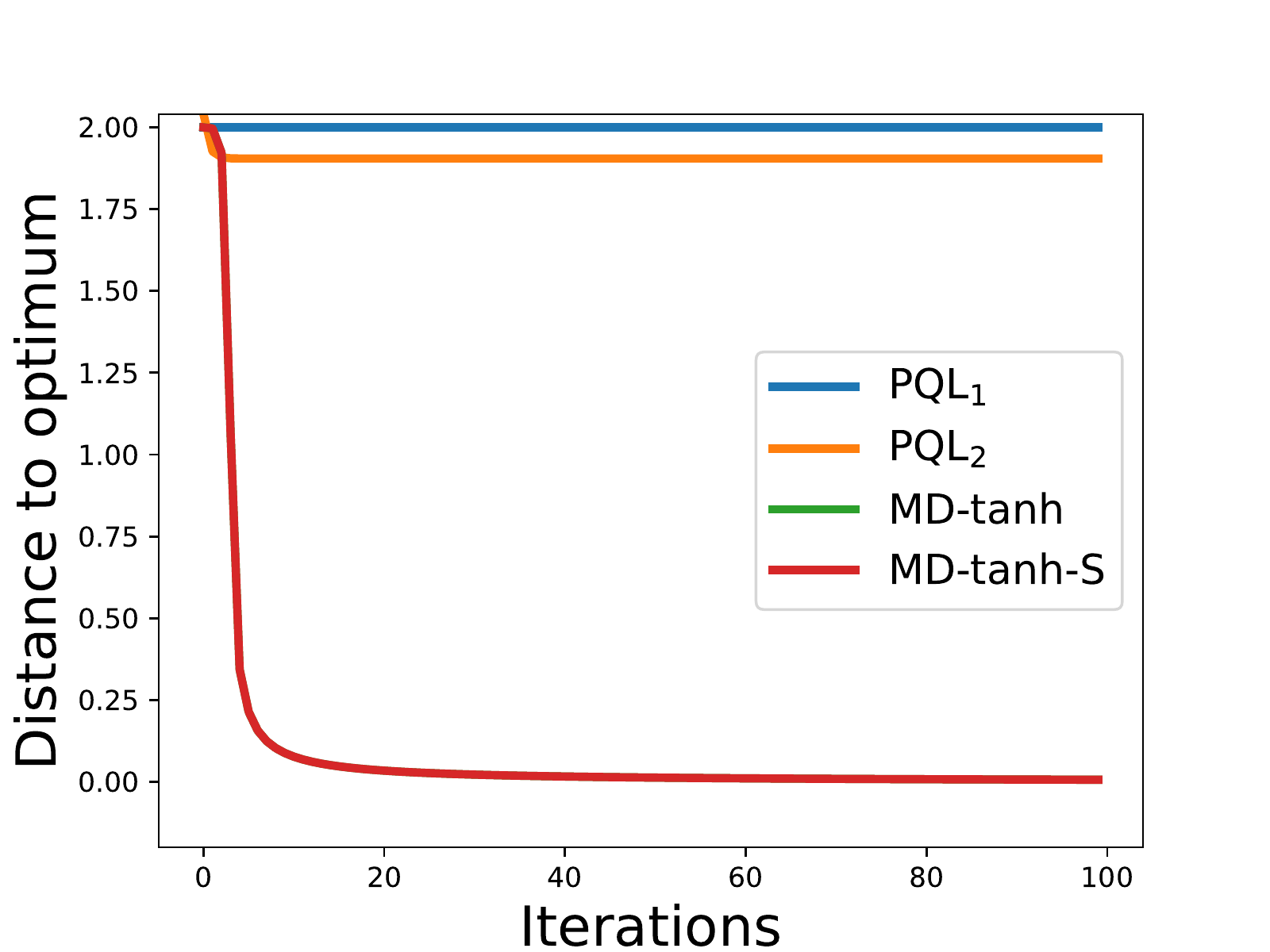}
    \caption{\small Convergence}
    \end{subfigure}%
    \vspace{-1ex}
    \caption{\small\em Convergence of \acrshort{PQ} and \acrshort{MD} when ${f(x) = (x+1)^2}$ where $\calX=[-1,1]$, initialization $\tx^0 = 1.2$ and $\beta=4$ (fixed) for all algorithms.
    While \acrshort{MD} versions behave identically and converge to the optimal solution of $-1$, \acrshort{PQ} versions get stuck at suboptimal (\pql{2} is nondiscrete) solutions.  
    }
    \label{fig:md-pq}
\end{figure}
\subsection{Comparison against~\acrlong{PQ}}
The connection between the dual averaging version of~\gls{MD} and~\gls{STE} was recently hinted in~\gls{PQ}~\citenew{bai2018proxquant}. However, no analysis of whether an analogous mirror map exists to the given projection is provided and their final algorithm is not based on \gls{MD}.
In particular, following our notation, the final update equation of~\acrshort{PQ} can be written as:
\SKIP{
\begin{align}\label{eq:pqfinal}
&\tbfx^{k+1} = \bfx^k - \eta\,\bfg^k\ , \nonumber \\ 
&\mbox{assumes $\bfx^k$ and $\bfg^k$ are in the same space} \nonumber \\
&\bfx^{k+1} = \prox(\tbfx^{k+1})\ , \nonumber \\
&\mbox{$\prox: \R^r\to \R^r$ is the proximal mapping defined in~\citenew{bai2018proxquant},}\nonumber
\end{align}
}
\begin{align}\label{eq:pqfinal}
\tbfx^{k+1} &= \bfx^k - \eta\,\bfg^k\ , \nonumber \\ 
\bfx^{k+1} &= \prox(\tbfx^{k+1})\ , \nonumber
\end{align}
assuming $\bfx^k$ and $\bfg^k$ are in the same space, where $\prox: \R^r\to \R^r$ is the proximal mapping defined in~\citenew{bai2018proxquant}, $\eta >0$, and $\bfg^k\in\partial f(\bfx^k)$. Note that, as opposed to~\gls{MD} (refer to~\eqref{eq:mdhybridgd}),~\acrshort{PQ} assumes the point $\bfx^k$ and gradient $\bfg^k$ are in the same space for the $\tbfx^{k+1}$ update to be valid. 
This would only be true for the Euclidean space. 
However, as discussed in~\secref{sec:mda},~\gls{MD} allows gradient descent to be performed on a more general non-Euclidean space by first mapping the primal point $\bfx^k$ to a point $\tbfx^k$ in the dual space via the mirror map. 
This is the core of \gls{MD}, which allows faster convergence rates in certain cases, and enabled theoretical and practical research on~\gls{MD} for the past three decades.

Despite this fundamental difference, here we show that \gls{PQ} can get stuck at a suboptimal (even nondiscrete) solution even in a simple convex setting. 
To this end, similarly to the convergence proof of \gls{PQ}, we consider the case when the annealing hyperparameter $\beta >0$ (denoted by $\lambda$ in~\cite{bai2018proxquant}) is fixed.
In this case, \gls{PQ} optimizes the following objective:
\vspace{-1.5ex}
\begin{equation}\label{eq:pqobj}
\min_{\bfx\in\R^r} f(\bfx) + \beta R(\bfx)\ ,
\vspace{-0.5ex}
\end{equation}
where the regularizer $R$ is a ``W'' shaped nonconvex function.
To this end, even when the loss function $f$ is convex, the above composite \gls{PQ} objective would be nonconvex and has multiple local minima for a range of values of $\beta$.
Therefore the \gls{PQ} algorithm is prone to converge to any of these local minima (which could be nondiscrete). 
Whereas, our \gls{MD} algorithms are guaranteed to converge to the global optimum in the convex setting regardless of the value of $\beta$. 
This phenomenon is illustrated with a simple example in~\figref{fig:md-pq}.
}

\section{Related Work}

In this work, we mainly consider parameter quantization, which is usually formulated as a constrained problem and optimized using a modified projected gradient descent algorithm, where the methods~\citenew{ajanthan2018pmf,bai2018proxquant,carreira2017model,chen2019metaquant,courbariaux2015binaryconnect,yang2019quantization,yin2018binaryrelax} mainly differ in the constraint set, the projection used, and how backpropagation through the projection is performed.
Among them, \gls{STE} based gradient descent is the most popular method as it enables backpropagation through nondifferentiable projections and it has shown to be highly effective in practice~\citenew{courbariaux2015binaryconnect}.
In fact, the success of this approach lead to various extensions by including additional layerwise scalars~\citenew{rastegari2016xnor}, relaxing the solution space~\citenew{yin2018binaryrelax}, and even to quantizing activations~\citenew{hubara2017quantized}, and/or gradients~\citenew{zhou2016dorefa}. Some recent works~\citenew{leng2018extremely,ye2019progressive} employ \acrfull{ADMM} framework to learn low-bit neural networks. In contrast to these, recently~\cite{helwegen2019latent} proposed \acrfull{BOP} to avoid using ``latent'' real-valued weights during training.
%
Moreover, there are methods focusing on loss aware quantization~\citenew{hou2016loss}, quantization for specialized hardware~\citenew{esser2015energyEfficient}, and quantization based on the variational approach~\citenew{achterhold2018variationalQuantization,louizon2017bayesianCompression,louizos2018relaxed}. Some recent works~\citenew{liu2018bireal,martinez2019training,liu2020reactnet} have also explored architectural modifications beneficial to increase the capacity of binarized neural networks.
We have only provided a brief summary of relevant methods and for a comprehensive survey, we refer the reader to~\cite{guo2018survey}.
 
\section{Experiments}

\begin{table*}[t]
    \centering
    \small    
    \begin{tabular}{ll|c|cc|cc|c}
        \toprule
        &\multirow{2}{*}{Algorithm}   & \multirow{2}{*}{Space} & \multicolumn{2}{c|}{\cifar{-10}} & \multicolumn{2}{c|}{\cifar{-100}} & \tinyimagenet{}\\
         &&  &  \svgg{-16}  & \sresnet{-18} & \svgg{-16}  & \sresnet{-18}  & \sresnet{-18} \\
        \midrule
        &\acrshort{REF} (float) & $\bfw$ & $93.33$ & $94.84$ & $71.50$ & $76.31$ & $58.35$\\
        &\acrshort{BC} & $\bfw$ & $89.04$ & $91.64$ & $59.13$ & $72.14$  & $49.65$\\
        &\acrshort{PQ} & $\bfw$ & $85.41$ & $90.76$ & $39.61$ & $65.13$ & $44.32$ \\
        &\acrshort{PQ}* & $\bfw$ & $90.11$ & $92.32$ & $55.10$ & $68.35$ & $49.97$ \\
        &\acrshort{PMF} & $\bfu$ & $90.51$ & $92.73$ & $61.52$ & $71.85$ & $51.00$ \\
        &\acrshort{PMF}* & $\bfu$ & $91.40$ & $93.24$ & $\textbf{64.71}$ & $71.56$ & $51.52$ \\

        & \acrshort{GD}-$\tanh$ & $\bfw$ & $91.47$ & $93.27$ & $60.67$ & $71.46$ & $51.43$  \\
        
    \midrule
    \parbox[t]{2mm}{\multirow{4}{*}{\rotatebox[origin=c]{90}{Ours}}}
      &\acrshort{MD}-$\softmax$ & $\bfu$ & $90.47$ & $91.28$  & $56.25$ & $68.49$ & $46.52$ \\
      &\acrshort{MD}-$\tanh$ & $\bfw$ & $\textbf{91.64}$ & $92.97$  & $61.31$ & $72.13$ & $\textbf{54.62}$ \\      
      &\acrshort{MD}-$\softmax$-\acrshort{S} & $\bfu$ & $91.30$ & $\textbf{93.28}$  & $63.97$ & $\textbf{72.18}$ & $51.81$ \\
      &\acrshort{MD}-$\tanh$-\acrshort{S} & $\bfw$ & $91.53$ & $93.18$  & $61.69$ & $\textbf{72.18}$ & $52.32$ \\
        \bottomrule
    \end{tabular}
    \caption{\em Classification accuracies on the test set where all the parameters are binarized. \acrshort{PQ}* denotes performance with biases, fully-connected layers, and shortcut layers in float (original \acrshort{PQ} setup) whereas \acrshort{PQ} represents full quantization. 
    \acrshort{PMF}* denotes the performance of \acrshort{PMF} after crossvalidation and the original results from the paper are denoted as \acrshort{PMF}.
    Note our \acrshort{MD} variants obtained accuracies virtually the same as the best performing method and it outperformed the best method by a large margin in much harder \tinyimagenet{} dataset.}
    \label{tab:res_binary}
\end{table*}

Due to the popularity of binary neural networks~\citenew{courbariaux2015binaryconnect,rastegari2016xnor}, we mainly consider binary quantization and set the quantization levels as $\calQ=\{-1,1\}$.
We perform two sets of extensive experiments for comparisons against the state-of-the-art \acrshort{NN} binarization methods. 

First, we perform {\em full binarization} experiments similar to~\cite{ajanthan2018pmf} where all learnable parameters are binarized and activations are kept floating point on small scale datasets such as \cifar{-10/100} and \tinyimagenet{}\footnote{\url{https://tiny-imagenet.herokuapp.com/}} with \svgg{-16}, \sresnet{-18} and \smobilenet{V2} architectures.
Note, this is more difficult than the standard setup used in the quantization literature where the first and last layers are usually kept in high precision to enhance the performance. 
To ensure fair comparison, we ran the comparable baselines in this setup and performed extensive cross-validation, \eg, {\em up to $3\%$ improvement for \acrshort{PMF}}~\citenew{ajanthan2018pmf} due to this.
In summary, our results indicate that the binary networks obtained by the \gls{MD} variants outperform comparable baselines yielding state-of-the-art performance. 

Secondly, we evaluated our \mdtanhs{} approach on large scale \imagenet{} dataset with \sresnet{-18} in two different setups: 1) only parameters are binarized; and 2) both activations and parameters are binarized. We follow a similar experimental setup for our approach as has been used in baselines. For both the setups, our \mdtanhs{} variant outperforms all the recent baselines {\em even without requiring layerwise scalars} and sets new state-of-the-art for binarized networks on \imagenet{}.

For all the experiments, standard multi-class cross-entropy loss is used unless otherwise mentioned. We crossvalidate the hyperparameters such as learning rate, learning rate scale, rate of increase of annealing hyperparameter $\beta$, and their respective schedules.
We provide the hyperparameter tuning search space and the final hyperparameters in Appendix~\myref{B}.
Our algorithm is implemented in PyTorch~\citenew{paszke2017automatic} and the experiments are performed on NVIDIA Tesla-P100 GPUs. 
Our PyTorch code is available online\footnote{\url{https://github.com/kartikgupta-at-anu/md-bnn}}.

\begin{figure*}
    \centering
    \begin{subfigure}{0.25\linewidth}
    \includegraphics[width=0.99\linewidth]{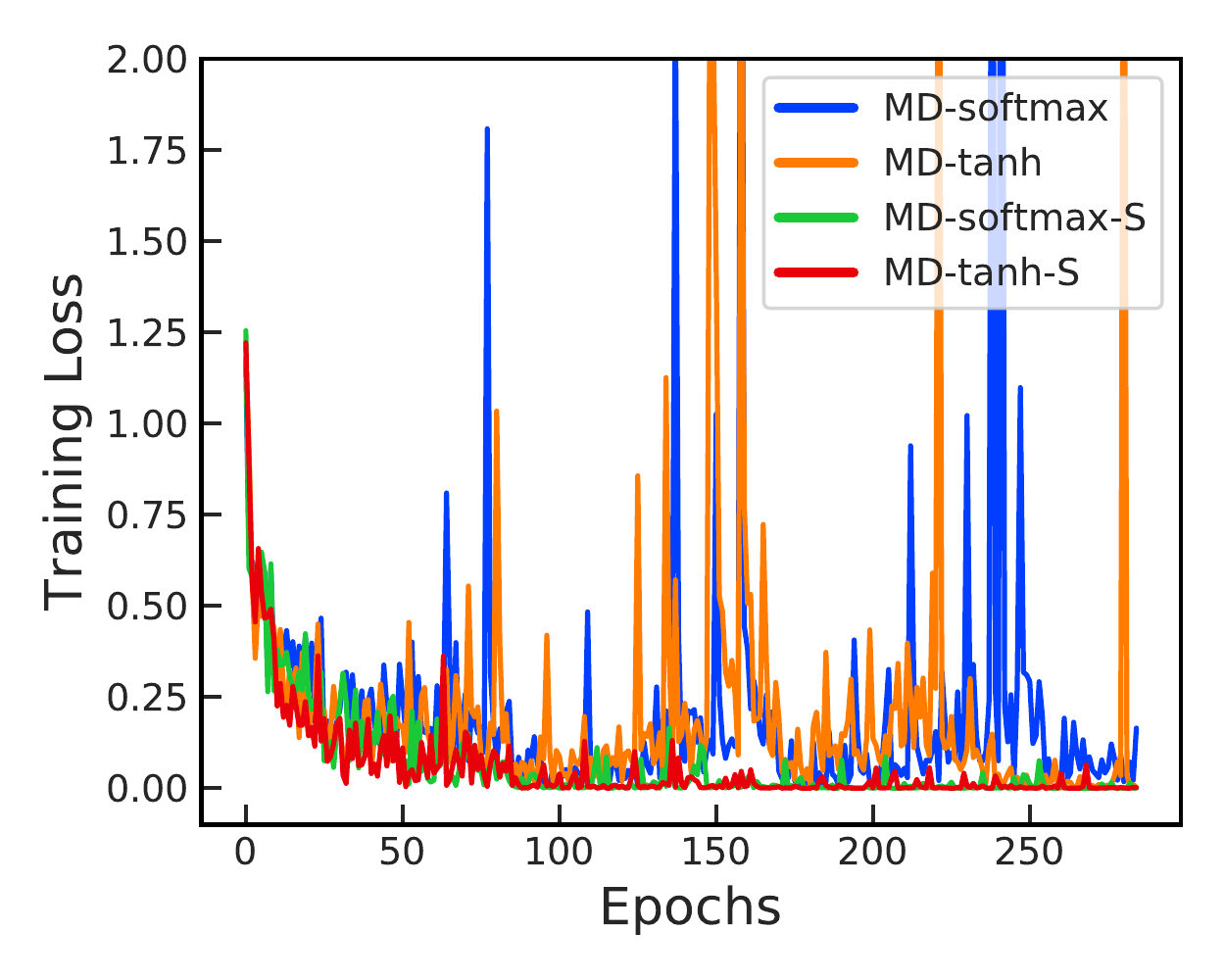}
    \end{subfigure}%
    \begin{subfigure}{0.25\linewidth}
    \includegraphics[width=0.99\linewidth]{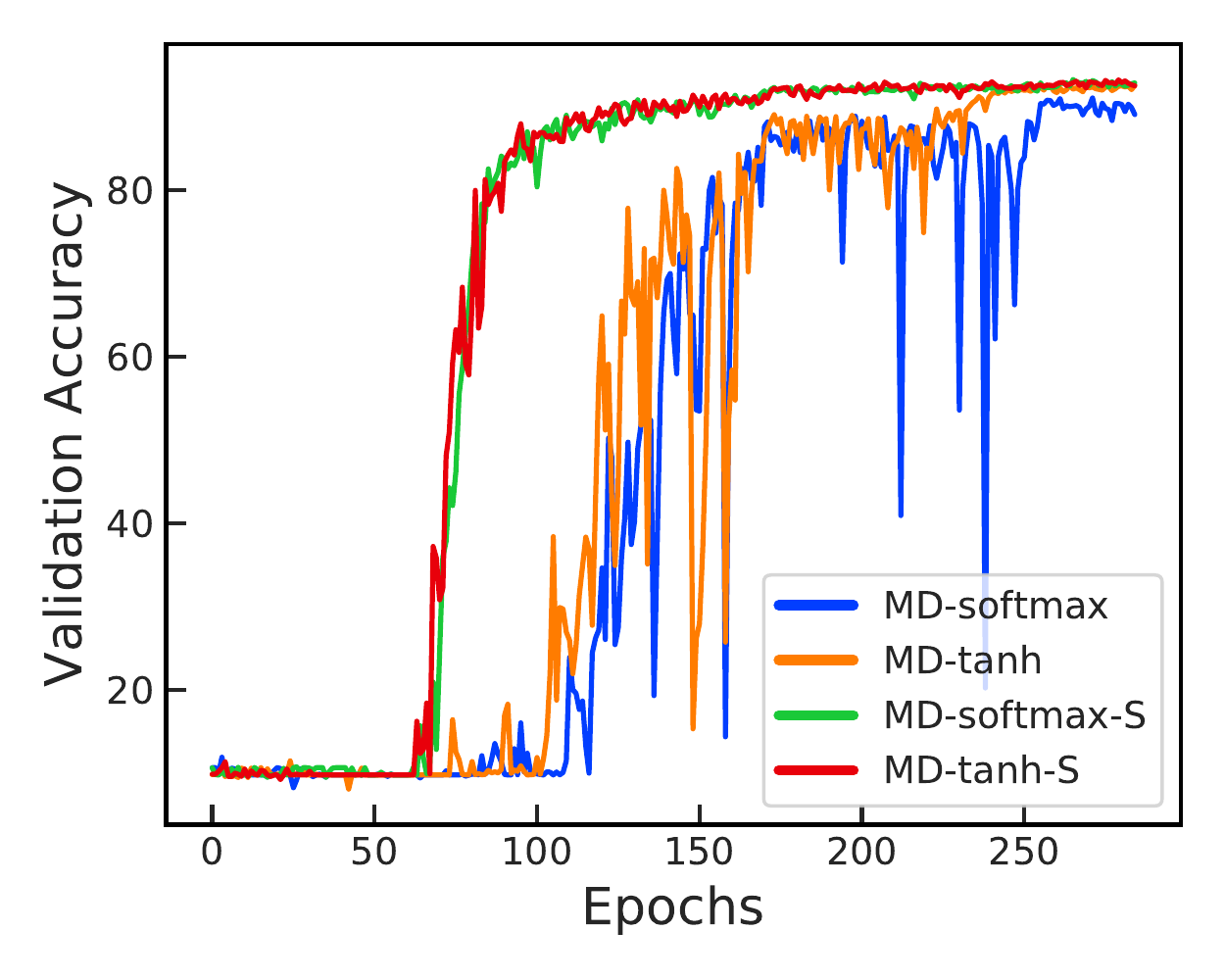}
    \end{subfigure}%
    \begin{subfigure}{0.25\linewidth}
    \includegraphics[width=0.99\linewidth]{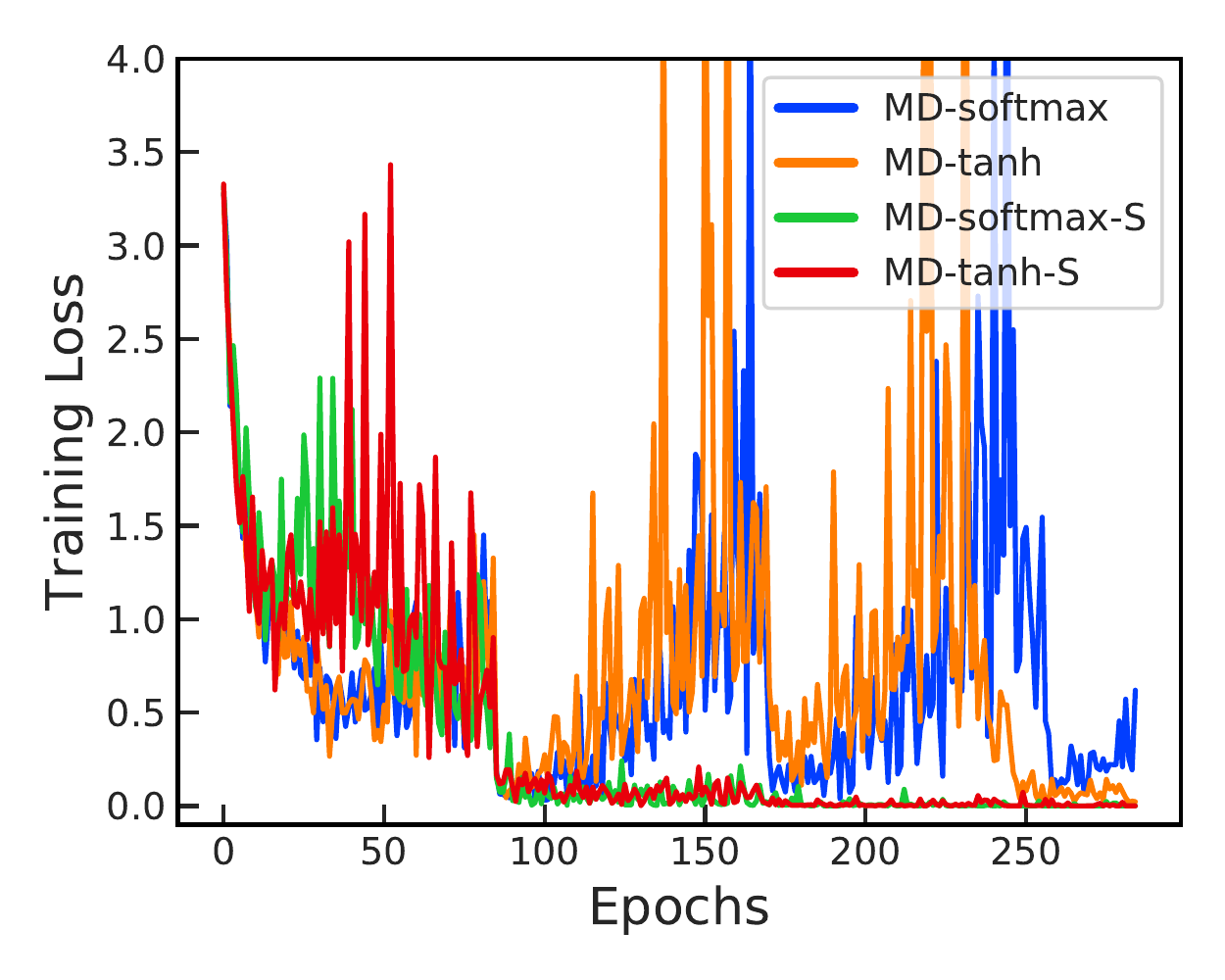}
    \end{subfigure}%
    \begin{subfigure}{0.25\linewidth}
    \includegraphics[width=0.99\linewidth]{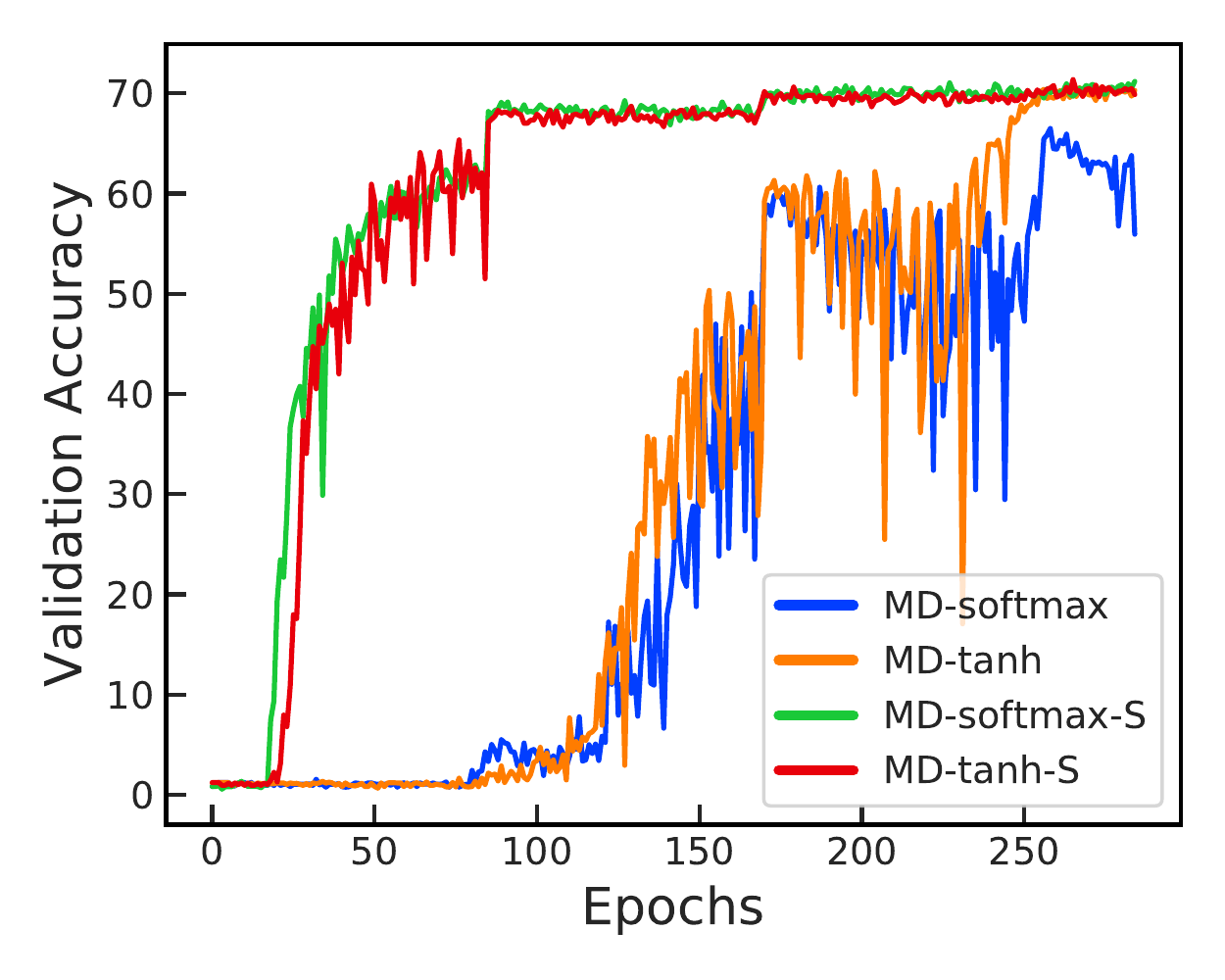}
    \end{subfigure}
    
    \caption{\em Training curves for binarization for \cifar{-10} (first two) and \cifar{-100} (last two) with \sresnet{-18}.
    Compared to original \gls{MD} variants, stable \gls{MD} variants are less noisy and after the initial exploration phase (up to $60$ in \cifar{-10} and $25$ epochs \cifar{-100}), the validation accuracies rise sharply and show gradual improvement afterwards. 
    }
    \label{fig:trcurves}
\end{figure*}
%
\subsection{Full Binarization of Parameters}

\begin{table}
    \centering
    \small
    \begin{tabular}{ll|c|c}
        \toprule
        &Algorithm & \cifar{-10} & \cifar{-100}\\
        \midrule
        &\acrshort{REF} (float) & $93.67$ & $73.97$ \\
        &\acrshort{BC} & $86.84$ & $65.04$ \\
        \midrule
        \parbox[t]{2mm}{\multirow{2}{*}{\rotatebox[origin=c]{90}{Ours}}}
        &\acrshort{MD}-$\softmax$-\acrshort{S} & $89.71$ & $66.40$ \\ 
        &\acrshort{MD}-$\tanh$-\acrshort{S} & $\textbf{89.99}$ & $\textbf{66.63}$ \\
        \bottomrule
    \end{tabular}
    \caption{\em Classification accuracies on the test set for \smobilenet{V2} where all the parameters are binarized.
    Our stable \acrshort{MD} variants significantly outperformed \acrshort{BC} while \acrshort{MD}-$\tanh$-\acrshort{S} is slightly better than \acrshort{MD}-$\softmax$-\acrshort{S}.
    }
    \vspace{-1ex}
    \label{tab:res_mobilenet_bracewell}
\end{table}

We evaluate both of our \gls{MD} variants corresponding to $\tanh$ and $\softmax$ projections and their numerically stable counterparts as noted in~\eqref{eq:mdhybridgd}. The results are compared against parameter quantization methods, namely \acrfull{BC}~\citenew{courbariaux2015binaryconnect}, \acrfull{PQ}~\citenew{bai2018proxquant} and \gls{PMF}~\citenew{ajanthan2018pmf}. 
In addition, for completeness, we also compare against a standard \gls{PGD} variant corresponding to the $\tanh$ projection (denoted as \acrshort{GD}-$\tanh$), \ie, minimizing $f(\tanh(\tbfx))$ using gradient descent.
Note that, numerous techniques have emerged with \gls{BC} as the workhorse algorithm by relaxing constraints such as the layer-wise scalars~\citenew{rastegari2016xnor}, and similar extensions are straightforward even in our case though our variants perform well even without using layer-wise scalars. 

The classification accuracies of binary networks obtained by both variants of our algorithm, namely, \gls{MD}-$\tanh$ and \gls{MD}-$\softmax$, their numerically stable versions (denoted with suffix ``-\acrshort{S}'') and the baselines \gls{BC}, \gls{PQ}, \gls{PMF}, \acrshort{GD}-$\tanh$ and the floating point \gls{REF} are reported in~\tabref{tab:res_binary}. 
Both the numerically stable \gls{MD} variants consistently produce better or on par results compared to other binarization methods while narrowing the performance gap between binary networks and floating point counterparts to a large extent, on multiple datasets.

Our stable \acrshort{MD}-variant perform slightly better than \acrshort{MD}-$\softmax$, whereas for $\tanh$, \acrshort{MD} updates either perform on par or sometimes even better than numerically stable version of \acrshort{MD}-$\tanh$. We believe, the main reason for this empirical variation in results for our \acrshort{MD} variants is due to numerical instability caused by the floating-point arithmetic of logarithm and exponential functions in update steps for \acrshort{MD} (refer to \tabref{tab:eg_projns}).
Furthermore, even though our two \acrshort{MD}-variants, namely \acrshort{MD}-$\softmax$ and \acrshort{MD}-$\tanh$ optimize in different spaces, their performance is similar in most cases. \SKIP{\NOTE{correct this!} This may be explained by the fact that both algorithms belong to the same family where a ``soft'' projection to the constraint set is used and an annealing hyperparameter is used to gradually enforce a quantized solution.}

Note, \acrshort{PQ}~\citenew{bai2018proxquant}  does not quantize the fully-connected layers, biases, and shortcut layers. For fair comparison as previously mentioned, we crossvalidate \acrshort{PQ} with all layers binarized and original \acrshort{PQ} settings, and report the results denoted as \acrshort{PQ} and \acrshort{PQ}* respectively in~\tabref{tab:res_binary}. Our \acrshort{MD}-variants outperform \acrshort{PQ}  consistently on multiple datasets in equivalent experimental settings. 
This clearly shows that entropic or $\tanh$-based regularization with our annealing scheme is superior to a simple ``W'' shaped regularizer and emphasizes that \gls{MD} is a suitable framework for quantization.
Furthermore, the superior performance of \acrshort{MD}-$\tanh$ against \acrshort{GD}-$\tanh$ and on par or better performance of \acrshort{MD}-$\softmax$ against \acrshort{PMF} for binary quantization empirically validates that \gls{MD} is useful even in a nonconvex stochastic setting. This hypothesis along with our numerically stable form of \acrshort{MD} can be particularly useful to explore other projections that are useful for quantization and/or network compression in general.

The training curves for our \gls{MD} variants for \cifar{-10} and \cifar{-100} datasets with \sresnet{-18} are shown in~\figref{fig:trcurves}. The original \gls{MD} variants show unstable behaviour during training. 
Regardless, by storing auxiliary variables, the \gls{MD} updates are demonstrated to be quite stable. 
This clear distinction between \gls{MD} variants emphasizes the significance of practical considerations while implementing \gls{MD} especially in \gls{NN} optimization. 

To further demonstrate the superiority of \acrshort{MD}, we tested on a more resource efficient \smobilenet{V2}~\citenew{sandler2018mobilenetv2} and the results are summarized in~\tabref{tab:res_mobilenet_bracewell}.
In short, our \acrshort{MD} variants are able to fully-binarize \smobilenet{V2} with minimal loss in accuracy on \cifar{} datasets.
For more experiments such as training curves comparison to other methods and ternary quantization results please refer to the Appendix~\myref{B}.
\begin{table}
    \centering
    \small
    \begin{tabular}{ll|cc}
        \toprule
        & Algorithm & Top-1 & Top-5 \\
        \midrule
        & \acrshort{REF} (float) & $70.61$ & $89.40$ \\
        \midrule
        &\acrshort{BWN}*~\citenew{rastegari2016xnor} & $60.80$ & $83.00$ \\
        &\acrshort{BR}~\citenew{yin2018binaryrelax} & $63.20$ & $85.10$ \\
        &\acrshort{ELQ}~\citenew{zhou2018explicit} & $64.72$ & $86.04$ \\
        &\acrshort{ADMM}~\citenew{leng2018extremely} & $64.80$ & $86.20$ \\
        &\acrshort{QN}~\citenew{yang2019quantization} & $66.50$ & $\textbf{87.03}$ \\
        \midrule
        \parbox[t]{2mm}{\multirow{3}{*}{\rotatebox[origin=c]{90}{Ours}}} &  \mdtanhs{}*\textsuperscript{+} & $56.67$ & $79.66$ \\
        & \mdtanhs{}* & $65.92$ & $86.29$ \\
        & \mdtanhs{} & $\textbf{66.78}$ & $87.01$ \\
        \bottomrule
    \end{tabular}
    \caption{\em \imagenet{} classification accuracies for binary quantization (only parameters) on \sresnet{-18}. 
   	Here, * indicates training from scratch and \textsuperscript{+} indicates full-binarization except the batchnorm parameters.
   	Note \mdtanhs{} outperforms all other methods setting new state-of-the-art on \imagenet{} binarization.
   	It might seem that the improvement over \acrshort{QN} is marginal, however, \acrshort{QN} requires layerwise scalars and pretraining~\citenew{yang2019quantization}, whereas \mdtanhs{} does not require layerwise scaling and obtains near state-of-the-art results even without pretraining.
    }        
    \label{tab:res_imagenet}
    \vspace{-5ex}
\end{table}

\begin{table}
    \centering
    \small
    \begin{tabular}{ll|c@{\hspace{4pt}}c}
        \toprule
        \multicolumn{2}{l|}{Algorithm} & Top-1 & Top-5 \\
        \midrule
        \multicolumn{2}{l|}{\acrshort{REF} (float)} & $70.6$ & $89.4$ \\ 
        \midrule
        \multicolumn{2}{l|}{BinaryNet~\citenew{hubara2016binarized}} & $42.2$ & $67.1$ \\
        \multicolumn{2}{l|}{Dorefa-Net~\citenew{zhou2016dorefa}} & $52.5$ & $76.7$ \\
        \multicolumn{2}{l|}{XNOR-Net~\citenew{rastegari2016xnor}} & $51.2$ & $73.2$ \\
        \multicolumn{2}{l|}{Bireal-Net~\citenew{liu2018bireal}} & $56.4$ & $79.5$ \\
        \multicolumn{2}{l|}{Bireal-Net~\citenew{liu2018bireal} (PReLU)} & $59.0$ & $81.3$ \\
        \multicolumn{2}{l|}{PCNN (J=1)~\citenew{gu2019projection}} & $57.3$ & $80.0$ \\
        \multicolumn{2}{l|}{QN~\citenew{yang2019quantization}} & $53.6$ & $75.3$ \\
        \multicolumn{2}{l|}{BOP~\citenew{helwegen2019latent}} & $54.2$ & $77.2$ \\
        \multicolumn{2}{l|}{GBCN~\citenew{liu2019gbcns}} & $57.8$ & $80.9$ \\
        \multicolumn{2}{l|}{IR-Net~\citenew{qin2020forward}} & $58.1$ & $80.0$ \\
        \multicolumn{2}{l|}{Noisy Supervision~\citenew{hantraining}} & $59.4$ & $81.7$ \\
        \midrule
        \parbox[t]{2mm}{\multirow{2}{*}{\rotatebox[origin=c]{90}{Ours}}} & \acrshort{MD}-$\tanh$-\acrshort{S} & $60.3$ & $82.3$ \\
        & \acrshort{MD}-$\tanh$-\acrshort{S} (\acrshort{KL} div. loss) & $\textbf{62.8}$ & $\textbf{84.3}$ \\

        \bottomrule
    \end{tabular}
    \caption{\em ImageNet classification accuracies for binary quantization (both parameters and activations) on \sresnet{-18}. Here, our \mdtanhs{} and \cite{hantraining} use Bireal-Net-18 \citenew{liu2018bireal} with PReLU activations as baseline architecture. We also show results of our method with \acrshort{KL} divergence loss between softmax output of our binary network and \acrshort{REF} (trained \sresnet{-34} on \imagenet{}).
    Note, \mdtanhs{} clearly outperforms all the methods.
    }
    \label{tab:res_imagenet_binarywtact}
\end{table}

\subsection{Binarization on \imagenet{}}

We evaluated our \mdtanhs{} against state-of-the-art methods on \imagenet{} with \sresnet{-18} for parameter binarization and the results are reported in \tabref{tab:res_imagenet}.
Following the standard practice, we do not quantize the first convolution layer, last fully-connected layer, biases, and batchnorm parameters for all the compared methods and in this case, we set a {\em new state-of-the-art for binarization} with achieving merely $\bf{<4\%}$ reduction compared to the floating-point network.
Note that, the standard practice to quantize on \imagenet{} is to use floating-point scalars in each layer~\citenew{rastegari2016xnor,yang2019quantization}, however, our method outperforms all the methods without requiring layerwise scalars.
In addition, \mdtanhs{} yields the best results even when trained from scratch with ${\bf <1\%}$ reduction compared to finetuning from a pretrained network.  
Furthermore, \gls{MD} enables the training of fully-binarized networks with no additional scalars (except the batchnorm parameters) from scratch, which is considered to be difficult for \imagenet{}~\citenew{rastegari2016xnor}.
More ablation study experiments can be found in Appendix~\myref{B}.

Similar to the above, the results for both parameters and activations binarized networks are reported in \tabref{tab:res_imagenet_binarywtact}. For this experiment, we use Bireal-Net-18~\citenew{liu2018bireal} with PReLU activations as network architecture for our proposed algorithm. 
Even in this experiment, the first convolution layer, last fully-connected layer, biases, and batchnorm parameters are not binarized for all the methods. Our \mdtanhs{} achieves state-of-the-art results even when both parameters and activations are binarized, beating all the comparable baselines by almost ${\bf 1\%}$. 
In addition, similar to~\cite{zhuang2018towards,martinez2019training,liu2020reactnet}, we ran \mdtanhs{} with \acrshort{KL} Divergence loss (replacing the  cross-entropy loss) between the softmax output of our binary network and \acrshort{REF} (trained \sresnet{-34} on \imagenet{}).
Our \mdtanhs{} with \acrshort{KL} divergence loss outperforms previous methods by a significant margin of ${\bf >3\%}$, which clearly reflects the efficacy of \acrshort{MD}.

\section{Discussion}
In this work, we have introduced an \gls{MD} framework for \gls{NN} quantization by deriving mirror maps corresponding to two projections useful for quantization and provided two algorithms for quantization.
Theoretically, we provided a convergence analysis in the convex setting for time-varying mirror maps and discussed conditions to ensure convergence to a discrete solution when an annealing hyperparameter is employed.
In addition, we have discussed a numerically stable implementation of \gls{MD} by storing an additional set of auxiliary variables and showed that this update is strikingly analogous to the popular \gls{STE} based gradient method.
The superior performance of our \gls{MD} formulation even with simple projections such as $\tanh$ and $\softmax$ 
is encouraging and we believe, \gls{MD} would be a suitable framework for not just \gls{NN} quantization but for network compression in general.
In the future, we intend to focus more on the theoretical aspects of \gls{MD} in conjunction with stochastic momentum based optimizers such as Adam.

\iffinal

\section{Acknowledgements}
This work was supported by the ERC grant ERC-2012-AdG 321162-HELIOS, EPSRC grant Seebibyte EP/M013774/1, EPSRC/MURI grant EP/N019474/1, and the Australian Research Council Centre of Excellence for Robotic Vision (project number CE140100016). We would like to thank Pawan Kumar for useful discussions on mirror descent and we acknowledge the Royal Academy of Engineering, FiveAI, Data61, CSIRO, and National Computing Infrastructure, Australia.

\fi
\fi

\ifarxiv
\clearpage
\appendix
\onecolumn
\section*{Appendices}

\fi
\bibliography{mda_aistats}
\bibliographystyle{apalike}

\end{document}